\newcommand\mathbox[1]{\mathord{\ThisStyle{%
  \fboxsep3\LMpt\relax\kern1\LMpt\fbox{$\SavedStyle#1$}\kern1\LMpt}}}
\newcommand{\labitem}[2]{%
\def\@itemlabel{\textbf{#1}}
\item
\def\@currentlabel{#1}\label{#2}}
\newcommand{\labitemc}[2]{%
\def\@itemlabel{\textbf{#1}}
\item
\def\@currentlabel{#1}\label{#2}}
\setlist{nolistsep} 
\def\spacingset#1{\renewcommand{\baselinestretch}%
{#1}\small\normalsize} \spacingset{1}
\def \R{\mathbb{R}}
\def \E{\mathbb{E}}
\def \N{\mathbb{N}}
\newcommand{\pto}{\stackrel{P}{\rightarrow}}
\newcommand{\dap}{\stackrel{d}{\approx}}
\newcommand{\Zc}{\mathcal{Z}}
\newcommand{\Lc}{\mathcal{L}}
\newcommand{\Sc}{\mathcal{S}}
\newcommand{\Pc}{\mathcal{P}}
\newcommand{\bx}{\mathbf{x}}
\newcommand{\bv}{\mathbf{v}}
\newcommand{\bu}{\mathbf{u}}
\newcommand{\bw}{\mathbf{w}}
\newcommand{\BB}{\mathbf{B}}
\newcommand{\VV}{\mathbf{V}}
\newcommand{\zd}{\boldsymbol\delta}
\newcommand{\bTheta}{\bm{\Theta}}
\newcommand{\bU}{\bm{U}}
\newcommand{\bg}{\bm{g}}
\newcommand{\ub}{\bm{u}}
\newcommand{\itg}{\lfloor t/\gamma \rfloor} 
\DeclareMathOperator{\sgn}{sign}
\DeclareMathOperator{\diag}{diag}
\renewcommand{\tilde}{\widetilde}
\theoremstyle{definition}
\newtheorem{theo}{Theorem}[section]
\newtheorem{prop}[theo]{Proposition}
\newtheorem{defin}[theo]{Definition}
\newtheorem{rem}[theo]{Remark}
\author{%
  Shih--Kang Chao\thanks{Corresponding author.} \\
  Department of Statistics\\
  University of Missouri\\
  Columbia, MO 65211 \\
  \texttt{chaosh@missouri.edu} \\
  \And
  Zhanyu Wang \\
  Department of Statistics\\
  Purdue University \\
  West Lafayette, IN 47907 \\
  \texttt{wang4094@purdue.edu}\\
  \AND
  Yue Xing\\
  Department of Statistics\\
  Purdue University \\
  West Lafayette, IN 47907 \\
  \texttt{xing49@purdue.edu}\\
  \And
  Guang Cheng\\
  Department of Statistics\\
  Purdue University \\
  West Lafayette, IN 47907 \\
  \texttt{chengg@purdue.edu}\\
}
\renewenvironment{proof}[1][\proofname]{{\noindent\bfseries Proof of #1.}}{\qed}
\begin{document}

\title{Directional Pruning of Deep Neural Networks}
\maketitle
\begin{abstract}
In the light of the fact that the stochastic gradient descent (SGD) often finds a flat minimum valley in the training loss, we propose a novel directional pruning method which searches for a sparse minimizer in or close to that flat region. The proposed pruning method does not require retraining or the expert knowledge on the sparsity level. To overcome the computational formidability of estimating the flat directions, we propose to use a carefully tuned $\ell_1$ proximal gradient algorithm which can provably achieve the directional pruning with a small learning rate after sufficient training. The empirical results demonstrate the promising results of our solution in highly sparse regime (92\% sparsity) among many existing pruning methods on the ResNet50 with the ImageNet, while using only a slightly higher wall time and memory footprint than the SGD. Using the VGG16 and the wide ResNet 28x10 on the CIFAR-10 and CIFAR-100, we demonstrate that our solution reaches the same minima valley as the SGD, and the minima found by our solution and the SGD do not deviate in directions that impact the training loss. The code that reproduces the results of this paper is available at  \url{https://github.com/donlan2710/gRDA-Optimizer/tree/master/directional_pruning}. 
\end{abstract}

\section{Introduction}\label{sec:intro}
Deep neural networks (DNNs), after properly trained, provide the state-of-the-art performance in various domains. Overparameterization is a common practice in modern deep learning, which facilitates better expressive power and faster convergence. On the other hand, overparameterization makes DNN exceedingly large, especially for large-scale tasks. For example, the ImageNet \citep{imagenet09,imagenet15} may need billions of parameters \cite{BHMM19} to become sufficiently overparameterized. {As the number of parameters in DNN is growing fast, the cost to deploy and process large DNNs can be prohibitive on devices with low memory/processing resources or with strict latency requirements}, such as mobile phones, augmented reality devices and autonomous cars. Many achievements have been made in shrinking the DNN while maintaining accuracy, and the MIT Technological Review lists the ``tiny AI'' as one of the breakthroughs in 2020 \cite{mitreview20}. 

Among many methods for shrinking DNN, sparse DNN has attracted much attention. {Here, sparsity refers to the situation that most model parameters are zero in a DNN}. Sparse DNN not only requires less memory and storage capacity, but also reduces inference time \citep{CWZZ18}. One of the popular ways to get sparse DNNs is magnitude pruning \citep{HPTD15,han2015compression,Molchanov17,ZG17,LSTHD19,FC19,FDRC19,GEH19}. Magnitude pruning first learns the model parameters with an optimizer, e.g. stochastic gradient descent (SGD), and then prunes based on the learned magnitude of parameters with an a priori threshold. However, determining a threshold requires some expert knowledge and trial-and-error, as a principle for setting the threshold is not available. In addition, na\"ively masking parameters usually worsens the training loss and testing accuracy. Hence, retraining is needed for the pruned network to regain a similar performance as the dense network \citep{HPTD15}. Unfortunately, retraining as an additional step requires some care \cite{FC19} and additional computation.

\subsection{Directional pruning}

In this paper, we try to answer when a coefficient can be pruned without paying the price of increasing the training loss, and how we can prune based on this. These answers rely on the local geometry of the DNN loss function $\ell(\bw)$, where $\bw$ denotes the parameters.
 
Suppose that $\bw^{SGD}\in\R^d$, the parameter trained by the SGD, has reached a valley of minima. Hence, $\nabla \ell(\bw^{SGD})\approx 0$. The Hessian $\nabla^2 \ell(\bw^{SGD})$ has multiple nearly zero eigenvalues \citep{Sagun16,Sagun18,GKX19,P19}, and the directions associated with these eigenvalues are the flat directions on the loss landscape. Perturbation in these directions causes little change in the training loss by the second order Taylor expansion of $\ell(\bw)$ around $\bw^{SGD}$. We denote the subspace generated by these directions as $\Pc_0$. 

Following \cite{LDS90,BS93}, pruning $\bw^{SGD}$ can be viewed as a perturbation of $\bw^{SGD}$:
\begin{align}
    \bw^{SGD} - A \cdot \sgn(\bw^{SGD}).\label{eq:prune}
\end{align}
Here, $\sgn(\bw^{SGD})\in\{-1,1\}^{d}$ is the sign vector of $\bw^{SGD}$ and $A$ is a diagonal matrix with $0 \leq A_{jj} \leq |w_j^{SGD}|$ for $j=1,\ldots,d$. The $j$th coefficient is pruned if $A_{jj} =|w_j^{SGD}|$. For example, in a 2D illustration in the left panel of Figure \ref{fig:dp}, \eqref{eq:prune} is a vector starting from the origin to a point in the orange rectangle.

Retraining is needed if $A \cdot \sgn(\bw^{SGD})\not\in\Pc_0$. Some empirical studies even suggest $\Pc_0$ is nearly orthogonal to the $\bw^{SGD}$ \cite{GRD18,GKX19}, so generally $A \cdot \sgn(\bw^{SGD})\not\in\Pc_0$. Therefore, we instead consider $\bw^{SGD}-\lambda\cdot\bTheta$ where the perturbation direction $\bTheta\in\Pc_0$ and $\lambda>0$. We maximize the number of $j$ such that $\sgn(\Theta_j)=\sgn(w_j^{SGD})$ for $j=1,\ldots,d$, in order to decay as many coefficients in $\bw^{SGD}$ as possible. Specifically, we select $\bTheta$ as
\begin{align*}
	\bTheta = \arg\min_{\bu\in\Pc_0}\big\|\bu-\sgn(\bw^{SGD})\big\|_2^2,
\end{align*}
i.e. $\bTheta = \Pi_0\{\sgn(\bw^{SGD})\}$, where $\Pi_0$ denotes the projection on the subspace $\Pc_0$. The vector $\bTheta$ does not always decrease the magnitude of $\bw^{SGD}$, and it does whenever $\sgn(w_j^{SGD})\cdot \Theta_j>0$, or 
\begin{align}
	s_j:=\sgn(w_j^{SGD})\cdot \big(\Pi_0\{\sgn(\bw^{SGD})\}\big)_j>0.\label{eq:s}
\end{align}

Decreasing the magnitude of the coefficients with $s_j>0$ in $\bw^{SGD}$ would cause little changes in the training loss, as long as we simultaneously increase the magnitude of coefficients $j'\neq j$ with $s_{j'}<0$ proportional to $|s_{j'}|$.  As illustrated in the left panel of Figure \ref{fig:dp}, the adverse effect due to decreasing the magnitude of $w_2$ ($s_2>0$) can be compensated by increasing the magnitude of $w_1$, so that the net change is the red vector in $\Pc_0$. Note that this argument has a similar spirit as the ``optimal brain surgeon''\cite{BS93}, and it is the key to remove the need of retraining. The $s_j$ can thus be understood as a score to indicate whether pruning the $j$th coefficient causes a (ir)redeemable training loss change. We propose the novel ``directional pruning'' using the score $s_j$ in \eqref{eq:s}.
	
	\begin{figure}[!h]
			\centering
		\includegraphics[width=0.47\textwidth,valign=t]{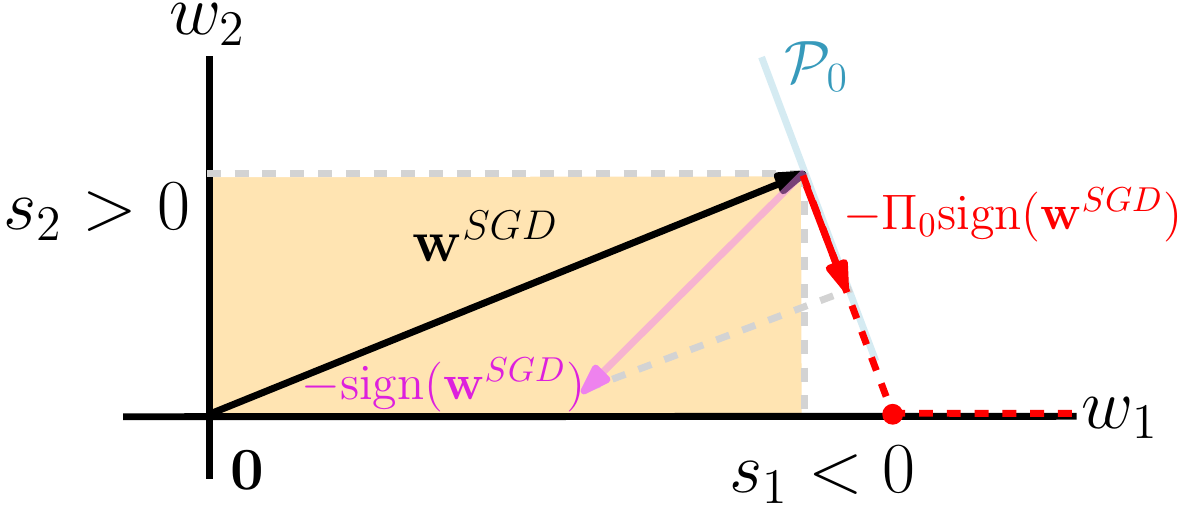}
		\includegraphics[width=0.38\textwidth,valign=t]{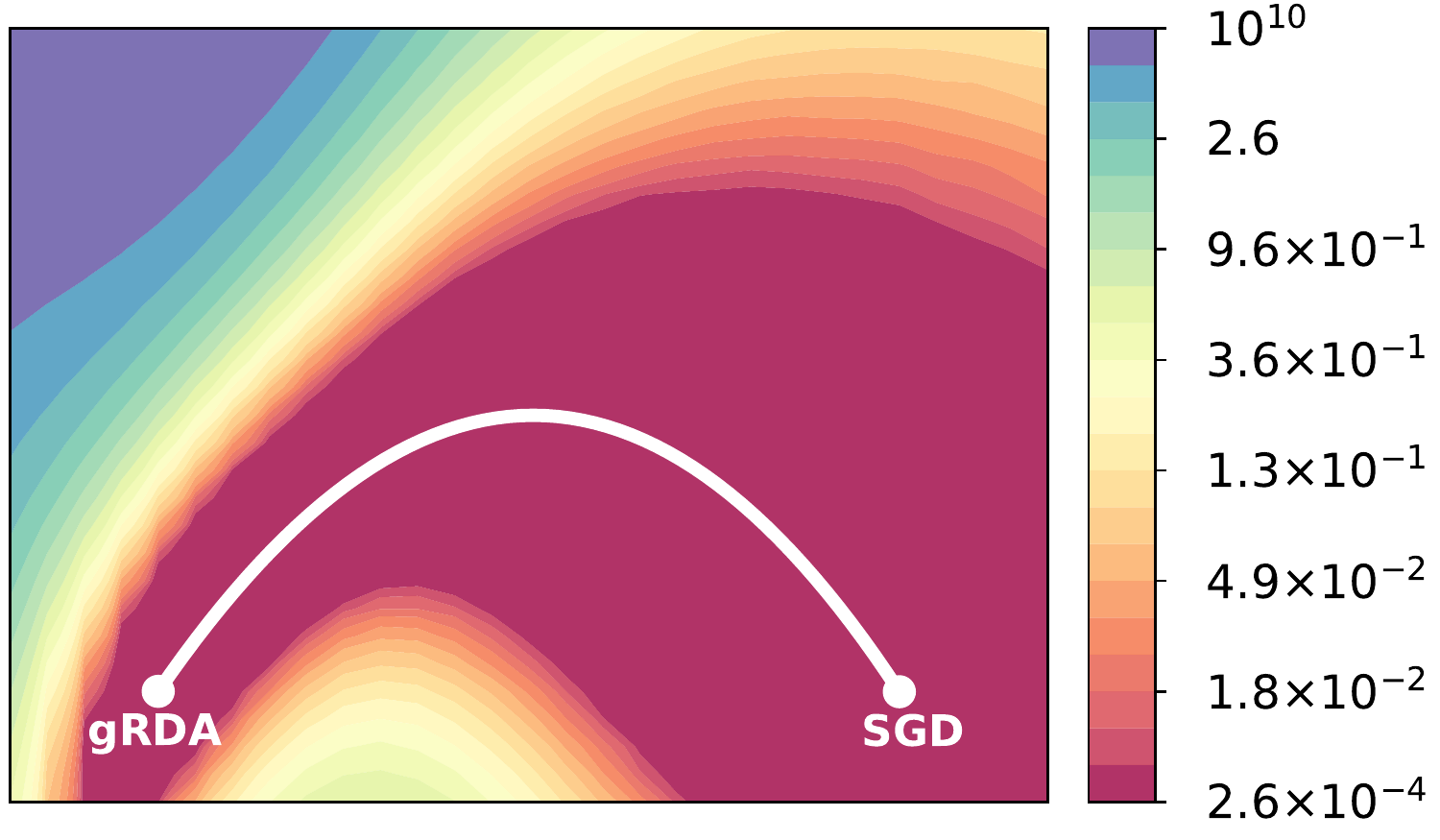}
	 \caption{ {\bf Left:} a 2D graphical illustration of the directional pruning. The orange region contains all possible locations of the vector $\bw^{SGD} - A \cdot \sgn(\bw^{SGD})$. The directional pruning with different $\lambda$ takes solutions on the red dashed line. {\bf Right:} training loss contour of the wide ResNet28$\times 10$ (WRN28x10 \cite{zagoruyko2016wide}) on the CIFAR-100 around the minimal loss path (the white curve) between minimizers found by the SGD and \eqref{eq:grda} \cite{CC19} (the algorithm we propose to use) using \cite{Garipov18}. While no coefficient of the SGD minimizer is zero, our solution has only 9.7\% active parameters. Testing accuracy is 76.6\% for the SGD and 76.81\% for our solution.}\label{fig:dp}
	\end{figure}

\begin{defin}[Directional pruning based on SGD]\label{def:dp} Suppose $\ell(\bw)$ is the training loss, and $\nabla\ell(\bw^{SGD})=0$ where $\bw^{SGD}$ is the minimizer found by SGD. Suppose none of the coefficients in $\bw^{SGD}$ is zero. With $\lambda>0$ and $s_j$ defined in \eqref{eq:s}, the directional pruning solves
	\begin{align}
		\arg\min_{\bw\in\R^d} \frac{1}{2}\|\bw^{SGD}-\bw\|_2^2 + \lambda\sum_{j=1}^d s_j |w_j|. \label{eq:grdamin}
	\end{align}
\end{defin}

In \eqref{eq:grdamin}, the coefficients with $s_j>0$ are pruned with sufficiently large $\lambda$ by the absolute value penalization, but the magnitude of $w_{j'}$ with $s_{j'}\leq 0$ is un-penalized, and are even encouraged to increase. For a 2D illustration, the solution path for different $\lambda>0$ is the dashed red curve in the left panel of Figure \ref{fig:dp}. If $\lambda$ is too large, the coefficients $j$ with $s_j<0$ may overshoot, illustrated as the flat part on the dashed red line extended to the right of the red point. 

\begin{rem}[Solution of \eqref{eq:grdamin}] \label{rem:def_dp}
The objective function in \eqref{eq:grdamin} is separable for each coefficient. The part with $s_j>0$ is solved by the $\ell_1$ proximal operator. The part with $s_j<0$ is non-convex, but it still has the unique global minimizer if $w_j^{SGD}\neq 0$. The solution of \eqref{eq:grdamin} is $$\widehat{w}_j = \sgn(w^{SGD}_j) \big[|w^{SGD}_j| - \lambda s_j\big]_+,$$ where $[a]_+ = \max\{0,a\}$. See Proposition \ref{prop:uniq_sol_def} in the appendix for a proof. 
\end{rem}

Implementing the directional pruning is very challenging due to high dimensionality. Specifically, the matrix $\nabla^2\ell$ of modern deep neural network is often very large so that estimating $\Pc_0$ is computationally formidable. Perhaps surprisingly, we will show that there is a very simple algorithm \eqref{eq:grda} presented in Section \ref{sec:alg}, that can asymptotically solve \eqref{eq:grdamin} without explicitly estimating the Hessian. The right panel of Figure \ref{fig:dp} shows that if $\lambda$ is selected appropriately, our method achieves a similar training loss as the dense network with $\bw^{SGD}$, while being highly sparse with a test accuracy comparable to the SGD. More detailed empirical analysis is in Section \ref{sec:conn}. 

\begin{rem}[Major differences to the ``optimal brain surgeon''] It is worth noting that \eqref{eq:grdamin} is different from the optimization problem in \cite{BS93,HSW93}. While an analytic map between directional pruning and optimal brain surgeon is interesting for future study, the two are generally nonequivalent. Particularly, directional pruning perturbs from $\bw^{SGD}$ continuously in $\lambda$ like a restricted $\ell_1$ weight decay on $\Pc_0$ (Remark \ref{rem:def_dp}), while optimal brain surgeon yields a discontinuous perturbation like a hard thresholding (see p.165 of \cite{BS93}). The main advantage of directional pruning is that it can be computed with the gRDA algorithm presented in Section \ref{sec:alg}, which does not require to estimate the Hessian or its inverse.
\end{rem}

\subsection{Contributions}

Our major contribution is to propose the novel directional pruning method (Definition \ref{def:dp}), and further prove that the algorithm \eqref{eq:grda} \cite{CC19} achieves the effect of the directional pruning asymptotically. The \eqref{eq:grda} has been applied for sparse statistical inference problems with a convex loss and principal component analysis \cite{CC19}. The connection between the directional pruning and \eqref{eq:grda} is theoretically proved by leveraging the continuous time approximation developed in \cite{CC19} under proper assumptions on the gradient flow and the Hessian matrix. It is worth noting that this algorithm does not require to explicitly estimate $\Pc_0$, and it can be implemented like an optimizer in a typical deep learning framework, e.g. Tensorflow or PyTorch.

Empirically, we demonstrate that \eqref{eq:grda} successfully prunes ResNet50 on ImageNet, and achieves 73\% testing accuracy with only 8\% active parameters. 
Upon benchmarking with other popular algorithms, \eqref{eq:grda} yields a high accuracy and sparsity tradeoff among many contemporary methods. We also successfully prune deep networks on CIFAR-10/100, and the results are in the appendix. Using VGG16 on CIFAR-10 and WRN28x10 on CIFAR-100, we show that \eqref{eq:grda} reaches the same valley of minima as the SGD, empirically verifying the directional pruning. Using VGG16 and WRN28x10 on CIFAR-10, we show the proportion of the difference between \eqref{eq:grda} and the SGD in the leading eigenspace of the Hessian is low, as another evidence for \eqref{eq:grda} performing the directional pruning. 

\section{The gRDA algorithm}\label{sec:alg}

Consider training data $Z_i = \{(X_i,Y_i)\}_{i=1}^N$, where $X_i$ is the input variable, e.g. images, and $Y_i$ is the response variable, e.g. a vector of real numbers, or labels $Y_n\in\{0,1\}^{n_l}$, where $n_l\in\N$. Suppose $h(x;\bw)\in\R^{n_l}$ is the output of an $L$-layer feedforward overparameterized DNN, with parameters $\bw\in\R^d$. Let $\Lc(h;y):\R^{n_l\times n_l}\to\R_+$ be a loss function, e.g. the $\ell_2$ loss $\Lc(h;y)=\|h-y\|_2^2$ 
or the cross-entropy loss.  
Let $f(\bw;Z) := \Lc(h(X;\bw),Y)$, and $\nabla f(\bw;Z)$ be the gradient of $f(\bw;Z)$, the loss function $\ell(\bw)$ and its gradient are defined by
\begin{align}
	\ell(\bw):=\E_{\Zc}[f(\bw;Z)], \quad G(\bw)=\nabla \ell(\bw) = \E_{\Zc}[\nabla f(\bw;Z)], \label{eq:G}
\end{align} 
where $\E_{\Zc}[f(\bw;Z)]=N^{-1}\sum_{i=1}^N f(\bw;Z_i)$. 

We adopt the generalized regularized dual averaging (gRDA) algorithms originally proposed in \cite{CC19}. This algorithm has been successfully applied to the ad click-through rate prediction \cite{autofis}. Specifically, let $\{\hat i_k\}_{k=1}^\infty$ be i.i.d. uniform random variables on $\{1,\ldots,N\}$ independent from the training data, 
\begin{align}\tag{\texttt{gRDA}}
	w_{n+1,j} = \Sc_{g(n,\gamma)}\bigg(w_{0,j}-\gamma\sum_{k=0}^{n}\nabla f_j(\bw_{k};Z_{\hat i_{k+1}})\bigg), \mbox{ for $j=1,\ldots,d$},
	\label{eq:grda}
\end{align}
where $\Sc_{g}:v\mapsto \sgn(v)(|v|-g)_+$ is the soft-thresholding operator, $\bw_0$ is an initializer chosen at random from a distribution; $\gamma$ is the learning rate; $g(n,\gamma)>0$ is the {tuning function}, detailed in \eqref{eq:tune}. We can extend \eqref{eq:grda} to minibatch gradients, by replacing $\nabla f_j(\bw_k;Z_{\hat i_{k+1}})$ with an average $|S_{k+1}|^{-1}\sum_{i\in S_{k+1}} \nabla f(\bw_k;Z_i)$, where $S_{k+1}\subset\{1,\ldots,N\}$ is sampled uniformly. We will focus on \eqref{eq:grda}, i.e. $|S_k|=1$ for all $k$, but our theory can be generalized to any fixed minibatch size.

The tuning function $g(n,\gamma)$ controls the growth rate of penalization. Motivated by \cite{CC19},
\begin{align}
	g(n,\gamma)= c \gamma^{1/2}(n\gamma)^\mu, \label{eq:tune}
\end{align}
where $c,\mu>0$ are the two hyperparameters positively related to the strength of penalization. The $(n\gamma)^\mu$ is used to match the growing magnitude of SGD. The $\gamma^{1/2}$ is an important scaling factor; without it, \eqref{eq:grda} with $\mu=1$ reduces to the regularized dual averaging (RDA) algorithm \cite{X10} that minimizes $\ell(\bw)+\lambda\|\bw\|_1$ rather than the directional pruning problem in \eqref{eq:grdamin}. Note that if $c=0$, then \eqref{eq:grda} recovers the stochastic gradient descent:
\begin{align}\tag{\texttt{SGD}}
	\bw_{n+1}^{SGD} = \bw_n^{SGD} -\gamma \nabla f(\bw_n^{SGD};Z_{\hat i_{n+1}}). \label{eq:sgd}
\end{align}

In this paper, we only consider the constant learning rate. In practice, a ``constant-and-drop'' learning rate is often adopted. See Section \ref{app:algorithm} and \ref{sec:cd} in the appendix for the algorithms in pseudocode. 

\begin{rem}[Selection of $\mu$ and $c$ in practice]
	Our empirical results and theory in later sections suggest $\mu\in\{0.501,0.51,0.55\}$ generally performs well regardless of the task and network used. For a given $\mu$, we recommend to search for the greatest $c$ (starting with e.g. $10^{-4}$) such that gRDA yields a comparable test acc. as SGD using $1-5$ epochs.
\end{rem}

\section{Theoretical analysis}\label{sec:th}
To show \eqref{eq:grda} asymptotically achieves the directional pruning in Definition \ref{def:dp}, we leverage some tools from the continuous time analysis. Define the gradient flow $\bw(t)$ to be the solution of the ordinary differential equation
\begin{align}\tag{GF}
	 \dot\bw = -G(\bw),\;\bw(0)=\bw_0, \label{eq:gf}
\end{align}
where $\bw_0$ is a random initializer, and $G$ is defined in \eqref{eq:G}. The $\bw(t)$ can provably find a good global minimizer under various conditions \citep{ACH18,ACGH19,DZPS19,Lee19,OS19,DLT18}. Throughout this paper, we assume the solution of \eqref{eq:gf} is unique.

Let $H(\cdot):=\E_{\Zc}[\nabla^2 f(\cdot;Z)]$ be the Hessian matrix. Let $\Phi(t,s)\in\R^{d\times d}$ be the solution (termed the principal matrix solution, see Chapter 3.4 of \cite{T12}) of the matrix ODE system ($s$ is the initial time):
		\begin{align}
			\frac{d\Phi(t,s)}{dt} = -H(\bw(t)) \Phi(t,s),\quad \Phi(s,s)=I_d.\label{eq:odex}
		\end{align}
Let $\bw_\gamma(t):=\bw_{\lfloor t/\gamma\rfloor}$ and $\bw^{SGD}(t)$ be the piecewise constant interpolated process of \eqref{eq:grda} and \eqref{eq:sgd}, respectively, with the same learning rate, where $\lfloor a\rfloor$ takes the greatest integer that is less than or equal to $a$. We will make the following assumptions:
\begin{itemize}
	\labitemc{(A1)}{as:M} $G(\bw):\R^d\to\R^d$ is continuous on $\R^d$.	
\end{itemize}

 Define
	\begin{align}
		\Sigma(\bw):=\E_{\Zc}\big[\big(\nabla f(\bw;Z)-G(\bw)\big)\big(\nabla f(\bw;Z)-G(\bw)\big)^\top\big].\label{eq:sig}
	\end{align}	
\begin{itemize}
		\labitemc{(A2)}{as:L} $\Sigma:\R^d\to\R^{d\times d}$ is continuous. $\E_{\Zc}\big[\sup_{\|\bw\|\leq K}\big\|\nabla f\big(\bw,Z\big)\big\|_2^2\big]<\infty$ for any $K>0$ a.s.
			\labitemc{(A3)}{as:H} $H: \R^d\to\R^{d\times d}$ is continuous, and there exists a non-negative definite matrix $\bar H$ such that 
		$\int_0^\infty\|H(\bw(s))- \bar H\| ds <\infty$ where $\|\cdot\|$ is the spectral norm, and the eigenspace of $\bar H$ associated with the zero eigenvalues matches $\Pc_0$. 
		\labitemc{(A4)}{as:pms} $\int_0^t s^{\mu-1}\Phi(t,s)\sgn(\bw(s))ds = o(t^\mu)$.
		\labitemc{(A5)}{as:sign} There exists $\bar T>0$ such that for all $t>\bar T$: (i) $\sgn\{\bw(t)\}=\sgn\{\bw(\bar T)\}$; (ii) $\sgn\{w_j(t)\}=\sgn\{w_j^{SGD}(t)\}$ for all $j$.
\end{itemize}
 
The key theoretical result of this paper shows that \eqref{eq:grda} performs the directional pruning (Definition \ref{def:dp}) for a sufficiently large $t$.

\begin{theo}\label{th:dp}
	Under assumptions \ref{as:M}-\ref{as:sign}, and assume $\mu\in(0.5,1)$ and $c>0$ in \eqref{eq:tune}. Then, as $\gamma\to 0$, \eqref{eq:grda} asymptotically performs directional pruning based on $\bw^{SGD}(t)$; particularly, 
	\begin{align}
		\bw_\gamma(t)\dap\arg\min_{\bw\in\R^d}\bigg\{ \frac{1}{2}\|\bw^{SGD}(t)-\bw\|_2^2 + \lambda_{\gamma,t}\sum_{j=1}^d \bar s_j |w_j|\bigg\}, \quad \mbox{for any $t>\bar T$}, \label{eq:dp1}
	\end{align}
	where $\dap$ means ``asymptotic in distribution'' under the empirical probability measure of the gradients, $\lambda_{\gamma,t} = c \sqrt{\gamma} t^\mu$ and the $\bar s_j$ satisfies $\lim_{t\to\infty}|\bar s_j - s_j| = 0$ for all $j$.
\end{theo}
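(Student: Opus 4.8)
The plan is to follow \eqref{eq:grda} through its ``dual'' (accumulated‑gradient) variable
\[
\bv_{n+1}:=\bw_0-\gamma\sum_{k=0}^{n}\nabla f(\bw_k;Z_{\hat i_{k+1}}),\qquad \bw_{n+1}=\Sc_{g(n,\gamma)}(\bv_{n+1}),
\]
and to show that, up to $o(\sqrt\gamma)$ as $\gamma\to0$, $\bv_{\lfloor t/\gamma\rfloor}=\bw^{SGD}_\gamma(t)+\zeta(t)$ for an explicit \emph{deterministic} drift $\zeta(t)=O(\sqrt\gamma)$; then $\bw_\gamma(t)=\Sc_{\lambda_{\gamma,t}}(\bv_{\lfloor t/\gamma\rfloor})$ is, coordinatewise, exactly the minimizer in \eqref{eq:dp1}, with the weights $\bar s_j$ read off from $\zeta(t)$. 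The first observation is that $\bv_{n+1}$ differs from the telescoped \eqref{eq:sgd} iterate $\bw^{SGD}_{n+1}=\bw_0-\gamma\sum_{k=0}^n\nabla f(\bw^{SGD}_k;Z_{\hat i_{k+1}})$ only in where the gradients are sampled, so $\zeta_{n+1}:=\bv_{n+1}-\bw^{SGD}_{n+1}=-\gamma\sum_{k=0}^n(\nabla f(\bw_k;Z_{\hat i_{k+1}})-\nabla f(\bw^{SGD}_k;Z_{\hat i_{k+1}}))$. Taylor‑expanding each difference at $\bw^{SGD}_k$ produces $-\gamma H(\bw^{SGD}_k)\br_k$ plus a centered Hessian--vector martingale increment plus an $O(\gamma\|\br_k\|^2)$ remainder, where $\br_k:=\bw_k-\bw^{SGD}_k$. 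Because $\bw_k=\Sc_{g(k-1,\gamma)}(\bv_k)$, on every coordinate $j$ whose sign has stabilized and is nonzero (assumption~\ref{as:sign}) the $j$‑th component is $r_{k,j}=\zeta_{k,j}-g(k-1,\gamma)\sgn(w^{SGD}_{k,j})$, and since $g(\lfloor t/\gamma\rfloor,\gamma)\to\lambda_{\gamma,t}=c\sqrt\gamma\,t^\mu=O(\sqrt\gamma)$, both $\br_k$ and $\zeta_k$ are $O(\sqrt\gamma)$; consequently the martingale and quadratic remainders accumulate to $o(\sqrt\gamma)$ uniformly on $[0,t]$. This is precisely the continuous‑time approximation of \cite{CC19}, whose hypotheses are \ref{as:M}--\ref{as:L}.

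\emph{The drift ODE.} Passing to the piecewise‑constant interpolations and using $\bw_\gamma(t),\bw^{SGD}_\gamma(t)\to\bw(t)$ (the flow \eqref{eq:gf}), so that $H(\bw^{SGD}_k)\to H(\bw(t))$ and, for $t>\bar T$, $\sgn(w^{SGD}_{k,j})=\sgn(w_j(t))$ by \ref{as:sign}, the recursion above becomes the linear forced ODE $\dot\zeta(t)=-H(\bw(t))\zeta(t)+H(\bw(t))\bg(t)$, $\zeta(0)=0$, with forcing $\bg(t):=c\sqrt\gamma\,t^\mu\sgn(\bw(t))$. Variation of parameters with the principal matrix solution $\Phi$ of \eqref{eq:odex} gives
\[
\zeta(t)=c\sqrt\gamma\int_0^t\Phi(t,s)\,H(\bw(s))\,s^\mu\,\sgn(\bw(s))\,ds\;=:\;c\sqrt\gamma\,\Jc(t).
\]

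\emph{Identifying $\bar s_j$ and its limit.} Since $\bv_{\lfloor t/\gamma\rfloor}=\bw^{SGD}_\gamma(t)+c\sqrt\gamma\,\Jc(t)+o(\sqrt\gamma)$ and the penalized quadratic in \eqref{eq:grdamin} is separable (Remark~\ref{rem:def_dp} / Proposition~\ref{prop:uniq_sol_def}), the soft‑threshold $\bw_\gamma(t)=\Sc_{\lambda_{\gamma,t}}(\bv_{\lfloor t/\gamma\rfloor})$ is the unique coordinatewise minimizer: $w_{\gamma,j}(t)\dap\arg\min_{w_j}\{\tfrac12(w^{SGD}_{\gamma,j}(t)+c\sqrt\gamma[\Jc(t)]_j-w_j)^2+\lambda_{\gamma,t}|w_j|\}$. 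On coordinates with $w_j(t)\neq0$ the $O(\sqrt\gamma)$ correction does not flip the sign of $w^{SGD}_{\gamma,j}(t)$, so the minimizer equals $\sgn(w^{SGD}_{\gamma,j}(t))[\,|w^{SGD}_{\gamma,j}(t)|-\lambda_{\gamma,t}\bar s_j\,]_+$ with $\bar s_j=1-t^{-\mu}\sgn(w^{SGD}_{\gamma,j}(t))\,[\Jc(t)]_j$, which is \eqref{eq:dp1} with $\lambda_{\gamma,t}=c\sqrt\gamma\,t^\mu$. It remains to compute $\lim_{t\to\infty}t^{-\mu}\Jc(t)$. Using $\partial_s\Phi(t,s)=\Phi(t,s)H(\bw(s))$, an integration by parts gives $\Jc(t)=t^\mu\sgn(\bw(t))-\mu\int_0^t s^{\mu-1}\Phi(t,s)\sgn(\bw(s))\,ds+o(t^\mu)$, the last term absorbing the bounded contribution of the finitely many sign changes of $\bw(\cdot)$ (all before $\bar T$ by \ref{as:sign}). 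Assumption~\ref{as:H} lets us write $H(\bw(s))=\bar H+E(s)$ with $\int_0^\infty\|E(s)\|\,ds<\infty$ and $\ker\bar H=\Pc_0$, so $\Phi(t,s)$ acts as the identity on $\Pc_0$ and contracts exponentially on $\mathrm{range}(\bar H)=\Pc_0^\perp$, up to an error on large $s$ controlled by the tail of $\int\|E\|$; together with \ref{as:pms} this yields $\int_0^t s^{\mu-1}\Phi(t,s)\sgn(\bw(s))\,ds=\tfrac{t^\mu}{\mu}\Pi_0\sgn(\bw^{SGD})+o(t^\mu)$, the transverse part being $O(1)$ and the $\Pc_0$‑part surviving because $\sgn(\bw(s))$ is eventually constant and $\Phi$ fixes $\Pc_0$. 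Plugging back, $t^{-\mu}\Jc(t)\to\sgn(\bw^{SGD})-\Pi_0\sgn(\bw^{SGD})=(I_d-\Pi_0)\sgn(\bw^{SGD})$; and since $\sgn(w^{SGD}_{\gamma,j}(t))\to\sgn(w_j^{SGD})$ with $\sgn(w_j^{SGD})^2=1$,
\[
\bar s_j\longrightarrow 1-\sgn(w_j^{SGD})\big[(I_d-\Pi_0)\sgn(\bw^{SGD})\big]_j=\sgn(w_j^{SGD})\big(\Pi_0\sgn(\bw^{SGD})\big)_j=s_j,
\]
which is $\lim_{t\to\infty}|\bar s_j-s_j|=0$.

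\emph{Main obstacle.} The crux is the reduction to the clean drift ODE for $\zeta$: one must show that, summed over $\sim t/\gamma$ steps, the Taylor remainders $O(\|\br_k\|^2)$, the centered Hessian--vector martingale, and the substitution of $H(\bw(t_k))$ for $H(\bw^{SGD}_k)$ are all $o(\sqrt\gamma)$ uniformly on $[0,t]$ --- exactly the regime covered by the continuous‑time machinery of \cite{CC19} under \ref{as:M}--\ref{as:L}, and the step that genuinely requires $\gamma\to0$. A second, more technical, point is the principal‑matrix‑solution asymptotics: isolating the $\Pc_0$‑component of $\int_0^t s^{\mu-1}\Phi(t,s)\sgn(\bw(s))\,ds$ and proving the transverse part and the $\bar H$‑perturbation residuals are $o(t^\mu)$ is where \ref{as:H} and \ref{as:pms} do the work.
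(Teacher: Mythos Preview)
Your proposal is correct and follows essentially the paper's own argument: pass to the dual variable $\bv_\gamma$, derive a linear drift equation forced by $c\sqrt\gamma\,t^\mu\sgn(\bw(t))$, solve by variation of parameters with $\Phi(t,s)$, perform the same integration by parts using $\partial_s\Phi(t,s)=\Phi(t,s)H(\bw(s))$, and then invoke Levinson--type asymptotics for $\Phi$ under \ref{as:H} to isolate the $\Pi_0\sgn(\bw)$ contribution and conclude $\bar s_j\to s_j$. The only difference is organizational: you center $\bv_\gamma$ directly at $\bw^{SGD}_\gamma$ so the Gaussian fluctuation cancels from the outset, whereas the paper centers at the gradient flow $\bw(t)$, carries the stochastic integral $\sqrt\gamma\int_0^t\Phi(t,s)\Sigma^{1/2}(\bw(s))\,d\BB(s)$ explicitly via the functional CLT of \cite{CC19} (their Theorem~B.2), and only regroups it with $\bw^{SGD}_\gamma(t)\dap\bw(t)+\sqrt\gamma\,\bU(t)$ at the final soft--thresholding step.
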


This theorem holds in the asymptotic regime ($\gamma\to 0$) with a finite time horizon, i.e. any fixed $t\geq \bar T$. It is important that $\lambda$ grows with $t$, because the magnitude of SGD asymptotically grows like a Gaussian process, i.e., in $t^{0.5}$. Hence, $\mu$ should be slightly greater than 0.5. The proof of Theorem \ref{th:dp} is in Section \ref{sec:pfdp} of the appendix.

\begin{rem}[Condition \ref{as:H}]
The eigenspace of $\bar H$ associated with the zero eigenvalues and $\Pc_0$ matches when $\bw(t)$ and SGD converge to the same flat valley of minima. For the $\ell_2$ loss and in the teacher-student framework, \cite{DLT18,Xiao19,YQ19} showed $\bw(t)\to\bw^*$ exponentially fast for one hidden layer networks, so the limit $\bar H=H(\bw^*)$ and the condition holds. For the cross-entropy loss, we suspect that $\bar H$ satisfying \ref{as:H} is not a zero matrix, but its exact form needs further investigation. 
\end{rem}

\begin{figure}[H]
    \begin{minipage}{0.6\textwidth}
    \begin{rem}[Condition \ref{as:pms}]
    	This condition can be verified (by Problem 3.31 of \cite{T12}) 
    	if $\sgn(\bw(t))$ is mainly restricted in the eigenspace of $H(\bw(t))$ associated with positive eigenvalues as $t\to\infty$. Empirically, this appears to hold as \cite{GRD18,GKX19} show that $\bw(t)$ lies mainly in the subspace of $H(\bw(t))$ associated with the positive eigenvalues, and Figure \ref{fig:sgdsign} suggests the angle between $\bw(t)$ and $\sgn(\bw(t))$ is very small.
    \end{rem}
    \begin{rem}[Condition \ref{as:sign}]
    	For (i), under the cross-entropy loss, several papers \cite{SENS18,GLSS18,JT19,LL20} show that $\bw(t)/\|\bw(t)\|_2$ converges to a unique direction while $\|\bw(t)\|_2\to\infty$. This implies that $\sgn(\bw(t))$ stabilizes after a finite time. For the $\ell_2$ loss, \cite{DLT18,Xiao19} show $\bw(t)\to\bw^*$ for one hidden layer networks under regularity conditions, and the condition follows. The (ii) holds if the learning rate is sufficiently small, so that the deviation between the gradient flow and the SGD is small. 
    \end{rem}
    \end{minipage}
    \hfill
    \begin{minipage}{0.37\textwidth}
    	 \includegraphics[width=\textwidth]{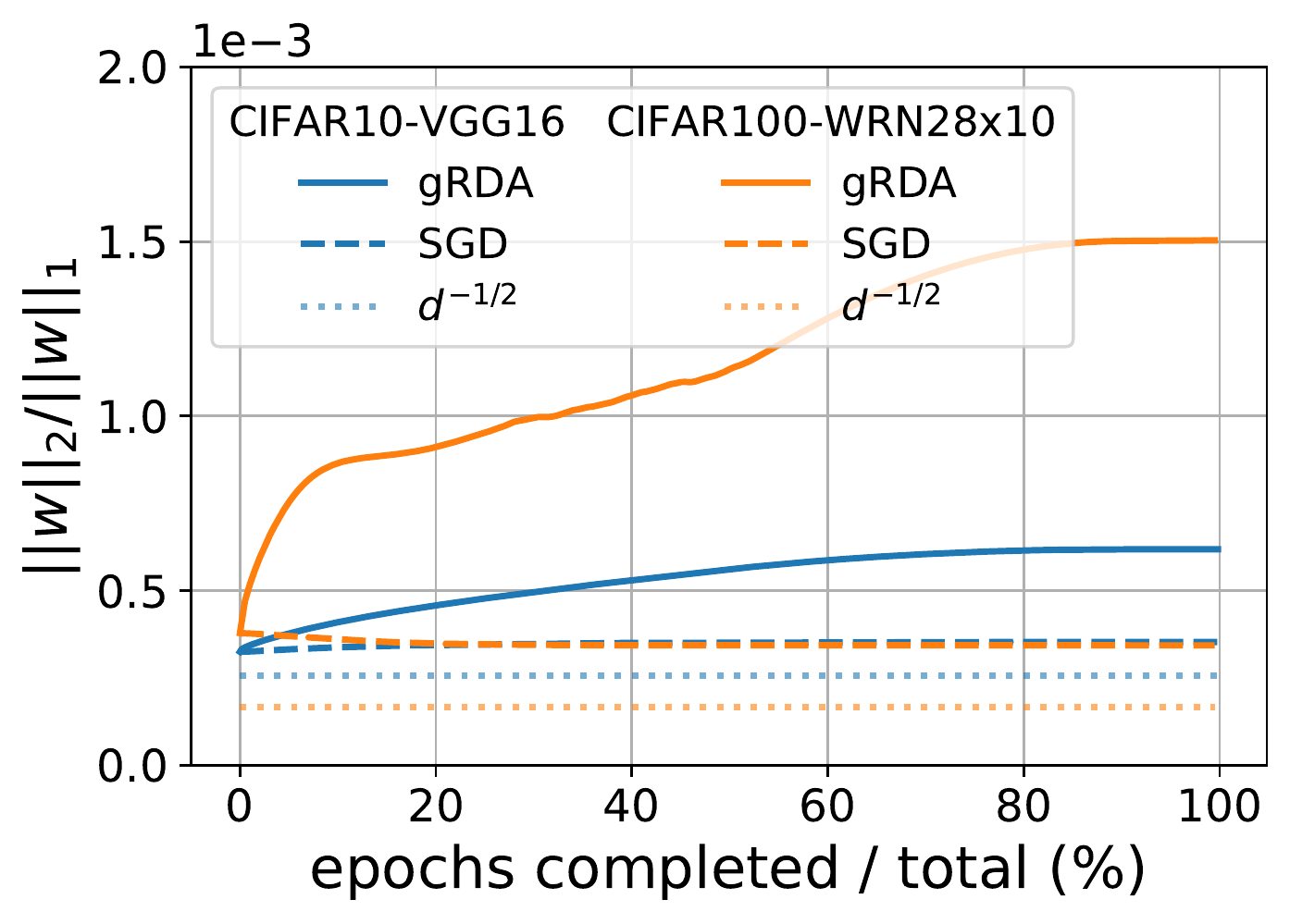}
    	 \caption{$\|\bw\|_2/\|\bw\|_1$ is close to its lower bound $d^{-1/2}$ when the coefficients in $\bw$ are of similar magnitude, i.e. the direction of $\bw$ is the same as $\sgn(\bw)$.}\label{fig:sgdsign}
    \end{minipage}
\end{figure}

\section{Empirical experiments}\label{sec:empi}

This section presents the empirical performance of \eqref{eq:grda}, and the evidence that \eqref{eq:grda} performs the directional pruning (Definition \ref{def:dp}). Section \ref{sec:imagenet} considers ResNet50 with ImageNet, and compares with several existing pruning algorithms. To check if \eqref{eq:grda} performs the directional pruning, Section \ref{sec:conn} presents the local geometry of the loss around the minimal loss curve that connects the minima found by \eqref{eq:sgd} and \eqref{eq:grda}, and Section \ref{sec:proj} investigates the direction of deviation between the minima found by \eqref{eq:sgd} and \eqref{eq:grda}. 

\subsection{ResNet50 on the ImageNet}\label{sec:imagenet}

We use \eqref{eq:grda} to simultaneously prune and train the ResNet50 \cite{HZRS15} on the ImageNet dataset without any post-processing like retraining. The learning rate schedule usually applied jointly with the SGD with momentum does not work well for \eqref{eq:grda}, so we use either a constant learning rate or dropping the learning rate only once in the later training stage. Please find more implementation details in Section \ref{ap:imagenet_details} in the appendix. The results are shown in Figure \ref{fig:resnet50}, where $\mu$ is the increasing rate of the soft thresholding in the tuning function \eqref{eq:tune} of \eqref{eq:grda}.
\begin{figure}[!h]
		\centering
	\includegraphics[width=\textwidth]{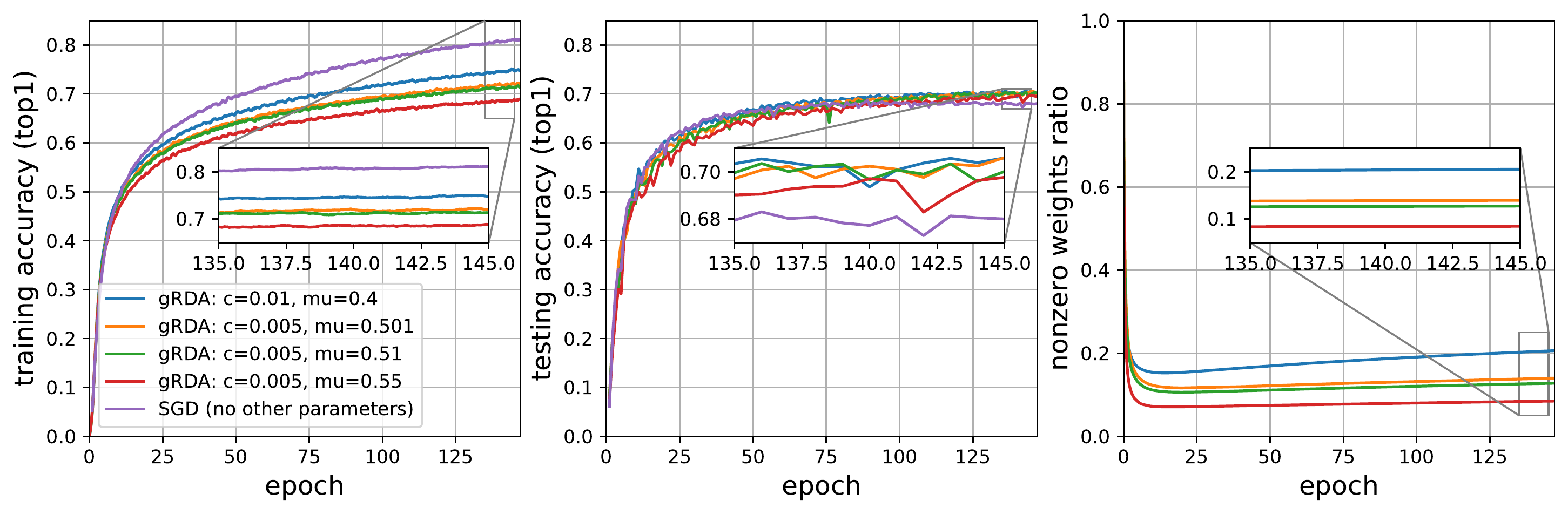}
 \caption{Learning trajectories of \eqref{eq:sgd} and \eqref{eq:grda} for ResNet50 \cite{HZRS15} on ImageNet image recognition task. {\bf Left:} top 1 training accuracy. {\bf Center:} top 1 testing accuracy. {\bf Right:} the ratio between the number of nonzero parameters and the total number of parameters. The number of nonzero weights slightly increases, contradicting with Theorem \ref{th:dp}. This could be because that Assumption \ref{as:sign} fails due to the large learning rate. $\gamma=0.1$ for both SGD and gRDA. Minibatch size is 256.
 }\label{fig:resnet50}
\end{figure}

	 {\bf Accuracy}: gRDAs can perform as accurate as \eqref{eq:sgd} after sufficient training. Larger $\mu$ (in the tuning function \eqref{eq:tune}) can perform worse than \eqref{eq:sgd} in the early stage of training, but eventually beat \eqref{eq:sgd} in the late stage of training. The {training} accuracy of \eqref{eq:sgd} is higher than that of the gRDAs. This may result from a too large learning rate, so the coefficients $w_j$'s with $s_j<0$ (in \eqref{eq:grdamin}) overshoot and their magnitudes become too large. \\
	 {\bf Sparsity}: Sparsity increases rapidly at the early stage of training. With $\mu=0.55$ in Figure \ref{fig:resnet50}, \eqref{eq:grda} reaches 92\% sparsity, while the testing accuracy is higher than \eqref{eq:sgd}.\\
	 {\bf Wall time and memory footprint}: \eqref{eq:grda} has a slightly higher wall time than \eqref{eq:sgd}, but the memory footprint is similar. See Section \ref{ap:resource_comparison} for a detailed comparison.

The left panel of Figure \ref{fig:compareresnet50} compares \eqref{eq:grda} with the magnitude pruning \cite{ZG17} and the variational dropout \cite{pmlr-v70-molchanov17a}, and \eqref{eq:grda} is particularly competitive in the high sparsity (90-92\%) regime. The right panel of Figure \ref{fig:compareresnet50} compares different pruning algorithms that do not require expert knowledge for selecting the layerwise pruning level with \eqref{eq:grda} in terms of the layerwise sparsity. We compare \eqref{eq:grda} with the Erd\H{o}s-R\'enyi-Kernel of \cite{evci19}, variational dropout \cite{pmlr-v70-molchanov17a} and a reinforcement-learning based AutoML method \cite{He18}. Our \eqref{eq:grda} achieves a high sparsity 92\% with a competitive testing accuracy. In addition, the layerwise sparsity pattern generated by gRDA is similar to the variational dropout and the AutoML, as these methods generate higher sparsity in the 3$\times$3 convolutional layers, and lower sparsity in the 1$\times$1 layers and the initial layers, which are less wide than the latter layers. Among these methods, \eqref{eq:grda} is unique in that its spirit is interweaving with the local loss landscape.

\begin{figure}[!htbp]
		\centering
	\includegraphics[width=0.43\textwidth, trim={0 0 0 45}, clip]{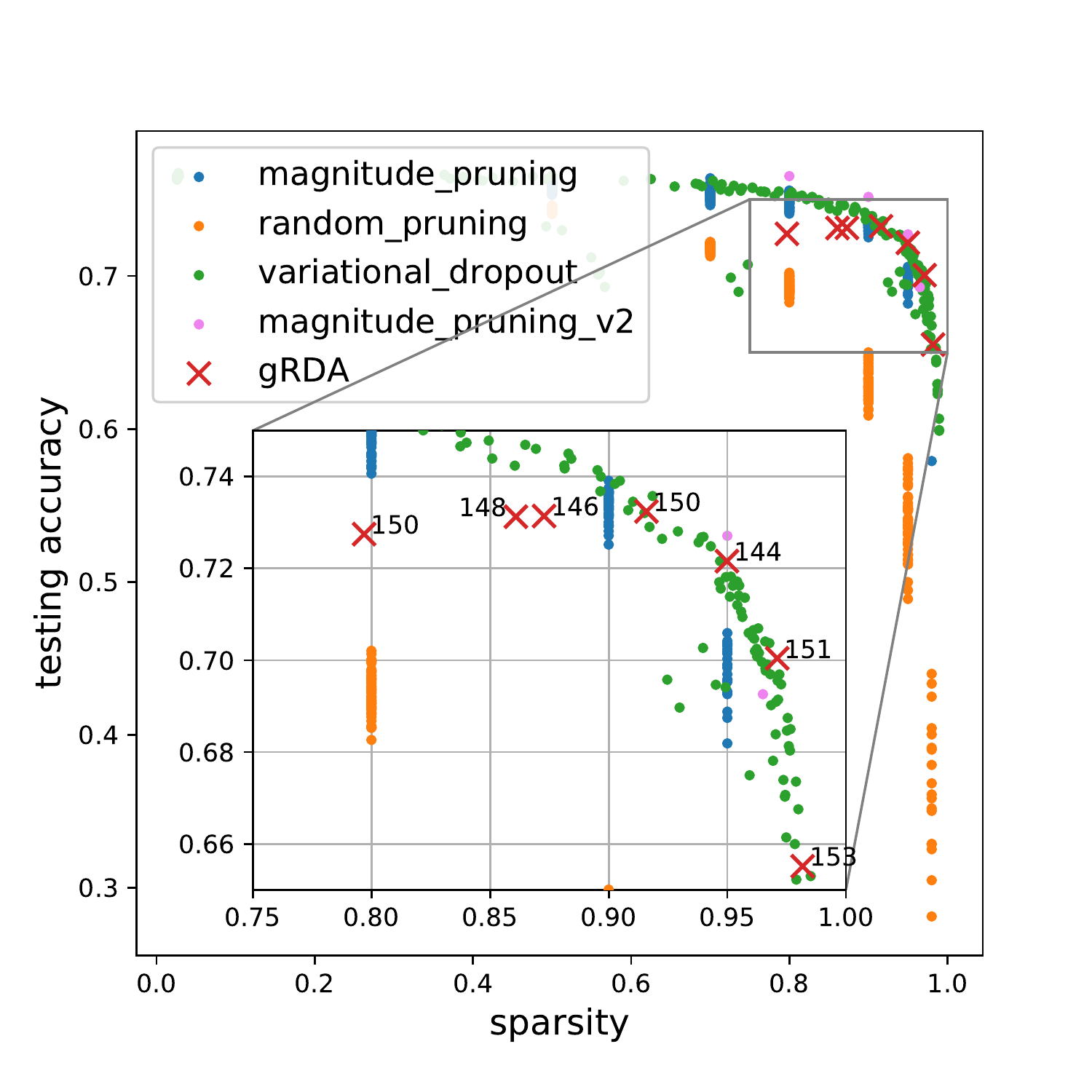}
	\includegraphics[width=0.54\textwidth, trim={0 0 0 45}, clip]{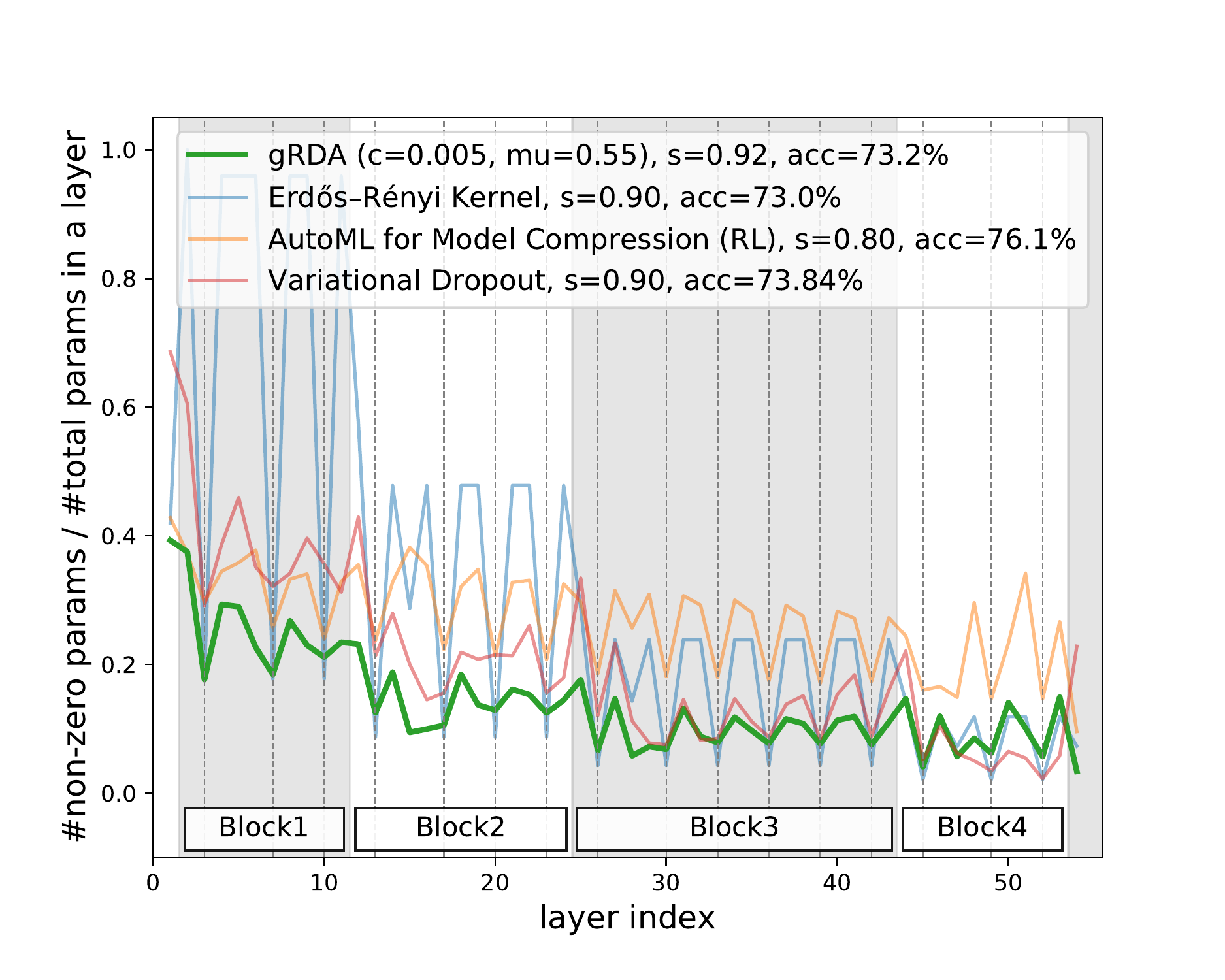}
 \caption{{\bf Left:} A comparison of gRDA with the magnitude pruning \cite{ZG17} and variational dropout \cite{pmlr-v70-molchanov17a} with ResNet50 on ImageNet, done by \cite{GEH19} with around 100 epochs using SGD with momentum. Our solution is among the high performers in the very sparse regime (90-92\%). 
 The numbers next to the red crosses are the epochs. 
 {\bf Right:} Layerwise sparsity produced by different ``automatic'' pruning algorithms. All methods show the pattern that the 3x3 conv layers (on dashed lines) are greatly pruned (valleys), and the 1x1 conv layers are less pruned (peaks).}\label{fig:compareresnet50}
\end{figure}

\subsection{Connectivity between the minimizers of gRDA and SGD}\label{sec:conn}

In this section, we check whether \eqref{eq:sgd} and \eqref{eq:grda} reach the same valley, which implies \eqref{eq:grda} is performing the directional pruning. Similar analysis has been done for the minima found by \eqref{eq:sgd} with different initializers  \cite{WS19,Garipov18,N19,Draxler18,HHY19,EPGE19,Gotmare18}. 

We train VGG16 \cite{vgg16} on CIFAR-10 and WRN28x10 on CIFAR-100 until nearly zero training loss using both \eqref{eq:sgd} and \eqref{eq:grda}. The minima here found by \eqref{eq:grda} generally have sparsity around 90\% or higher for larger $\mu$. We use the method of \cite{Garipov18} to search for a quadratic B\'ezier curve of minimal training loss connecting the minima found by the gRDA and SGD, and then visualize the contour of the training losses and testing errors on the hyperplane containing the minimal loss curve. 
See Sections \ref{append:cifar} and \ref{append:conn} for details on implementation.

The results are shown for different choices of $\mu$, which is the increasing rate of the soft thresholding in the tuning function \eqref{eq:tune} of \eqref{eq:grda}. As observed from the contours in Figure \ref{fig:conn}, the learned parameters of both SGD and gRDA lie in the same valley on the training loss landscape if $\mu$ is properly tuned, namely, $0.6$ for VGG16 and $0.501$ for WRN28x10. This verifies that \eqref{eq:grda} performs the directional pruning. For large $\mu$, a hill exists on the minimal loss/error path, which may be due to the too large learning rate that leads to large magnitude for the coefficients $j$ with $s_j<0$. The details (training accuracy, testing accuracy, sparsity) of the endpoints trained on VGG16 and WRN28x10 are shown in Tables \ref{tab:cifar10vgg} and \ref{tab:cifar100wideres} of the Appendix. For the testing error in Figure \ref{fig:conn}, the gRDA somewhat outperforms SGD when $\mu$ is slightly greater than $0.5$. Interestingly, the neighborhood of the midpoint on the B\'ezier curve often has a higher testing accuracy than the both endpoints, except for WRN28x10 on CIFAR-100 with $\mu=0.501$ and 0.55. This finding resonates with the results of \cite{Izmailov18}. 

\begin{figure}	
	\centering
	\begin{subfigure}[t]{0.48\textwidth}
		\centering
		\includegraphics[width=\textwidth]{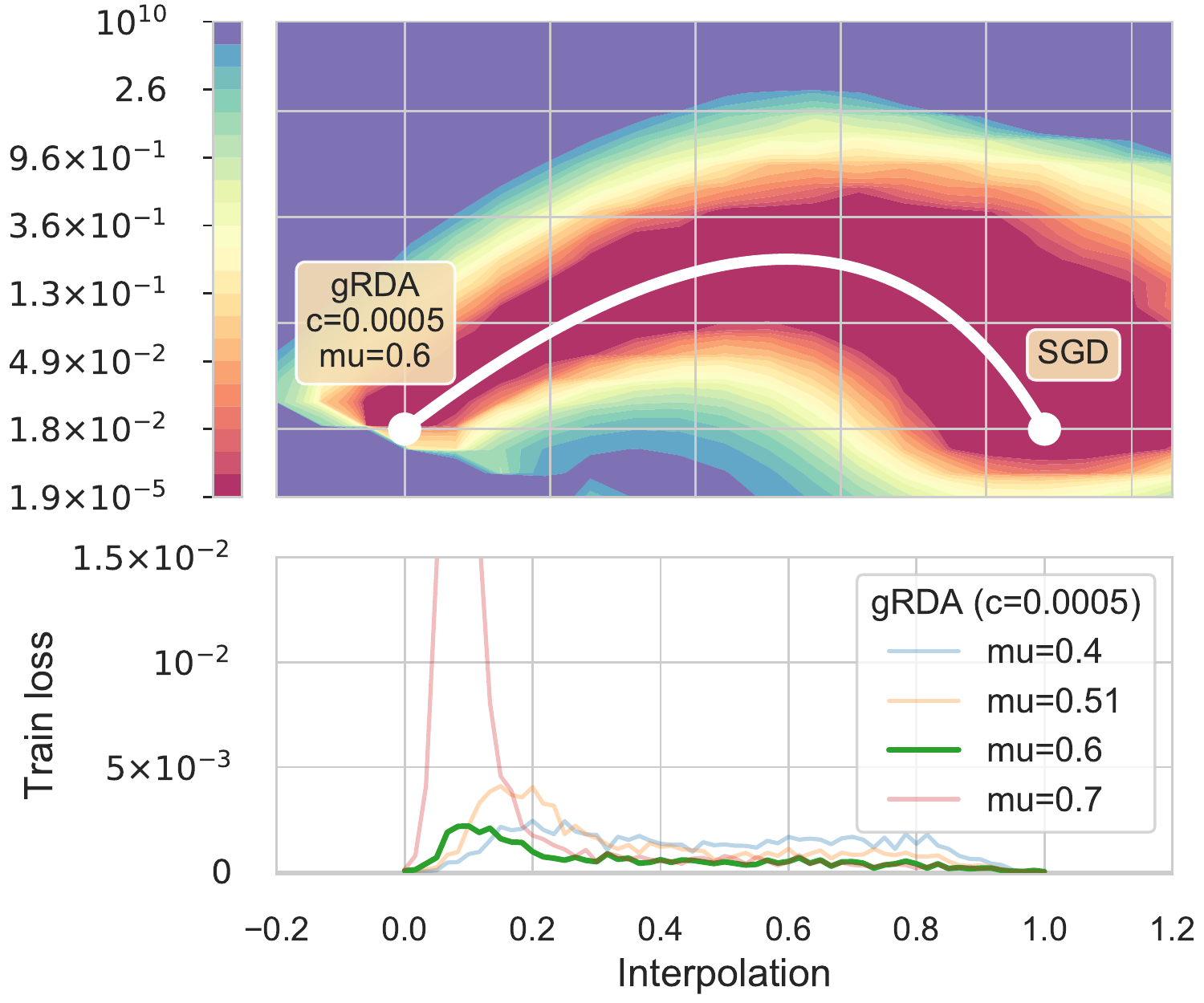}
		\caption{VGG16/CIFAR-10/Train loss}\label{fig:5a}		
	\end{subfigure}
	\quad
	\begin{subfigure}[t]{0.48\textwidth}
		\centering
		\includegraphics[width=\textwidth]{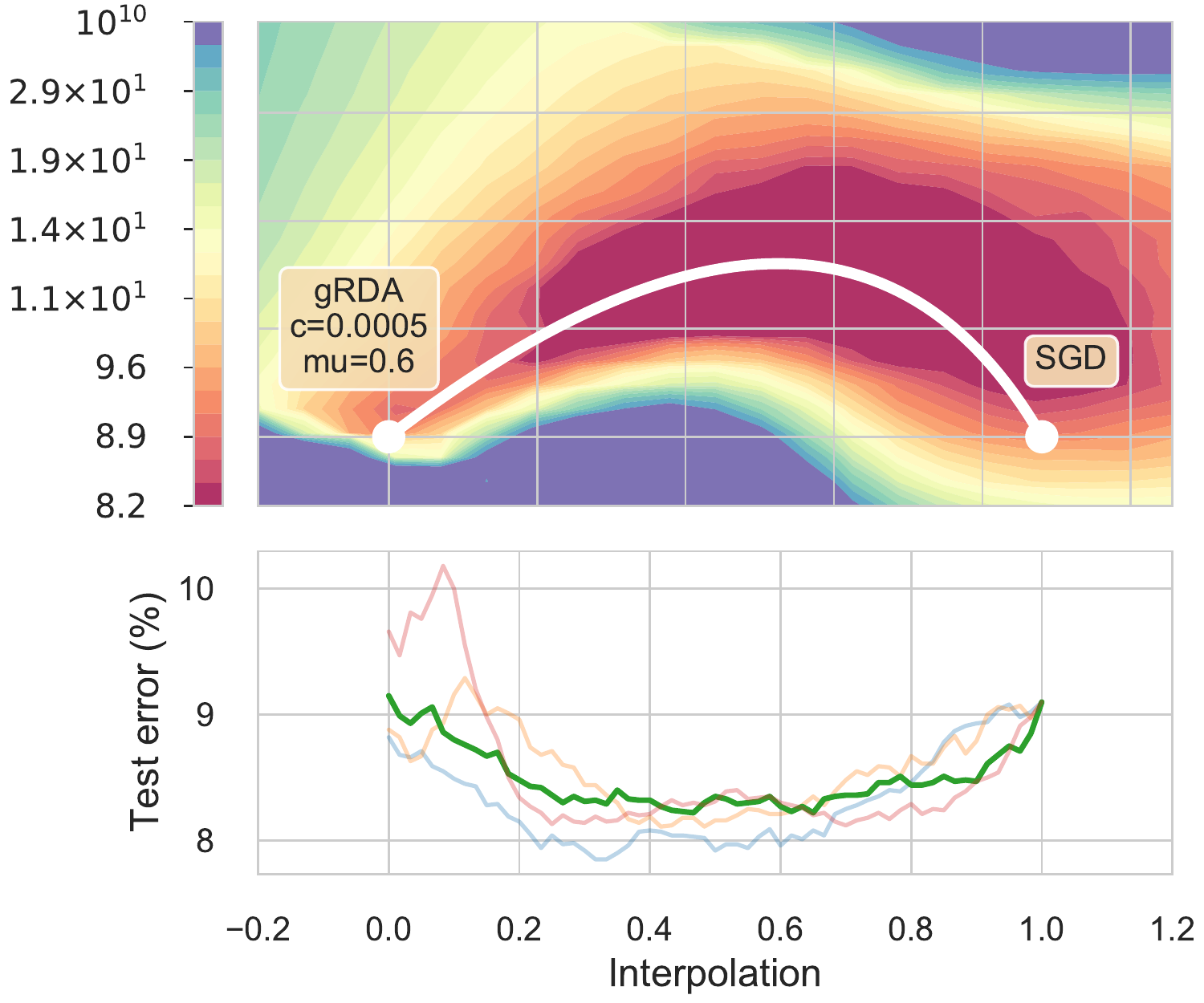}
		\caption{VGG16/CIFAR-10/Test error}\label{fig:5b}
	\end{subfigure}\\
	\begin{subfigure}[t]{0.48\textwidth}
		\centering
	\includegraphics[width=\textwidth]{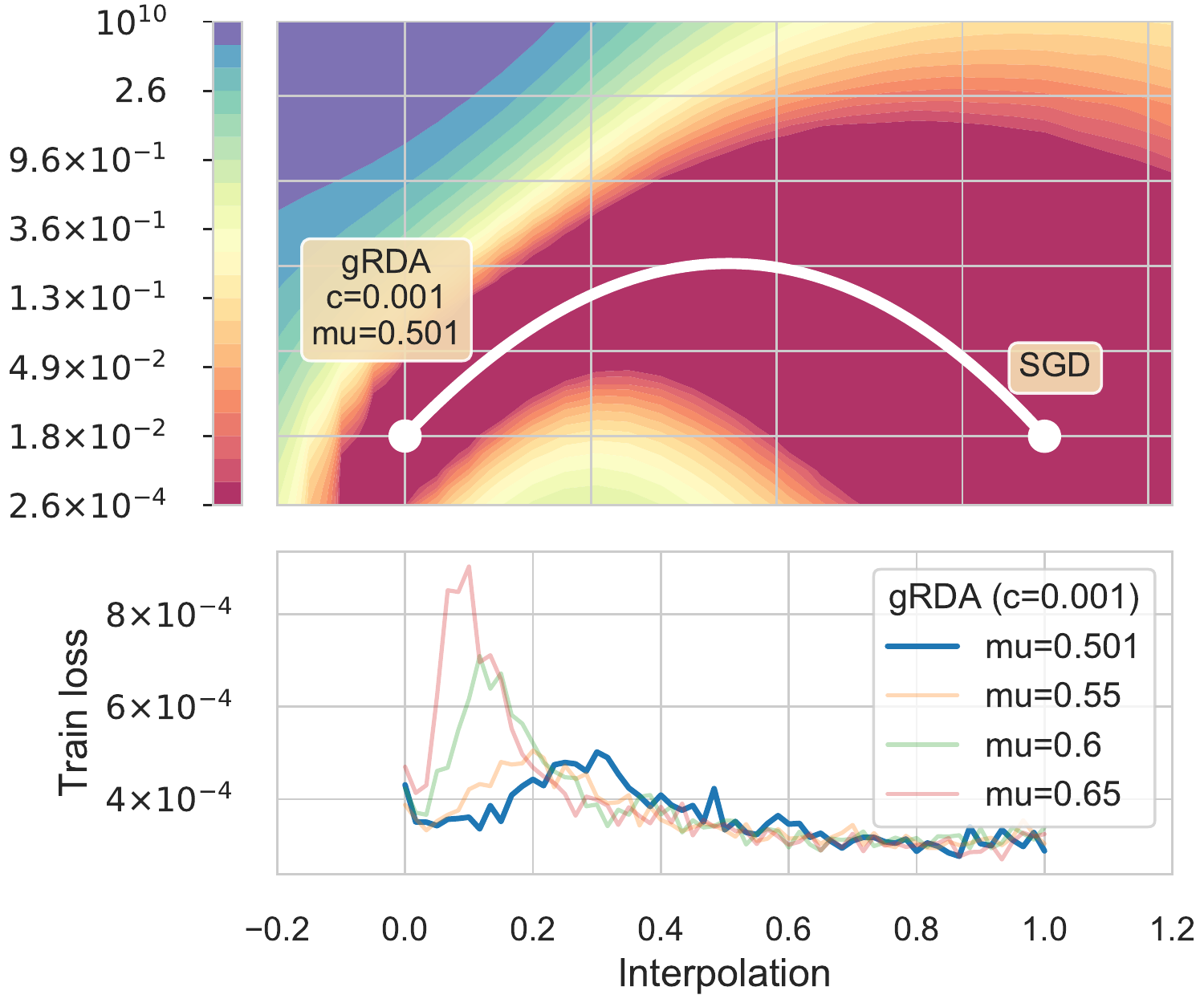}
		\caption{WRN28x10/CIFAR-100/Train loss}\label{fig:6a}		
	\end{subfigure}
	\quad
	\begin{subfigure}[t]{0.48\textwidth}
		\centering
	\includegraphics[width=\textwidth]{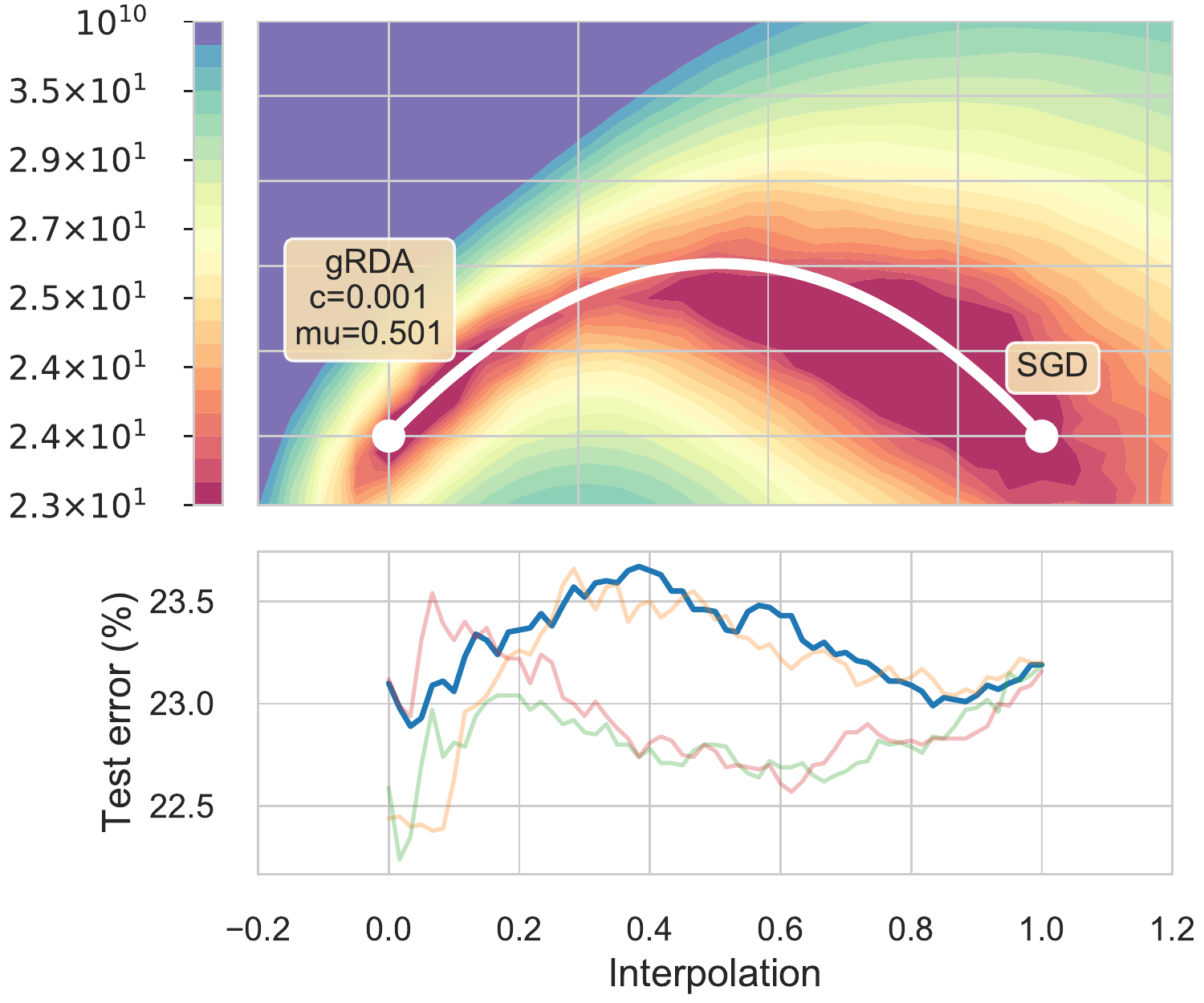}
		\caption{WRN28x10/CIFAR-100/Test error}\label{fig:6b}
	\end{subfigure}
 \caption{The upper figure in each panel shows the contour of training loss and testing error on the hyperplane containing the minimal loss B\'ezier curve (white) interpolating the minimizers found by the SGD and the gRDA. The lower plot of each panel shows the training loss/testing error on the minimal loss B\'ezier curve interpolating minimizers of SGD and gRDA under different $\mu$.
 }\label{fig:conn}
\end{figure}

\subsection{Direction of $\bw^{gRDA}-\bw^{SGD}$}\label{sec:proj} 

The directional pruning (Definition \ref{def:dp}) implies that the vector $\Delta_n:=\bw_n^{gRDA}-\bw_n^{SGD}$ should lie in $\Pc_0$ as $n\to\infty$ if tuned appropriately. Unfortunately, checking this empirically requires estimating $\Pc_0$ which is computationally formidable. Nonetheless, there exists a dominating low dimensional subspace in $\Pc_0^\perp$ (the subspace orthogonal to $\Pc_0$); particularly, a few studies \cite{Sagun16,Sagun18,GKX19,P18} have empirically shown that for various networks on the CIFAR-10, the magnitude of the ten leading eigenvalues of $H(\bw^{SGD})$ are dominating the others. 

Let $\Pc_n^{top}:=\mbox{span}\{\ub_{1,n},\ub_{2,n},\ldots,\ub_{10,n}\}$ be the top subspace spanned by the eigenvectors $\ub_{j,n}$ associated with the top 10 eigenvalues of $H(\bw_n^{SGD})$. Define
\begin{align}
	P_n := \left[
	\begin{array}{ccc}
		\longleftarrow &\ub_{1,n} &\longrightarrow \\
		\longleftarrow &\ub_{2,n} &\longrightarrow \\
		&\vdots&\\
		\longleftarrow &\ub_{10,n} &\longrightarrow \\		
	\end{array}
	\right].\label{eq:P}
\end{align}

We train the VGG16 and WRN28x10 on the CIFAR-10, until the training data are nearly interpolated and the training loss is almost zero. During the training process, we fix the initializer and minibatches when we use different optimizers to ensure the comparability. We compute $P_n$ on the training trajectory of VGG16 and WRN28x10. See Section \ref{app:proj} for details on the computation of these eigenvectors. We test the hypothesis that the proportion of $\Delta_n$ in $\Pc_n^{top}$ is low, i.e. $\|P_n \Delta_n\|/\|\Delta_n\|$ is low. The results from the VGG16 and WRN28x10 in Figure \ref{fig:proj} basically confirm this hypothesis, as the magnitude of the proportion of $\Delta_n$ in $\Pc_n^{top}$ is very small under the two networks. Particularly, the proportion is always very small for WRN28x10. The results for different $\mu$ are similar, showing that $\Delta_n$ is pointing to the same direction regardless of $\mu$. 

\begin{figure}[!h]
		\centering 
		\includegraphics[width=0.48\textwidth]{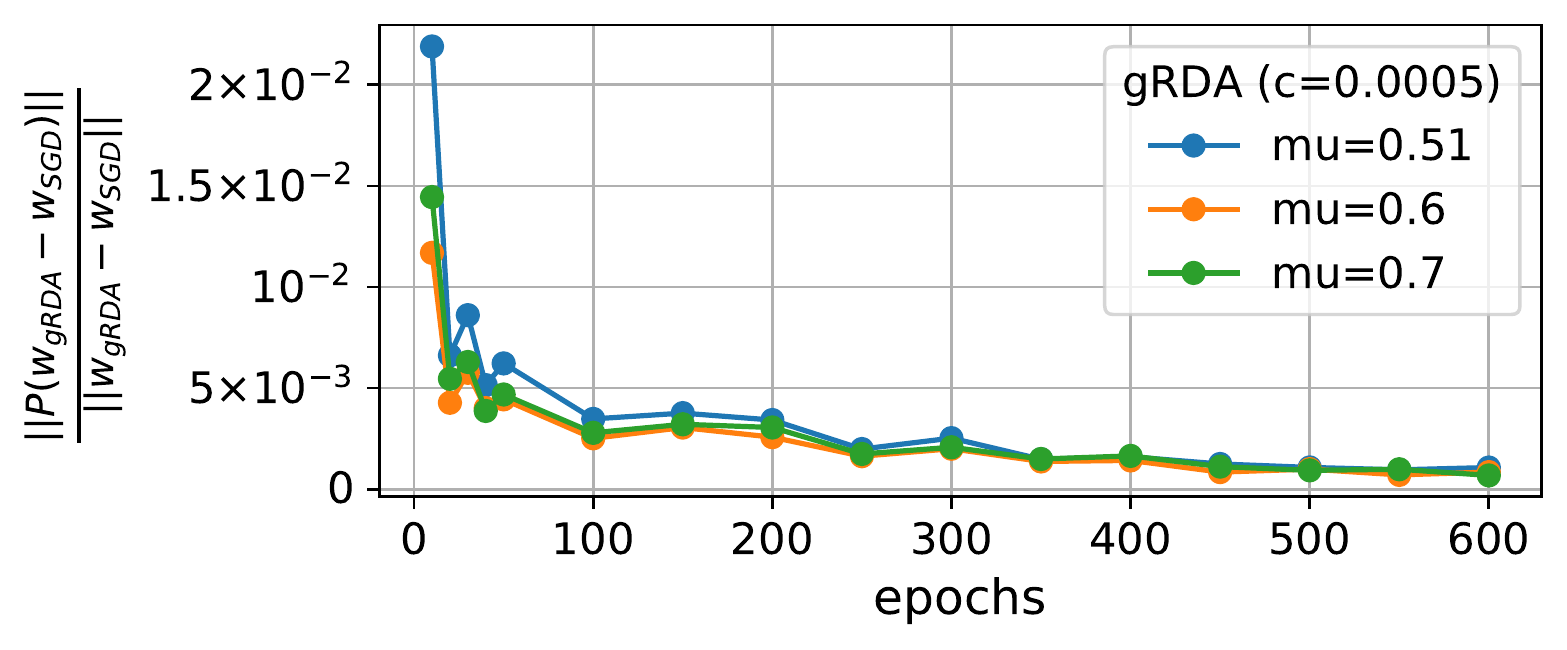}
		\includegraphics[width=0.48\textwidth]{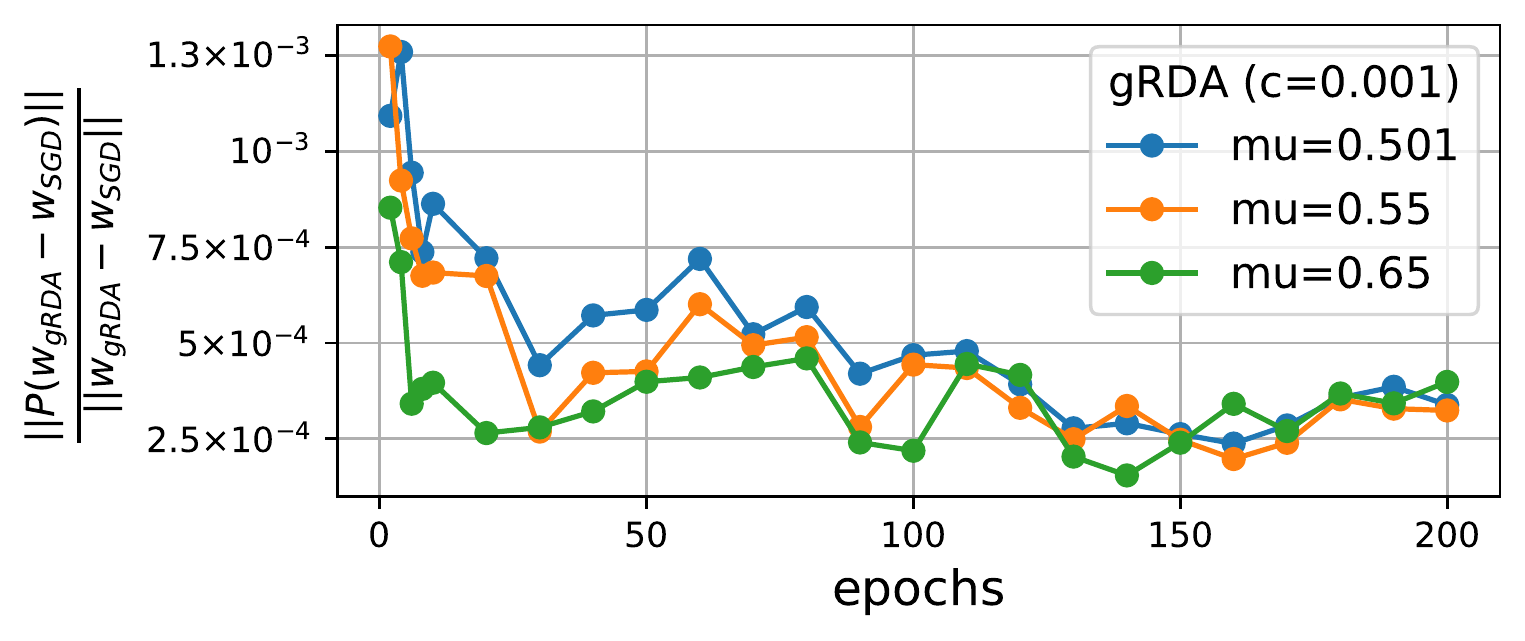}
 \caption{The fraction of the different between SGD and gRDA on the eigenspace associated with the leading 10 eigenvalues. \textbf{Left:} VGG16. \textbf{Right:} WRN28x10. The $\|\cdot\|$ is the $\ell_2$ norm.}\label{fig:proj} 
\vspace{-0.3cm}
\end{figure}

\section{Discussion and future work}
We propose the novel directional pruning for deep neural networks, that aims to prune DNNs while preserving the training accuracy. For implementation, we show that \eqref{eq:grda} asymptotically achieves the directional pruning after sufficient epochs of training. Empirical evidence shows that our solution yields a accuracy and sparsity tradeoff within the range of many contemporary pruning techniques.

The testing accuracy of \eqref{eq:grda} is almost always higher than \eqref{eq:sgd} if $\mu$ is slightly greater than 0.5 when using the ResNets, and some interpolation between the minima found by \eqref{eq:grda} and \eqref{eq:sgd} often has a better testing accuracy than the two minima; see Figure \ref{fig:conn}. As suggested by Figure \ref{fig:proj}, \eqref{eq:grda} appears to deviate from \eqref{eq:sgd} in the flatter directions. These evidences support \cite{HHY19}, who argue that the valley of minima is actually asymmetric, and points on the flatter side tend to generalize better. We think a further study of the testing accuracy of \eqref{eq:grda} along the lines initiated in this work may be an interesting future research topic, as this would shed some light on the mystery of generalization.

\section*{Broader Impact}
Our paper belongs to the cluster of works focusing on efficient and resource-aware deep learning. There are numerous positive impacts of these works, including the reduction of memory footprint and computational time, so that deep neural networks can be deployed on devices equipped with less capable computing units, e.g. the microcontroller units. In addition, we help facilitate on-device deep learning, which could replace traditional cloud computation and foster the protection of privacy.

Popularization of deep learning, which our research helps facilitate, may result in some negative societal consequences. For example, the unemployment may increase due to the increased automation enabled by the deep learning.

\section*{Acknowledgments}
We thank the anonymous reviewers for the helpful comments. Shih-Kang Chao would like to acknowledge the financial support from the Research Council of the University of Missouri. This work was completed while Guang Cheng was a member of the Institute for Advanced Study, Princeton in the fall of 2019. Guang Cheng would like to acknowledge the hospitality of the IAS and the computational resource it has provided.



{
\bibliographystyle{plain}
\bibliography{sDNN}}

\newpage
\vskip 2em \centerline{\Large \bf APPENDIX} \vskip -1em
\setcounter{subsection}{0}
\renewcommand{\thesubsection}{A.\arabic{subsection}}
\setcounter{equation}{0}
\renewcommand{\theequation}{A.\arabic{equation}}
\setcounter{theo}{0}
\renewcommand{\thetheo}{A.\arabic{theo}}\vskip 2em

\begin{prop}\label{prop:uniq_sol_def}
    Consider the optimization problem 
	\begin{align}
		\arg\min_{w_j\in\R^d} \bigg\{f(w_j):= \frac{1}{2}\|w^{SGD}_j - w_j\|_2^2 + \lambda s_j |w_j|\bigg\}. \label{eq:grdamin1}
	\end{align}
	For $w^{SGD}_j \in \R \textbackslash \{0\}, s_j \in \R, \lambda > 0$, it has an explicit solution:
	\begin{align}
		\widehat{w}_j = \sgn(w^{SGD}_j) \big[|w^{SGD}_j| - \lambda s_j\big]_+.
	\end{align}
\end{prop}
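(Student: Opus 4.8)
The plan is to turn \eqref{eq:grdamin1} into a one-dimensional minimization and then dispatch the (possibly non-convex) objective with a reflection argument based on the fact that the $\ell_1$ penalty is an even function. First I would fix notation: write $a:=w^{SGD}_j\neq 0$, $s:=s_j\in\R$, $\lambda>0$, and set $g(w):=\tfrac12(a-w)^2+\lambda s|w|$ for $w\in\R$ (the minimization in \eqref{eq:grdamin1} is over the scalar $w_j$, so $\|\cdot\|_2^2$ is just a square); the goal is that $g$ has the unique minimizer $\widehat w=\sgn(a)\big[\,|a|-\lambda s\,\big]_+$. When $s\ge 0$, $g$ is convex and $\widehat w$ is the proximal map of $\lambda s|\cdot|$ evaluated at $a$; when $s<0$ the penalty is concave and $g$ is non-convex, so I want a single argument that handles both.

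The key step is the reflection. Because $|{-w}|=|w|$, the penalty cancels in $g(-w)-g(w)=\tfrac12\big[(a+w)^2-(a-w)^2\big]=2aw$. Assuming for now $a>0$, this gives $g(-w)\le g(w)$ for all $w\le 0$, and strictly for $w<0$ — this is exactly where the hypothesis $w^{SGD}_j\neq 0$ enters (for $a=0$, $s<0$ there would be two symmetric minimizers). Hence every minimizer of $g$ lies in $[0,\infty)$, and it suffices to minimize $g|_{[0,\infty)}(w)=\tfrac12(a-w)^2+\lambda s w$, a strictly convex quadratic in $w$ with unconstrained vertex at $w^\star=a-\lambda s$. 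Its minimizer over $[0,\infty)$ is $\max\{0,\,a-\lambda s\}=[\,a-\lambda s\,]_+$, unique by strict convexity; since $a=|a|>0$, this equals $[\,|a|-\lambda s\,]_+=\sgn(a)\big[\,|a|-\lambda s\,\big]_+$, as wanted.

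It remains to drop the assumption $a>0$. If $a<0$, note $g_a(w)=g_{-a}(-w)$ since the penalty is even, so $w\mapsto -w$ conjugates the two objectives and the unique minimizer of $g_a$ is the negative of that of $g_{-a}$; applying the previous paragraph to $-a>0$ yields $-\sgn(-a)\big[\,|-a|-\lambda s\,\big]_+=\sgn(a)\big[\,|a|-\lambda s\,\big]_+$. This proves the formula for every $a\neq 0$. The whole argument is elementary arithmetic with one-variable quadratics; the only genuine subtlety is the non-convex regime $s<0$, and the even-penalty observation $g(-w)-g(w)=2aw$ is precisely what lets me sidestep a case comparison of the two one-sided minima there.
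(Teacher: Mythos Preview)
Your argument is correct and is genuinely different from the paper's. The paper splits on the sign of $s_j$: for $s_j\ge 0$ it invokes convexity and checks the subgradient condition at $\widehat w_j$; for $s_j<0$ it computes the (at most) two one-sided stationary points $w^{SGD}_j\pm\lambda s_j$ and the value at $0$, then compares these three candidate values explicitly to locate the global minimizer. Your reflection identity $g(-w)-g(w)=2aw$ exploits the evenness of $|\cdot|$ to force every minimizer onto the half-line $\sgn(a)\,[0,\infty)$, after which the problem is a strictly convex quadratic on a ray regardless of the sign of $s$; this unifies the convex and non-convex regimes and spares you the direct comparison of one-sided minima. The paper's route, by contrast, makes the non-convex structure (two local minima when $s_j<0$) more visible and is closer to the standard proximal-operator bookkeeping, but at the cost of a longer case analysis. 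Both approaches use the hypothesis $w^{SGD}_j\neq 0$ in the same place: to break the tie that would otherwise produce two symmetric minimizers when $s_j<0$.
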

\begin{proof}[Proposition \ref{prop:uniq_sol_def}] \\
When $s_j = 0$, the solution is $\widehat{w}_j = w^{SGD}_j$. \\
When $s_j > 0$, the objective function is convex, therefore we only need to verify if $0$ is a subgradient of $f(w_j)$ at $\widehat{w}_j$. 
\begin{itemize}
    \item If $|w^{SGD}_j| > \lambda s_j$, $\widehat{w}_j = w^{SGD}_j - \lambda s_j \sgn(w^{SGD}_j)$. We can see that $\sgn(\widehat{w}_j) = \sgn(w^{SGD}_j)$, and since $\sgn(w_j)$ is a subgradient of $|w_j|$, we have $\widehat{w}_j - w^{SGD}_j + \lambda s_j \sgn(w^{SGD}_j) = 0$ as a subgradient of $f(w_j)$ at $\widehat{w}_j$.
    \item If $|w^{SGD}_j| \leq \lambda s_j$, $\widehat{w}_j = 0$. Since the subgradient set of $|w_j|$ is $[-1,1]$ at $w_j=0$, we have $0 \in [w^{SGD}_j - \lambda s_j, w^{SGD}_j + \lambda s_j] \Longleftrightarrow 0 \in [\widehat{w}_j - w^{SGD}_j - \lambda s_j, \widehat{w}_j - w^{SGD}_j + \lambda s_j]$ (the subgradient set of $f(w_j)$ at $w_j=0$).
\end{itemize}
When $s_j < 0$, the objective function is not convex, therefore we need to check the values of $f(w_j)$ at stationary points, boundary points, and non-differentiable points ($w_j=0$). Since the absolute value function $g(x)=|x|$ is $x$ when $x>0$ and $-x$ when $x<0$, we will find the possible stationary points at $w_j>0$ and $w_j<0$ separately, and $f(w_j)$ is smooth and strongly convex on each of the two parts. 

Without loss of generality, we first assume $w^{SGD}_j > 0$:
\begin{itemize}
    \item On $w_j > 0$, $f(w_j)=\frac{1}{2}\|w^{SGD}_j - w_j\|_2^2 + \lambda s_j w_j$. The stationary point is $w^{SGD}_j - \lambda s_j$ with objective function value $\frac{(\lambda s_j)^2}{2} + \lambda s_j |w^{SGD}_j - \lambda s_j|$;
    \item On $w_j < 0$, $f(w_j)=\frac{1}{2}\|w^{SGD}_j - w_j\|_2^2 - \lambda s_j w_j$. The stationary point is $ w^{SGD}_j + \lambda s_j$ (if it exists) with objective function value  $\widehat{w}_j$ is $\frac{(\lambda s_j)^2}{2} + \lambda s_j |w^{SGD}_j + \lambda s_j|$; note that if $w^{SGD}_j + \lambda s_j \geq 0$, then there is no stationary point in $(-\infty, 0)$;
    \item At $w_j = 0$, the objective function value is $\frac{(w^{SGD}_j)^2}{2}$.
\end{itemize}
Since $w^{SGD}_j > 0$ and $\lambda s_j < 0$, we have $$
\frac{(\lambda s_j)^2}{2} + \lambda s_j |w^{SGD}_j - \lambda s_j| > \frac{(\lambda s_j)^2}{2} + \lambda s_j |w^{SGD}_j + \lambda s_j|,$$
We also have $\frac{(\lambda s_j)^2}{2} + \lambda s_j |w^{SGD}_j - \lambda s_j| = \lambda s_j w^{SGD}_j - \frac{(\lambda s_j)^2}{2} < 0 < \frac{(w^{SGD}_j)^2}{2}$. 
Therefore the global minimizer of $f(w_j)$ is the right stationary point $\widehat{w}_j = w^{SGD}_j - \lambda s_j = \sgn(w^{SGD}_j) \max(0, |w^{SGD}_j| - \lambda s_j)$. Similar analysis holds for $w^{SGD}_j < 0$.
\end{proof}

\renewcommand{\thesection}{B}
\setcounter{subsection}{0}
\renewcommand{\thesubsection}{B.\arabic{subsection}}
\setcounter{equation}{0}
\renewcommand{\theequation}{B.\arabic{equation}}
\setcounter{theo}{0}
\renewcommand{\thetheo}{B.\arabic{theo}}

\section{Proof of theorem}

\subsection{Preliminary results}
The first result shows that $\bw_\gamma(t)$ converges in the functional space $D([0,T])^{d}$ for any $T>0$ in probability, where $D([0,T])$ is the space of all functions on $[0,T]$ that are right continuous with left limit. Denote $\pto$ the convergence in probability. The following result is immediate following by \cite{CC19}.
\begin{theo}[Asymptotic trajectory]\label{th:at} Suppose \ref{as:M} and \ref{as:L} hold, and the solution of gradient flow in \eqref{eq:gf} is unique, then as $\gamma\to 0$, $\bw_\gamma\pto\bw$ in $D([0,T])^d$ for any $T>0$, where $\bw(t)$ is the gradient flow.
\end{theo}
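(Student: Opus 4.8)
The plan is to prove this trajectory-level law of large numbers for \eqref{eq:grda} by the classical ODE method of stochastic approximation, adapted to the dual-averaging form of the recursion and to its vanishing soft-threshold. First I would pass to the dual variable: setting $\bu_0:=\bw_0$ and $\bu_{n+1}:=\bu_n-\gamma\nabla f(\bw_n;Z_{\hat i_{n+1}})$, one has $\bu_n=\bw_0-\gamma\sum_{k=0}^{n-1}\nabla f(\bw_k;Z_{\hat i_{k+1}})$ and, by the definition of $\Sc_{g}$, coordinatewise $|w_{n,j}-u_{n,j}|\le g(n-1,\gamma)$, hence $\|\bw_n-\bu_n\|_2\le\sqrt d\,g(n-1,\gamma)$. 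Writing $\Fc_n:=\sigma(\bw_0,\hat i_1,\dots,\hat i_n)$ and splitting $\nabla f(\bw_n;Z_{\hat i_{n+1}})=G(\bw_n)+\xi_{n+1}$ with $\xi_{n+1}:=\nabla f(\bw_n;Z_{\hat i_{n+1}})-G(\bw_n)$ an $\Fc_n$-martingale-difference sequence (by \eqref{eq:G}), and using $\gamma\sum_{k=0}^{n-1}G(\bw_k)=\int_0^{n\gamma}G(\bw_\gamma(s))\,ds$ (exact, since $\bw_\gamma$ is piecewise constant), I would arrive at the representation
\begin{equation}
\bw_\gamma(t)=\bw_0-\int_0^t G(\bw_\gamma(s))\,ds+R_\gamma(t)-M_\gamma(t),\qquad t\in[0,T],\label{eq:atplan}
\end{equation}
where $M_\gamma(t):=\gamma\sum_{k=0}^{\lfloor t/\gamma\rfloor-1}\xi_{k+1}$ is the interpolated martingale noise and $R_\gamma(t)$ collects the threshold remainder $\bw_{\lfloor t/\gamma\rfloor}-\bu_{\lfloor t/\gamma\rfloor}$ and the $O(\gamma)$ endpoint term of the Riemann sum.

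The core of the argument would then be four steps. (i) \emph{Localization:} fix $K$ with $\sup_{s\le T}\|\bw(s)\|_2<K/2$ (finite, as $\bw$ is continuous) and stop at $\tau_\gamma:=\inf\{t:\|\bw_\gamma(t)\|_2>K\}$; on $[0,\tau_\gamma\wedge T]$, Assumption \ref{as:L} gives $\E[\|\xi_{k+1}\|_2^2\mid\Fc_k]\le C_K:=\E_{\Zc}[\sup_{\|\bw\|\le K}\|\nabla f(\bw,Z)\|_2^2]<\infty$ and $\sup_{\|\bw\|_2\le K}\|G(\bw)\|_2\le\sqrt{C_K}$. (ii) \emph{Negligibility of the perturbations:} Doob's $L^2$ maximal inequality yields $\E[\sup_{t\le\tau_\gamma\wedge T}\|M_\gamma(t)\|_2^2]\le 4\gamma^2\lceil T/\gamma\rceil\,C_K=O(\gamma T)\to0$, and $\sup_{t\le\tau_\gamma\wedge T}\|R_\gamma(t)\|_2\le\sqrt d\,c\gamma^{1/2}T^\mu+\gamma\sqrt{C_K}\to0$, using $g(\lfloor t/\gamma\rfloor,\gamma)=c\gamma^{1/2}(\lfloor t/\gamma\rfloor\gamma)^\mu\le c\gamma^{1/2}T^\mu$ on $[0,T]$; both go to $0$ in probability. (iii) \emph{Tightness and identification:} from \eqref{eq:atplan} the stopped paths $\bw_\gamma(\cdot\wedge\tau_\gamma)$ have increments bounded by a Lipschitz-in-$t$ term plus an $o_P(1)$ term, so $\{\bw_\gamma(\cdot\wedge\tau_\gamma)\}_\gamma$ is tight in $D([0,T])^d$; passing to a subsequential limit $\bw^\ast$ via Skorokhod representation and using continuity of $G$ (Assumption \ref{as:M}) and of $\bw^\ast$, \eqref{eq:atplan} becomes $\bw^\ast(t)=\bw_0-\int_0^t G(\bw^\ast(s))\,ds$, i.e.\ $\bw^\ast$ solves \eqref{eq:gf}, and the assumed uniqueness of \eqref{eq:gf} forces $\bw^\ast=\bw$ on $[0,T]$. (iv) \emph{Removing the localization:} since $\bw$ stays inside $\{\|\bw\|_2<K/2\}$, $\Pe(\tau_\gamma\le T)\to0$, so the stopped and unstopped processes coincide with probability tending to $1$; as every subsequence of $\{\bw_\gamma\}$ has a further subsequence converging to the deterministic limit $\bw$, one concludes $\bw_\gamma\pto\bw$ in $D([0,T])^d$.

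The step I expect to be the main obstacle is the a priori control underlying (i)--(iii): Assumption \ref{as:M} gives only \emph{continuity} of $G$, not global Lipschitzness or coercivity, so there is no quantitative Gr\"onwall contraction and no cheap guarantee that the iterates stay bounded; this is exactly why one localizes at $\tau_\gamma$ and argues by tightness plus ODE uniqueness rather than by a direct stability estimate, and why \ref{as:L} is formulated as an $L^2$ bound on $\nabla f$ over compact sets. A shorter route, matching the text's remark that the result ``is immediate from \cite{CC19}'', is to observe that \ref{as:M}, \ref{as:L} and uniqueness of \eqref{eq:gf} are precisely the hypotheses of the trajectory-convergence theorem of \cite{CC19}, whose proof is the argument sketched above (with the soft-threshold handled via $g(\lfloor t/\gamma\rfloor,\gamma)\asymp c\sqrt\gamma\,t^\mu\to0$), and to invoke it directly.
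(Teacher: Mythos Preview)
Your proposal is correct and in fact subsumes the paper's proof: the paper's entire argument is the single sentence ``The proof is an application of Theorem 3.13(a) of \cite{CC19}'', which is exactly the ``shorter route'' you identify in your last paragraph. Your detailed ODE-method sketch (dual variable, martingale noise via Doob, vanishing threshold $g(\lfloor t/\gamma\rfloor,\gamma)\le c\sqrt{\gamma}\,T^\mu$, localization plus tightness and identification via uniqueness of \eqref{eq:gf}) is a faithful unpacking of what that cited theorem does, so there is no substantive difference in approach---you simply wrote out what the paper delegates to \cite{CC19}.
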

\vspace{-0.5em}
The asymptotic trajectory of the dual process $\bv_{n}$ and primal process $\bw_n$ are {the same}, i.e. they are both $\bw$. The key reason is that the threshold $g(n,\gamma)$ in $\Sc_{g(n,\gamma)}(\cdot)$ in \eqref{eq:grda} tends to zero: $\sup_{t\in[0,T]}\lim_{\gamma\to 0}\big|g(\itg,\gamma)\big|=0$, so $\bv = \bw$ in the limit. 

\begin{proof}[Theorem \ref{th:at}]
	The proof is an application of Theorem 3.13(a) of \cite{CC19}. 
\end{proof}


The asymptotic trajectory is deterministic, which cannot explain the stochasticity of the learning dynamics. However, in practice, the stochasticity of sampling minibatches has great influence on the quality of training. 

We investigate how the stochasticity enters \eqref{eq:grda}. \eqref{eq:grda} can be written in the stochastic mirror descent (SMD) representation \cite{OCC15}: 
\begin{align}\tag{\texttt{SMD}}
	\begin{split}\label{eq:smd}
	\bw_{n+1} &= \Sc_{g(n,\gamma)}\big(\bv_{n+1}\big),\\
	\mbox{ where }\bv_{n+1} &= \bv_n - \gamma \nabla f(\bw_n;Z_{n+1})\\
	\end{split}
\end{align}
The process $\bv_n$ is an auxiliary process in the dual space (generated by the gradients), and the primal process $\bw_n$, corresponding to the parameters of DNN, can be represented as a transformation of the dual process by $\Sc_{g(n,\gamma)}$. 

Random gradients enter $\bv_n$, while $\bw_n$ is obtained by taking a deterministic transformation of $\bv_n$. To characterize the randomness of $\bv_n$, consider $\bv_\gamma(t):=\bv_{\lfloor t/\gamma\rfloor}$ the piecewise constant interpolated process, where $\lfloor a\rfloor$ takes the greatest integer that is less than or equal to $a$. This next theorem provides the distribution of $\bv_\gamma(t)$.

\begin{theo}[Distributional dynamics]\label{th:dd}
	Suppose \ref{as:M}, \ref{as:L} and \ref{as:H} hold. 
	In addition, suppose the root of the coordinates in $\bw(t)$ occur at time $\{T_k\}_{k=1}^\infty \subset [0,\infty)$. Let $\bw_0$ with $w_{0,j}\neq 0$ (e.g. from a normal distribution) and $T_0=0$. 
	Then, as $\gamma$ is small, for $t\in (T_K,T_{K+1})$,
		\begin{align}\label{eq:vg}
			\bv_\gamma(t) \dap \bg^\dagger(t) + \bw(t) - \sqrt{\gamma} \zd(t) + \sqrt{\gamma}\int_{0}^{t} \Phi(t,s)^\top \Sigma^{1/2}(\bw(s))d\BB(s),
		\end{align}
		where $\dap$ denotes approximately in distribution, $\Sigma(\bw)$ is the covariance kernel defined in \eqref{eq:sig} and $\BB(s)$ is a $d$-dimensional standard Brownian motion, 
		and
		\begin{align}
			\bg^\dagger(t)&:=\sqrt{\gamma} c t^{\mu} \sgn(\bw(T_K^+))\label{eq:gdag}\\
			\zd(t)&:= c \mu \int_0^t s^{\mu-1} \Phi(t,s)\sgn(\bw(s)) ds + c \sum_{k=1}^K \Big\{\Phi(t,T_k) \big\{\sgn(\bw(T_k^+))-\sgn(\bw(T_k^-))\big\} T_k^\mu\Big\}  \label{eq:zd}
		\end{align}
		 $\Phi(t,s)\in\R^{d\times d}$ is the principal matrix solution (Chapter 3.4 of \cite{T12}) of the ODE system:
		\begin{align}
			d\bx(t) = -H(\bw(t)) \bx(t) dt,\quad \bx(0)=\bx_0. \label{eq:odex1}
		\end{align}
\end{theo}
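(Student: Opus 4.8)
\textbf{Proof proposal for Theorem \ref{th:dd} (Distributional dynamics).}

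The plan is to analyze the dual process $\bv_n$ from the \eqref{eq:smd} representation, since $\bv_{n+1} = \bv_n - \gamma\nabla f(\bw_n;Z_{n+1})$ is a clean stochastic recursion and $\bw_n = \Sc_{g(n,\gamma)}(\bv_n)$ is a deterministic transformation. First I would split $\bv_\gamma(t) = \bw_0 - \gamma\sum_{k=0}^{\lfloor t/\gamma\rfloor - 1}\nabla f(\bw_k;Z_{k+1})$ into its ``mean'' part and its ``fluctuation'' part by writing $\nabla f(\bw_k;Z_{k+1}) = G(\bw_k) + \big(\nabla f(\bw_k;Z_{k+1}) - G(\bw_k)\big)$. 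The fluctuation terms form a martingale-difference array with conditional covariance $\Sigma(\bw_k)$ defined in \eqref{eq:sig}; by Theorem \ref{th:at} we know $\bw_\gamma \pto \bw$, so $\Sigma(\bw_k)\to\Sigma(\bw(t))$ along the trajectory. The deterministic part $\bw_0 - \gamma\sum G(\bw_k)$ should, after subtracting the explicit soft-threshold drift $\bg^\dagger(t)$ coming from iterating $\Sc_{g(n,\gamma)}$, be shown to track the gradient flow $\bw(t)$ up to the first-order correction captured by the ODE \eqref{eq:odex1}: linearizing $G$ around $\bw(t)$ introduces the Hessian $H(\bw(t))$, and the principal matrix solution $\Phi(t,s)$ is exactly the Green's function propagating perturbations through the linearized flow.

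The core technical step is a functional central limit theorem (Donsker-type / martingale CLT) for the fluctuation part. I would apply a stochastic-approximation / weak-convergence argument in $D([0,T])^d$ (in the spirit of \cite{CC19}, Theorem 3.13(b) or its analogue), showing that the scaled martingale $\sqrt{\gamma}^{-1}\big(\bv_\gamma(t) - \text{deterministic part}\big)$ converges in distribution to the Gaussian process $\int_0^t \Phi(t,s)^\top\Sigma^{1/2}(\bw(s))\,d\BB(s)$. The propagator $\Phi(t,s)^\top$ appears because a noise increment injected at time $s$ is subsequently transported by the linearized dynamics, and $\BB$ is a $d$-dimensional standard Brownian motion arising as the scaling limit of the partial sums of the i.i.d.-like gradient-noise increments. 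The term $\sqrt{\gamma}\,\zd(t)$ in \eqref{eq:zd} collects the deterministic $O(\sqrt{\gamma})$ corrections: the piece $c\mu\int_0^t s^{\mu-1}\Phi(t,s)\sgn(\bw(s))\,ds$ comes from differentiating the tuning function $g(n,\gamma)=c\gamma^{1/2}(n\gamma)^\mu$ and propagating its effect through the linearized flow, while the jump sum $c\sum_{k=1}^K \Phi(t,T_k)\{\sgn(\bw(T_k^+))-\sgn(\bw(T_k^-))\}T_k^\mu$ accounts for sign changes of coordinates of $\bw$ at the crossing times $\{T_k\}$, which is why the statement is restricted to $t\in(T_K,T_{K+1})$ where $\sgn(\bw(\cdot))$ is locally constant and equals $\sgn(\bw(T_K^+))$.

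Finally I would assemble the pieces: the leading deterministic term is $\bw(t)$, the explicit threshold drift is $\bg^\dagger(t) = \sqrt\gamma c t^\mu \sgn(\bw(T_K^+))$ from \eqref{eq:gdag}, the deterministic correction is $-\sqrt\gamma\,\zd(t)$, and the random term is the $O(\sqrt\gamma)$ stochastic integral, giving \eqref{eq:vg}. The main obstacle I anticipate is controlling the accumulation and propagation of errors over the finite horizon $[0,T]$ uniformly enough to pass to the $\gamma\to 0$ limit: one needs tightness of the rescaled processes in $D([0,T])^d$, uniform integrability of the gradient noise (this is where \ref{as:L} enters, giving $L^2$-control of $\sup_{\|\bw\|\le K}\|\nabla f\|$), and continuity of $H$ and $\Sigma$ (assumptions \ref{as:M}, \ref{as:H}) so that the linearization around the limiting trajectory $\bw(t)$ is justified — together with handling the finitely many sign-crossing times $T_k$, where the soft-threshold operator $\Sc_{g}$ is non-smooth and the contributions to $\zd$ must be isolated carefully. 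I would reduce as much of this as possible to the general weak-convergence machinery already established in \cite{CC19}, citing the relevant theorem there and only verifying that assumptions \ref{as:M}--\ref{as:H} supply its hypotheses.
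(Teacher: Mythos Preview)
Your high-level plan---reduce to the functional CLT of \cite{CC19} and then read off the terms---matches the paper's strategy, but you have misidentified where the real work lies and how the individual pieces $\bg^\dagger$, $\zd_1$, $\zd_2$ arise. The paper does \emph{not} split $\bv_\gamma$ into a mean part and a martingale part at the discrete level and treat $\bg^\dagger$ as a separately-subtracted ``soft-threshold drift.'' Instead it defines the rescaled process $\VV_\gamma(t):=(\bv_\gamma(t)-\bw(t))/\sqrt\gamma$, invokes Theorem~3.13(b) of \cite{CC19} once to obtain that on each interval $(T_k,T_{k+1})$ the limit $\VV$ solves the linear SDE
\[
d\VV(t)=-H(\bw(t))\big[\VV(t)-\sgn(\bw(t))ct^\mu\big]dt+\Sigma^{1/2}(\bw(t))\,d\BB(t),
\]
and then the entire remainder of the proof is \emph{solving this SDE explicitly} by variation of constants. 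The soft-thresholding does not enter as an additive drift on $\bv$; it enters through the feedback $\bw_n=\Sc_g(\bv_n)\approx \bv_n - g\,\sgn(\bw)$, which after linearizing $G$ produces the $H(\bw(t))\sgn(\bw(t))ct^\mu$ term inside the SDE drift.

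The explicit solution on $(T_k,T_{k+1})$ is $\VV(t)=\Phi(t,T_k)\VV(T_k^-)+\int_{T_k}^t\Phi(t,s)H(\bw(s))\sgn(\bw(s))cs^\mu\,ds+\int_{T_k}^t\Phi(t,s)\Sigma^{1/2}(\bw(s))\,d\BB(s)$. The crucial step you are missing is an integration by parts on the middle integral using $\partial_s\Phi(t,s)=\Phi(t,s)H(\bw(s))$: this simultaneously produces the boundary term $ct^\mu\sgn(\bw(t))$ (which becomes $\bg^\dagger/\sqrt\gamma$) \emph{and} the integral $-c\mu\int_{T_k}^t s^{\mu-1}\Phi(t,s)\sgn(\bw(s))\,ds$ (which becomes $\zd_1$), together with a lower boundary term $-cT_k^\mu\Phi(t,T_k)\sgn(\bw(T_k^+))$. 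Inducting across the intervals $(T_{k-1},T_k)$ using $\Phi(t,s)=\Phi(t,r)\Phi(r,s)$ telescopes these lower boundary terms into the jump sum $\zd_2$. So your heuristic that ``$c\mu s^{\mu-1}$ comes from differentiating the tuning function'' is backwards: the $\mu s^{\mu-1}$ appears because integration by parts differentiates $s^\mu$, and $\bg^\dagger$ and $\zd$ are two halves of the same identity rather than terms with separate origins. Your proposal as written would not arrive at the precise formulas \eqref{eq:gdag}--\eqref{eq:zd} without this computation.
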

The proof follows by a functional central limit theorem in \cite{CC19} for Markov processes generated by regularized stochastic algorithms.

\begin{proof}[Theorem \ref{th:dd}]
	Consider the centered and scaled processes
		\begin{align}
			\VV_\gamma(t) &:= \frac{\bv_\gamma(t)-\bw(t)}{\sqrt{\gamma}}.\label{eq:scagrda} 
		\end{align}
By Theorem 3.13(b) of \cite{CC19}, $\VV_\gamma \dap \VV$ on $(T_k,T_{k+1})$ for each $k=0,\ldots,K$ as $\gamma$ is small, where $\VV$ obeys the stochastic differential equation (SDE): 
		\begin{align}
			d\VV(t)&=-H(\bw(t))\big[\VV(t)-\sgn(\bw(t))ct^\mu\big]dt+\Sigma^{1/2}(\bw(t)) d\BB(t),\label{eq:Vgrda}
		\end{align}
		with initial $\VV(T_k) = \VV(T_k^-)$, and $\BB(t)$ is the $d$-dimensional standard Brownian motion. Note that $\VV(T_0)=\VV(0)=\VV_\gamma(0)=0$ almost surely.

	Under condition \ref{as:H}, the global Lipschitz and linear growth conditions hold, so there exists a unique strong solution of \eqref{eq:Vgrda} by Theorem 5.2.9 of \cite{KS98}. 

	In addition, by condition \ref{as:H}, the solution operator $\Phi(t,s)$ of the inhomogeneous ODE system,
	\begin{align}
		d\bx(t) = -H(\bw(t)) \bx(t) dt,\quad \bx(s)=\bx_s,\label{eq:asymode}
	\end{align}
	uniquely exists, and the solution is $\bx(t)=\Phi(t,s)\bx_s$ by Theorem 5.1 of \citep{P83}. $\Phi(t,s)$ satisfies the properties in Theorem 5.2 of \citep{P83}; in particular, for all $0<s<r<t$, 
	\begin{align}
		(s,t) &\mapsto \Phi(t,s) \mbox{ is continuous} \label{eq:opconti}\\
		\Phi(t,t)&=I_d  \label{eq:op0} \\
		\frac{\partial}{\partial t}\Phi(t,s) &= -H(\bw(t))\Phi(t,s), \label{eq:op1}\\
		\frac{\partial}{\partial s}\Phi(t,s) &= \Phi(t,s) H(\bw(s)),\label{eq:op2}\\
		 \Phi(t,s)&=\Phi(t,r)\Phi(r,s).\label{eq:combine}
	\end{align}
	Recall from \eqref{eq:Vgrda} that for $t\in(T_k,T_{k+1})$,
	\begin{align}
		d\VV(t)&=-H(\bw(t)) \big[\VV(t)-\sgn(\bw(t))ct^\mu\big]dt+\Sigma^{1/2}(\bw(t)) d\BB(t),\label{eq:Vrepeat}
	\end{align}
	with initial distribution $\VV(T_k^-)$. It can be verified by \eqref{eq:op1} and Ito calculus that for $t\in (T_k,T_{k+1})$,
	\begin{align}
		\VV(t) = \Phi(t,T_k) \VV(T_k^-) + \int_{T_k}^{t} \Phi(t,s) H(\bw(s)) \sgn(\bw(s)) cs^\mu ds + \int_{T_k}^{t} \Phi(t,s) \Sigma^{1/2}(\bw(s)) d\BB(s) \label{eq:solV}
	\end{align}
	is the solution of \eqref{eq:Vrepeat}. 

	Integration by part, \eqref{eq:op2} and \eqref{eq:op0} yield that
	\begin{align*}
		&\int_{T_k}^{t} \Phi(t,s) H(\bw(s)) \sgn(\bw(s)) cs^\mu ds \\
		&= c t^{\mu}\sgn(\bw(t)) - c T_k^{\mu}\Phi(t,T_k^+)\sgn(\bw(T_k^+)) - \int_{T_k}^t \Phi(t,s) c \mu s^{\mu-1} \sgn(\bw(s)) ds.
	\end{align*}
	If $t>T_K$, last display, induction, \eqref{eq:combine} and \eqref{eq:solV} imply
	\begin{align*}
		&\VV(t)\\
		&= \Phi(t,T_K) \VV(T_{K}^-) + \int_{T_K}^{t} \Phi(t,s) H(\bw(s)) \sgn(\bw(s)) cs^\mu ds + \int_{T_K}^{t} \Phi(t,s) \Sigma^{1/2}(\bw(s)) d\BB(s) \\
			&= c t^{\mu}\sgn(\bw(T_K^+)) + \Phi(t,T_K)\VV(T_{K}^-) - c T_K^{\mu}\Phi(t,T_K) \sgn(\bw(T_K^+)) \\
			&\quad\quad - \int_{T_K}^{t} \Phi(t,s) c \mu s^{\mu-1} \sgn(\bw(s)) ds+ \int_{T_K}^{t} \Phi(t,s) \Sigma^{1/2}(\bw(s)) d\BB(s)\\
			&= c t^{\mu}\sgn(\bw(T_K^+)) + \Phi(t,T_{K-1})\VV(T_{K-1}^-) - cT_K^{\mu} \Phi(t,T_K) \{\sgn(\bw(T_K^+))-\sgn(\bw(T_K^-))\}\\
			&\quad \quad-cT_{K-1}^{\mu} \Phi(t,T_{K-1}) \sgn(\bw(T_{K-1}^+))- \int_{T_{K-1}}^{t} \Phi(t,s) c \mu s^{\mu-1}\sgn(\bw(s)) ds\\
			&\quad \quad + \int_{T_{K-1}}^{t} \Phi(t,s) \Sigma^{1/2}(\bw(s)) d\BB(s)\\
			&\quad \quad\quad \quad\quad \quad\vdots\\
			&= c t^{\mu}\sgn(\bw(T_K^+)) + \Phi(t,T_{K-1})\underbrace{\VV(0)}_{\mbox{\scriptsize $=0$ a.s.}} - \zd(t) + \int_0^t \Phi(t,s) \Sigma^{1/2}(\bw(s)) d\BB(s),
	\end{align*}
	where $\zd(t)=\zd_1(t)+\zd_2(t)$ with
	\begin{align}
		\begin{split}\label{eq:zddecom}
			\zd_1(t)&:= c \mu \int_0^t s^{\mu-1} \Phi(t,s)\sgn(\bw(s)) ds,\\
			\zd_2(t)&:= c \sum_{k=1}^K \Big\{\Phi(t,T_k) \big\{\sgn(\bw(T_k^+))-\sgn(\bw(T_k^-))\big\} T_k^\mu\Big\}.
		\end{split}
	\end{align}
\end{proof}

\subsection{Proof of Theorem \ref{th:dp}}\label{sec:pfdp}

By virtue of Remark \ref{rem:def_dp}, it is enough to show that
	\begin{align}
		w_{\gamma,j}(t) \dap  \sgn\{w_j^{SGD}(t)\} \big\{|w_j^{SGD}(t)| - \lambda_{\gamma,t} \bar s_j\big\}_+
		\end{align}
		where $\lambda_{\gamma,t}=c \sqrt{\gamma}t^\mu$, and $\bar s_j = s_j + o(1)$. This is implied by the following theorem.
\begin{theo}\label{prop:purt}
	 Suppose \ref{as:M}-\ref{as:sign} hold. Assume that $\mu\in(0.5,1)$ and $c>0$ in \eqref{eq:tune}. In addition, if $\sgn\{w_j(t)\}=\sgn\{w_j^{SGD}(t)\}$ for all $j=1,\ldots,d$, then, for a sufficiently large $t>\bar T$, 
	\begin{align}
		w_{\gamma,j}(t) \dap  \sgn\{w_j^{SGD}(t)\} \big\{|w_j^{SGD}(t)| - \sgn\{w_j^{SGD}(t)\}\sqrt{\gamma}\delta_j(t)\big\}_+ \label{eq:sgdst}
		\end{align}
		where $\zd(t)$ has an explicit form in \eqref{eq:zddecom} in the appendix, and satisfies as $t\to\infty$,
		\begin{align} 
			\zd(t) = c t^\mu \Pi_0 \sgn(\bw(t)) + o(t^\mu) + O(t^{\mu-1}), \label{eq:eig_H1}
		\end{align}
		and $\Pi_0$ is the orthogonal projection on the eigenspace corresponding to zero eigenvalues of $\bar H$.
\end{theo}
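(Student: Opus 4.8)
The statement bundles two claims: the perturbation identity \eqref{eq:sgdst} for $w_{\gamma,j}(t)$, and the asymptotic expansion \eqref{eq:eig_H1} of the drift $\zd(t)$. The plan is to prove them separately. The first falls out quickly from the distributional dynamics already in hand (Theorem~\ref{th:dd}) once one tracks the soft-thresholding operator $\Sc$; the second is the real content and requires an asymptotic analysis, as $t\to\infty$, of the principal matrix solution $\Phi(t,s)$ of \eqref{eq:odex} through the spectral decomposition of the limiting Hessian $\bar H$ from \ref{as:H}.

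\textbf{Step 1: the perturbation identity.} I would start from Theorem~\ref{th:dd}, which gives, for $t>\bar T$, $\bv_\gamma(t)\dap\bg^\dagger(t)+\bw(t)-\sqrt\gamma\,\zd(t)+M_\gamma(t)$, where $M_\gamma(t)$ denotes the Brownian integral appearing in \eqref{eq:vg}. Running the same functional CLT with $c=0$ (i.e.\ \eqref{eq:grda} degenerating to \eqref{eq:sgd}) kills $\bg^\dagger$ and $\zd$ and yields $\bw^{SGD}(t)\dap\bw(t)+M_\gamma(t)$ with the \emph{same} $M_\gamma(t)$. Since both processes are driven by the common minibatch sequence $\{\hat i_k\}$, the joint convergence underlying Theorem~\ref{th:dd} gives $\bv_\gamma(t)=\bw^{SGD}(t)+\bg^\dagger(t)-\sqrt\gamma\,\zd(t)+o_P(\sqrt\gamma)$. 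For $t>\bar T\ge T_K$, \ref{as:sign}(i) forces $\sgn(\bw(T_K^+))=\sgn(\bw(t))$, so by \eqref{eq:gdag} $\bg^\dagger(t)=c\sqrt\gamma\,t^\mu\sgn(\bw(t))$; in the $\gamma\to0$ limit this cancels exactly against the threshold $g(\lfloor t/\gamma\rfloor,\gamma)=c\sqrt\gamma(\lfloor t/\gamma\rfloor\gamma)^\mu$ of \eqref{eq:grda} applied coordinatewise. By Theorem~\ref{th:at}, $w_j^{SGD}(t)\pto w_j(t)\neq0$ for $t>\bar T$, so for small $\gamma$ we have $\sgn(v_{\gamma,j}(t))=\sgn(w_j(t))=\sgn(w_j^{SGD}(t))$, the last equality being the hypothesis of the theorem. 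Writing $w_{\gamma,j}(t)=\sgn(v_{\gamma,j}(t))(|v_{\gamma,j}(t)|-g)_+$, substituting $|v_{\gamma,j}(t)|=\sgn(w_j^{SGD}(t))\,v_{\gamma,j}(t)$, and cancelling $c\sqrt\gamma t^\mu$ against $g$ produces \eqref{eq:sgdst} (which, by Remark~\ref{rem:def_dp}, is the coordinatewise minimizer of the directional-pruning objective).

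\textbf{Step 2: asymptotics of the drift.} Decompose $\zd=\zd_1+\zd_2$ as in \eqref{eq:zddecom}. By \ref{as:sign}(i) the sign changes of $\bw$ happen at finitely many times $T_k\le\bar T$, so $\zd_2(t)$ is a finite sum of $T_k^\mu\,\Phi(t,T_k)$ times bounded vectors; since $\sup_{0\le s\le t}\|\Phi(t,s)\|$ is bounded in $t$ (shown next), $\zd_2(t)=O(1)=o(t^\mu)$. For $\zd_1(t)=c\mu\int_0^t s^{\mu-1}\Phi(t,s)\sgn(\bw(s))\,ds$, I would compare $\Phi(t,s)$ with the semigroup $e^{-(t-s)\bar H}$: Duhamel's identity gives $\Phi(t,s)-e^{-(t-s)\bar H}=-\int_s^t e^{-(t-r)\bar H}(H(\bw(r))-\bar H)\Phi(r,s)\,dr$, and since $\bar H$ is symmetric non-negative definite (so $\|e^{-u\bar H}\|\le1$) and $\int_s^\infty\|H(\bw(r))-\bar H\|\,dr\to0$ by \ref{as:H}, a Gr\"onwall estimate gives both $\sup_{0\le s\le t}\|\Phi(t,s)\|<\infty$ and $\|\Phi(t,s)-e^{-(t-s)\bar H}\|\le C\varepsilon(s)$ with $\varepsilon(s)\to0$. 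Split $\R^d=\Pc_0\oplus\Pc_0^\perp$ via the spectral projection $\Pi_0$ onto the zero-eigenvalue eigenspace of $\bar H$. On $\Pc_0$, $e^{-(t-s)\bar H}\Pi_0=\Pi_0$, and \ref{as:sign}(i) freezes $\sgn(\bw(s))=\sgn(\bw(t))$ for $s>\bar T$, so this part contributes $c\mu\int_{\bar T}^t s^{\mu-1}\,ds\,\Pi_0\sgn(\bw(t))=c t^\mu\Pi_0\sgn(\bw(t))+O(1)$. On $\Pc_0^\perp$, $\|e^{-(t-s)\bar H}(I-\Pi_0)\|\le Ce^{-\kappa(t-s)}$ with $\kappa>0$ the smallest positive eigenvalue of $\bar H$, and since $\mu<1$ concentrates the integrand near $s=t$, this part contributes $O(t^{\mu-1})$. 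The errors from replacing $\Phi$ by the semigroup and from the initial segment $s\le\bar T$ add up to at most $c\mu\int_0^t s^{\mu-1}\varepsilon(s)\,ds=o(t^\mu)$, by cutting the integral at a fixed large level and using $\varepsilon(s)\to0$ together with $\int_0^{\bar T}s^{\mu-1}\,ds<\infty$. Summing the three contributions gives \eqref{eq:eig_H1}, and \ref{as:pms} is what certifies that the residual integral is of the claimed order, sparing a more elaborate dichotomy argument.

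\textbf{Expected main obstacle.} The delicate point is Step 2: controlling $\Phi(t,s)$ uniformly over the entire range $0\le s\le t$ as $t\to\infty$, cleanly separating the flat directions (on which $\Phi$ remains bounded and tends to $\Pi_0$) from the stable ones (on which $\Phi$ contracts), using only the \emph{integrated} closeness $\int_0^\infty\|H(\bw(s))-\bar H\|\,ds<\infty$ of \ref{as:H} rather than a pointwise convergence rate. This is a robustness-of-exponential-dichotomy estimate, and \ref{as:pms} is imposed precisely so that the leftover terms in this estimate are controlled. Step 1, in contrast, is essentially bookkeeping once Theorem~\ref{th:dd} is available.
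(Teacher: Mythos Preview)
Your proposal is correct and reaches the same conclusions, but for the asymptotics of $\zd(t)$ (your Step~2, the paper's Step~3) you take a genuinely different route. The paper invokes the classical Levinson theorem from the asymptotic theory of linear ODEs (Theorem~1.8.1 of \cite{E89}) to obtain the one-shot expansion $\Phi(t,s)=P_0P_0^\top+O(e^{-\underline\lambda(t-s)})+O(a_t)$, and then integrates. You instead bootstrap the comparison $\Phi(t,s)\approx e^{-(t-s)\bar H}$ directly via Duhamel's formula together with a Gr\"onwall bound on $\sup_{s\le t}\|\Phi(t,s)\|$, using only the integrability $\int_0^\infty\|H(\bw(s))-\bar H\|\,ds<\infty$ of \ref{as:H}. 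Your argument is more self-contained and elementary, avoiding an external reference; the paper's Levinson route gives a cleaner asymptotic formula for $\Phi$ in a single step and then does a slightly more involved Abel-summation identity on $\zd_2$ to achieve an exact cancellation of the $T_K^\mu$ terms with $\zd_1$, whereas you simply absorb $\zd_2$ into $O(1)$. Both give \eqref{eq:eig_H1}.

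Two small remarks. First, your sign argument in Step~1 (via $v_{\gamma,j}(t)\pto w_j(t)\neq0$ at fixed $t$) is a bit more direct than the paper's, which instead compares $|\bg^\dagger(t)|$ against $\sqrt\gamma|\zd(t)|$ and $\sqrt\gamma|\bU(t)|$ and explicitly invokes $\mu>0.5$ to dominate the $t^{1/2}$ growth of the Gaussian integral; this is where the paper needs ``$t$ sufficiently large'', while your version works at any fixed $t>\bar T$ as $\gamma\to0$. Second, your closing appeal to \ref{as:pms} in Step~2 is a bit off: taken literally, \ref{as:pms} says $\zd_1(t)=o(t^\mu)$, which is stronger than (and in tension with) the leading term $ct^\mu\Pi_0\sgn(\bw(t))$ you have just derived. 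In the paper \ref{as:pms} is used only in Step~1 for the sign/magnitude comparison, not in the Step~3 expansion; your Duhamel argument stands on its own without it, so you can simply drop that sentence.
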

	
The proof of \eqref{eq:sgdst} will be based on Theorem \ref{th:dd}, and \eqref{eq:eig_H1} relies on the Levinson theorem, which provides asymptotic solution of the ODE in \eqref{eq:odex}. 

\begin{proof}[Theorem \ref{prop:purt}]
	From \eqref{eq:vg}, 
	\begin{align*}
		\bv_\gamma(t) \dap \bg^\dagger(t) + \bw(t) - \sqrt{\gamma}\zd(t)+ \sqrt{\gamma}\bU(t).
	\end{align*}
	where we recall $\bg^\dagger(t):=\sqrt{\gamma} c t^{\mu} \sgn(\bw(T_K^+))$ and $\zd(t)$ in \eqref{eq:zd}.
	\begin{align}
		\bU(t):=\int_{0}^{t} \Phi(t,s)^\top \Sigma^{1/2}(\bw(s))d\BB(s).
	\end{align}
	\begin{itemize}[itemindent=33pt,leftmargin=0pt]
		\labitem{Step 1}{rwdst1} {\bf Show $\sgn\{\bv_\gamma(t)\}=\sgn\{\bg^\dagger(t)\}$ with high probability.} 

	The goal of this step is achieved if we show
	\begin{align}
		|\bg^\dagger(t)| > \sqrt{\gamma} \big|-\zd(t)+\bU(t)\big| \quad\mbox{($|\cdot|$ and $>$ are componentwise)}. \label{eq:pfrwd1}
	\end{align}
	To this end, we will show that
	\begin{align}
		|\bg^\dagger(t)|-\sqrt{\gamma}\big|\zd(t)\big|>\sqrt{\gamma} \big|\bU(t)\big|. \label{eq:pfrwd1_5}
	\end{align}
	Clearly, this implies \eqref{eq:pfrwd1}.

	Recall that $\zd(t)=\zd_1(t)+\zd_2(t)$ where $\zd_j$'s are defined in \eqref{eq:zddecom}. By \ref{as:pms},
	\begin{align}
		|g_j^\dagger(t)|-|\delta_{1,j}(t)| 
		&\geq c_1 t^{\mu},
	\end{align}
	for some $c_1>0$. On the other hand, $\delta_{2,j}(t)$ is defined in \eqref{eq:zddecom}. By \ref{as:pms} and \ref{as:sign},
	$$
	|\delta_{2,j}(t)| < C \bar T^\mu.
	$$
	Hence, $|\delta_{2,j}(t)|=O(1)$. 

From above, we get
\begin{align*}
	|\bg^\dagger(t)|-\sqrt{\gamma}\big|\zd(t)\big|> c_0 t^\mu.
\end{align*}
Using the fact that $\bU(t)$ as a Gaussian process grows like $t^{1/2}$ up to multiplicative logarithmic terms with high probability (Theorem D.4 of \cite{P96}), $\bU(t)$ is dominated by $c_0 t^\mu$ with $\mu>0.5$, the proof of \eqref{eq:pfrwd1_5} is complete.

		\labitem{Step 2}{rwdst2} {\bf Proof of \eqref{eq:sgdst}.}
	
	By the formulation \eqref{eq:grda}, $\bw_\gamma(t) = \Sc_{\bg^\dagger(t)}(\bv_\gamma(t))$. Hence, for $j\leq d$, note that $\bg^\dagger(t)=(g_1^\dagger(t),\ldots,g_d^\dagger(t))$, and the piecewise constant process of SGD (with the same minibatch sequence as gRDA):
	 \begin{align}
	 	w_{\gamma,j}^{SGD}(t) \dap w_j(t) + \sqrt{\gamma} U_j(t),\label{eq:sgdweak}
	 \end{align}
	 which can be obtained under the same assumptions in this Theorem by \cite{BKS93,BMP90}. 

	Hence,
	 \begin{align*}
	 	w_{\gamma,j}(t) &\dap \Sc_{g_j^\dagger(t)}\{v_{\gamma,j}(t)\} \\
		&\dap
	 	\begin{dcases}
	 		w_{\gamma,j}^{SGD}(t)-\sqrt{\gamma}\delta_j(t), &\mbox{ if }\sgn(w_j(t))\big\{w_j(t) -\sqrt{\gamma}\delta_j(t) + \sqrt{\gamma}  U_j(t)\big\}>0,\\
	 		0, &\mbox{otherwise}.
	 	\end{dcases}
	 \end{align*}

	For \eqref{eq:sgdst}, if $\sgn\{w_j(t)\}=\sgn\{w_j^{SGD}(t)\}$, $\sgn(w_j(t))\big\{w_j(t) -\sqrt{\gamma}\delta_j(t) + \sqrt{\gamma}  U_j(t)\big\}>0$ can be rewritten by using \eqref{eq:sgdweak} that
	\begin{align*}
		|w_{\gamma,j}^{SGD}(t)|>\sgn(w_j^{SGD}(t))\sqrt{\gamma}\delta_j(t).
	\end{align*}
	Thus, \eqref{eq:sgdst} follows.
\labitem{Step 3}{pfpurt2} {\bf Proof of \eqref{eq:eig_H1}.} The proof relies on the Levinson theorem \cite{E89} from the theory of asymptotic solution of ordinary differential equations. Note that $\bar H$ as a real symmetric matrix is diagonalizable, i.e. there exists orthonormal matrix $P$ and diagonal matrix $\Lambda$ with non-negative values such that $\bar H = P \Lambda P^{\top}$, where $\Lambda=\diag(\lambda_1,\ldots,\lambda_d)$, and the column vectors of $P$ are eigenvectors $\ub_j$. 

Let $a_t\to 0$ satisfying
\begin{align*}
	\int_t^\infty\|H(\bw(s))- \bar H\| ds = O(a_t).
\end{align*}

The Levinson theorem (Theorem 1.8.1 on page 34 of \cite{E89}), together with the estimation of the remainder term on page 15-16 of \cite{E89}, imply that the principal matrix solution $\Phi(t,s)$ in \eqref{eq:asymode} satisfies
\begin{align}
	\Phi(\tau,s) = P \big(I_d + O(a_\tau)\big) e^{-\Lambda(\tau-s)} P^\top = P_0 P_0^\top + O(e^{-\underline\lambda (\tau-s)}) + O(a_\tau),\label{eq:pms}
\end{align}
where $\underline\lambda$ is the least positive eigenvalue of $\bar H$, the column vectors of $P_0$ are eigenvectors associated with the zero eigenvalue. Clearly, $P_0P_0^\top=\sum_{j:\bar H\ub_j=0} \ub_j\ub_j^\top$.

Recall the time $\{T_k\}_{k=1}^\infty$ defined in Theorem \ref{th:dd}. By the condition of this Proposition, there exists $K\in\N$ such that $\sgn(\bw(t))=\sgn(\bw(T_K))$ for all $t>T_K$. Recall that $\zd(t)=\zd_1(t)+\zd_2(t)$ where $\zd_1(t)$ and $\zd_2(t)$ are defined in \eqref{eq:zddecom}. Then
\begin{align}
	\zd_2(t) &= c P_0P_0^\top \sum_{k=1}^K \big\{\sgn(\bw(T_k^+))-\sgn(\bw(T_k^-))\big\} T_k^\mu + O(e^{-\underline\lambda (t-T_K)}T_K^\mu)+O(a_t T_K^\mu)\notag \\
	 &= -c \mu P_0P_0^\top \Big(\int_0^{T_K} s^{\mu-1} \sgn(\bw(s)) ds - T_K^\mu \sgn(\bw(T_K^+))\Big) + O(e^{-\underline\lambda (t-T_K)}T_K^\mu)+O(a_t T_K^\mu). \label{eq:estd2}
\end{align}

On the other hand, inputing \eqref{eq:pms} into $\zd_1$,
\begin{align}
	\zd_1(t) &= c \mu \int_0^t s^{\mu-1} \Phi(t,s)\sgn(\bw(s)) ds \notag \\
			&= c \mu P_0 P_0^\top \int_0^t s^{\mu-1} \sgn(\bw(s)) ds + I(t) + II(t),\label{eq:estd1}
\end{align}
and note that
\begin{align*}
	I(t) &\lesssim \int_0^t s^{\mu-1} e^{-\underline\lambda (t-s)}\|\sgn(\bw(s))\| ds \leq d^{1/2} \int_0^t s^{\mu-1} e^{-\underline\lambda (t-s)} ds = O(t^{\mu-1}),\\
	II(t) &\lesssim  \int_0^t s^{\mu-1}a_t \sgn(\bw(s)) ds = O(t^\mu a_t),
\end{align*}
where the bound of $I$ is obtained by using similar arguments as the proof of Theorem 4.2 of \cite{CC19} provided that $\mu<1$. The bound for $II(t)$ is elementary.

Note that $t^\mu a_t > T_K^\mu a_t$ by $\mu>0$ and $t>T_K$, and that $e^{-\underline\lambda (t-T_K)}\to 0$ exponentially in $t$ as $T_K$ is fixed. Combining \eqref{eq:estd2} and \eqref{eq:estd1} yield
\begin{align*}
	\zd_1(t) + \zd_2(t) = c t^\mu P_0 P_0^\top \sgn(\bw(t)) + O\big(\max\big\{t^\mu a_t, t^{\mu-1}\big\}\big),
\end{align*}
where $P_0P_0^\top$ is a projection matrix projecting on the subspace spanned by the eigenvectors corresponding to zero eigenvalues. Set $\Pi_0 = P_0P_0^\top$.
\end{itemize}
\end{proof}

\renewcommand{\thesection}{C}
\setcounter{subsection}{0}
\renewcommand{\thesubsection}{C.\arabic{subsection}}

\section{Algorithms for implementation}

\subsection{Basic version with a constant learning rate}\label{app:algorithm}
\begin{algorithm}[H]
\SetKwInOut{Parameter}{Hyperparameters}
\SetKwInOut{Initialization}{Initialization}
 \Parameter{$\gamma$: learning rate}
 \Parameter{$c\in[0,\infty], \mu\in(0.5, 1)$: fixed parameters in $g(n,\gamma)= c \gamma^{1/2}(n\gamma)^\mu$}
 \Initialization{$n\leftarrow 0$: iteration number}
 \Initialization{$\bw_{0}$: initial parameters}
 \Initialization{$G_0\leftarrow \bw_{0}$: accumulator of gradients}
 \While{Testing accuracy not converged}{
  $n\leftarrow n+1$\;
  $G_n\leftarrow G_{n-1} + \gamma \nabla f_j(\bw_{n-1};Z_{n})$\;
  $\bw_{n}\leftarrow \mathrm{sign}(G_n) \max(0, |G_n| - g(n,\gamma))$ \tcp*{entry-wise soft-thresholding} 
 }
 \caption{Generalized Regularized Dual Averaging (gRDA) with $\ell_1$ penalty}\label{alg1}
\end{algorithm}

\subsection{Modified tuning function for constant-and-drop learning rate}\label{sec:cd}
In practice, a ``constant-and-drop'' learning rate schedule is usually adopted. For example, the default learning rate schedule in the PyTorch implementation of ResNet on ImageNet is divided by 10 folds for every 30 epochs.\footnote{\url{https://github.com/pytorch/examples/blob/234bcff4a2d8480f156799e6b9baae06f7ddc96a/imagenet/main.py\#L400}} In this case, we replace Algorithm \ref{alg1} by Algorithm \ref{alg2} below, where we set the solf-thresholding level $\tilde{g}(n)$ that accumulates the increments of $g(n,\gamma)$ at every iteration.


\begin{algorithm}[H]
\SetKwInOut{Parameter}{Hyperparameters}
\SetKwInOut{Initialization}{Initialization}
 \Parameter{$\{\gamma_n\}$: learning rate schedule}
 \Parameter{$c\in[0,\infty], \mu\in(0.5, 1)$: fixed parameters in $g(n,\gamma)= c \gamma^{1/2}(n\gamma)^\mu$}
 \Initialization{$n\leftarrow 0$: iteration number}
 \Initialization{$\bw_{0}$: initial parameters}
 \Initialization{$G_0\leftarrow \bw_{0}$: accumulator of gradients}
 \Initialization{$\tilde{g}(0)\leftarrow 0$: accumulator of thresholds}
 \While{Testing accuracy not converged}{
  $n\leftarrow n+1$\;
  $G_n\leftarrow G_{n-1} + \gamma_n \nabla f_j(\bw_{n-1};Z_{n})$\;
  $\tilde{g}(n)\leftarrow \tilde{g}(n-1) + (g(n,\gamma_n) - g(n-1,\gamma_n))$ \tcp*{threshold increment for $\gamma_n$} 
  $\bw_{n}\leftarrow \mathrm{sign}(G_n) \max(0, |G_n| - \tilde{g}(n))$ \;
 }
 \caption{gRDA with constant-and-drop learning rates}\label{alg2}
\end{algorithm}

\renewcommand{\thesection}{C}
\setcounter{subsection}{0}
\renewcommand{\thesubsection}{C.\arabic{subsection}}
\setcounter{equation}{0}
\renewcommand{\theequation}{C.\arabic{equation}}
\setcounter{theo}{0}
\renewcommand{\thetheo}{C.\arabic{theo}}

\section{Details on numerical analysis}
We did all experiments in this paper using servers with 2 GPUs (Nvidia Tesla P100 or V100, 16GB memory), 2 CPUs (each with 12 cores, Intel Xeon Gold 6126), and 192 GB memory. We use PyTorch \cite{NEURIPS2019_9015} for all experiments.

\subsection{Details for experiments on ImageNet}\label{ap:imagenet_details}
We use the codes from PyTorch official implementation\footnote{\url{https://github.com/pytorch/examples/blob/234bcff4a2d8480f156799e6b9baae06f7ddc96a/imagenet/main.py}} 
of training ResNet-50 on ImageNet. 
The batch size used in all ImageNet experiments is 256 (the default value for training ResNet-50) and the data preprocessing module in the original codes is used as well. We follow the separation of training and validation dataset in the official setting of \texttt{ILSVRC2012} task (1281167 images in training and 50000 images in validation).

Figure \ref{fig:resnet50} presents the training accuracy, testing accuracy as well as sparsity.
Note that the state-of-the-art performance of ResNet50 on ImageNet (top-1 accuracy 77.15\% \cite{HZRS15}) using the SGD with momentum and weight decay is higher than the basic SGD (top-1 accuracy around 68\% as shown in Figure \ref{fig:resnet50}). This is because we fix the learning rate at 0.1, and run SGD without momentum or weight decay. 
Compared with the SGD, gRDA has a lower training accuracy but a slightly higher testing accuracy. When we increase $\mu$, the training accuracy decreases since larger $\mu$ induces 
higher sparsity. However, the testing accuracy for all choices of $\mu$'s in gRDA are similar.  

\begin{figure}[H]
    \begin{minipage}{0.5\textwidth}
    	\includegraphics[width=\textwidth, trim={0 0 0 45}, clip]{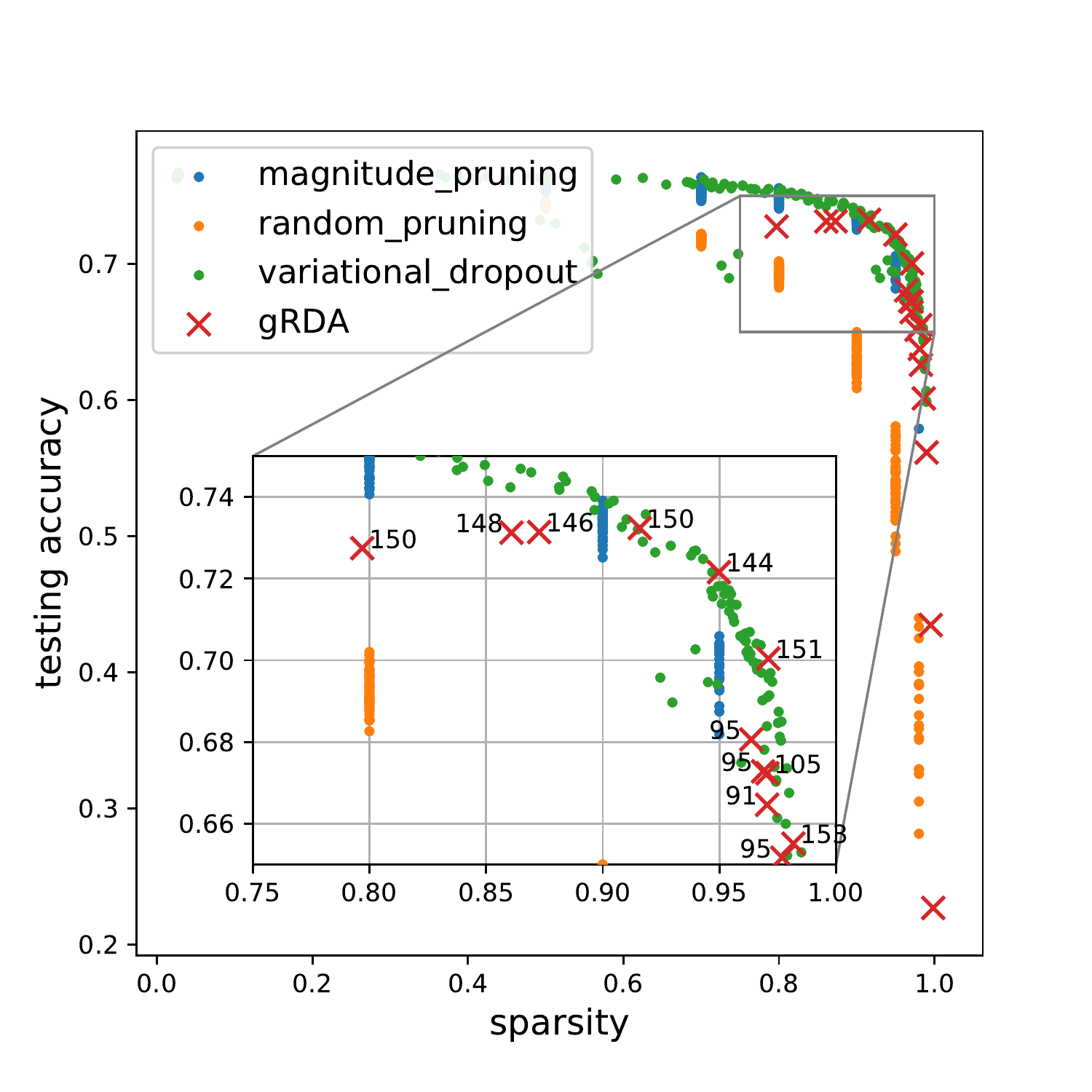}
        \caption{A comparison of gRDA with the magnitude pruning \cite{ZG17} and variational dropout \cite{pmlr-v70-molchanov17a} with ResNet50 on ImageNet. The numbers next to the red crosses are the epochs. }\label{fig:app_compareresnet50}
    \end{minipage}
    \hfill
    \begin{minipage}{0.48\textwidth}
        \footnotesize
        \captionof{table}{The parameters for gRDA in Figure \ref{fig:app_compareresnet50}. }\label{tab:app_compareresnet50}
    	\begin{tabular}{lllll}
    	    \toprule
            $c$     & $\mu$    & Epoch & \makecell{Sparsity \\ (\%)} & \makecell{Test Acc. \\ (\%)} \\\midrule
            \multicolumn{5}{c}{PyTorch Official Learning Rate}\vspace{0.05in}      \\
            0.005 & 0.85  & 85  & 99.84 & 22.69  \\
            0.005 & 0.8   & 90  & 99.51 & 43.47 \\
            0.005 & 0.65  & 91  & 97.05 & 66.46 \\
            0.005 & 0.75  & 92  & 98.99 & 56.15 \\
            0.005 & 0.7   & 94  & 98.26 & 62.60 \\
            0.004 & 0.7   & 95  & 97.69 & 65.17 \\
            0.003 & 0.7   & 95  & 96.87 & 67.28  \\
            0.004 & 0.65  & 95  & 96.36 & 68.06 \\
            0.003 & 0.75  & 103 & 98.10 & 63.76  \\
            0.002 & 0.75  & 105 & 97.06 & 67.23 \\
            0.004 & 0.75  & 121 & 98.62 & 60.12 \\\midrule
            \multicolumn{5}{c}{Only Drop at Epoch 140}  \vspace{0.05in}    \\
            0.005 & 0.6   & 144 & 94.98 & 72.16 \\
            0.005 & 0.51  & 146 & 87.28 & 73.14  \\
            0.005 & 0.501 & 148 & 86.09 & 73.13 \\
            0.005 & 0.55  & 150 & 91.60 & 73.24  \\
            0.01  & 0.4   & 150 & 79.69 & 72.75 \\
            0.005 & 0.65  & 151 & 97.10 & 70.04 \\
            0.005 & 0.7   & 153 & 98.17 & 65.51 \\\bottomrule
        \end{tabular}
    \end{minipage}
\end{figure}

The left panel of Figure \ref{fig:compareresnet50} is reproduced from the bottom panel of Figure 3 in \cite{GEH19}, and we add the results of gRDA which are marked by the red crosses. The gRDA is performed using a ``constant-and-drop'' learning rate schedule. Concretely, $\gamma = 0.1$ for epoch 1 to 140, and $\gamma = 0.01$ for epoch after 140. Figure \ref{fig:app_compareresnet50} provides additional results of the gRDA using the learning rate schedule given in the PyTorch official implementation:\footnote{\url{https://github.com/pytorch/examples/blob/234bcff4a2d8480f156799e6b9baae06f7ddc96a/imagenet/main.py\#L400}} 
\begin{itemize}
	\item $\gamma = 0.1$ for epoch 1 to 30
	\item $\gamma = 0.01$ for epoch 31 to 60
	\item $\gamma = 0.001$ for epoch 61 to 90, and $\gamma = 0.0001$ for epoch after 90
\end{itemize}
We found that the gRDA relatively underperforms with this learning rate schedule. This schedule for the ImageNet is usually applied jointly with the SGD with Polyak's momentum. As we find that SGD without momentum only yields a test accuracy of 68.76\% for ImageNet-ResNet50 under this learning rate schedule, we suspect that the absence of momentum in \eqref{eq:grda} could be a reason for the underperformance.

The right panel of Figure \ref{fig:compareresnet50} shows the layerwise sparsity using several different pruning methods. The results of AutoML for Model Compression are from stage4 in Figure 3 of \cite{He18}. And the results of Variational Dropout are from \cite{GEH19}\footnote{\url{https://github.com/google-research/google-research/tree/master/state_of_sparsity}} and we choose the one with 90\% sparsity. The results of Erd\H{o}s-R\'enyi-Kernel are from Figure 12 (90 \% Sparse ERK, i.e. the subfigure on right) in \cite{evci19}.



\subsection{Settings of training models on CIFAR-10 and CIFAR-100} \label{append:cifar}
The two datasets CIFAR-10 and CIFAR-100 are described in \cite{krizhevsky2009learning}. Particularly, we follow the separation of training and validation dataset in the official setting (50000 images in training and 10000 images in validation for both CIFAR-10 and CIFAR-100). For our experiments on CIFAR-10 and CIFAR-100, we mostly follow the codes of \cite{Garipov18}.\footnote{\url{https://github.com/timgaripov/dnn-mode-connectivity}}
The batch size used in all experiments is 128 and the data preprocessing module in the original codes is used as well. 
We follow the example in \cite{Garipov18} and set \texttt{--use\_test}. For optimizers, we use \texttt{SGD(momentum=0,weight\_decay=0)} and \texttt{gRDA($c$,$\mu$)} as defaults. For the two architectures we used, VGG16, as in its vanilla version, does not have batch normalization, while WRN28x10 has batch normalization. 

For both SGD and gRDA, the base learning rate $\gamma$ and epochs are the same as summarized in Table \ref{tab:cifar10}. We follow the learning rate schedule adopted by \cite{Garipov18}: 
\begin{itemize}
    \item For the first 50\% of epochs, we use the base learning rate, i.e. $\gamma_i = \gamma, \text{ if } \frac{i}{n} \in [0,0.5)$;
    \item For 50\% to 90\% of epochs, the learning rate decreases linearly from the base learning rate to 1\% of the base learning rate, i.e. $\gamma_i = (1.0 - (\frac{i}{n} - 0.5) \frac{0.99}{0.4})\gamma, \text{ if } \frac{i}{n} \in [0.5,0.9)$;
    \item For the last 10\% of epochs, we keep using the 1\% of the base learning rate as learning rate, i.e. $\gamma_i = 0.01\gamma, \text{ if } \frac{i}{n} \in [0.9,1]$.
\end{itemize}

\begin{table}[!ht]
    \centering
    \caption{Details for training models on CIFAR-10 and CIFAR-100. The minibatch size is 128. Parameters not included in this table are selected as the default values in the code of \cite{Garipov18}.}
    \label{tab:cifar10}
    \begin{tabular}{ccccccc}
\toprule Data & Model & \makecell{Base\\Learning\\ Rate} & Epochs & Results & \makecell{Used in \\Section \ref{append:conn} \\ (connectivity)} & \makecell{Used in \\Section \ref{app:proj} \\ (projection)} \\\midrule
CIFAR-10 & VGG16  & 0.1 & 600 & \makecell{Figure \ref{fig:cifar10vgg} \\ Table \ref{tab:cifar10vgg}} & Yes & Yes \\\midrule
CIFAR-10 & WRN28x10  & 0.1 & 200 & \makecell{Figure \ref{fig:cifar10wideres} \\ Table \ref{tab:cifar10wideres}} & No & Yes \\\midrule
CIFAR-100 & WRN28x10 & 0.1 & 200 & \makecell{Figure \ref{fig:cifar100wideres} \\ Table \ref{tab:cifar100wideres}} & Yes & No \\\bottomrule
    \end{tabular}
\end{table}

We train our models with ten different seeds using both SGD and gRDA, and show the training accuracy/loss, testing accuracy/loss, and sparsity along the training process in Figure \ref{fig:cifar10vgg}, \ref{fig:cifar10wideres}, and \ref{fig:cifar100wideres} (as in Figure \ref{fig:resnet50}). Table \ref{tab:cifar10vgg}, \ref{tab:cifar10wideres} and \ref{tab:cifar100wideres} provide specific numbers for selected epochs.

For Figure \ref{fig:sgdsign}, we show the result of the first seed under the two settings: VGG16-CIFAR-10 (gRDA with $c=0.0005,\mu=0.51$) and WRN28x10-CIFAR-100 (gRDA with $c=0.001,\mu=0.501$). We also select other seeds among the ten seeds, and the curve nearly overlaps with each other. Therefore we only show the result of the first seed.


\subsection{Details for Section \ref{sec:conn}} \label{append:conn}

For the analysis of the connectivity between two neural networks, we follow \cite{Garipov18} to train a quadratic B\'ezier curve interpolating two fixed endpoints $\widehat \bw_1$ and $\widehat \bw_2$, which are parameters trained by the SGD and the gRDA, respectively. 
$\widehat{w}_1$ and $\widehat{w}_2$ are trained with 600 epochs for VGG16, and 200 epochs for WRN28x10. Instead of training the entire curve, we follow \cite{Garipov18} and train random points sampled from the curve between the two endpoints, i.e., we sample $t\sim \text{Uniform}(0,1)$ and generate a model with weights being $\theta_{\bw}(t)=\widehat \bw_1(1-t)^2+\widehat \bw_2t^2+2t(1-t)\bw$ with a trainable vector $\bw$ (initialized at $(\widehat \bw_1 + \widehat \bw_2)/2$), and train $\bw$ with the loss $\ell(\theta_{\bw}(t))$ at a fixed $t$ using the SGD to get $\widehat\bw_3$. 

We use the program in \cite{Garipov18} to produce Figure \ref{fig:conn}, and the settings are summarized in Table \ref{tab:curve}. Parameters that are more technical are set by the default values in the GitHub repository of \cite{Garipov18}. 
The top panels of Figure \ref{fig:conn} illustrate the training loss contour on the hyperplane determined by the ($\widehat{\bw}_1, \widehat{\bw}_2, \widehat\bw_3$). 
The bottom panels are obtained through the models on the curve, i.e. the model $\theta_{\widehat\bw_3}(t)$ for $t\in[0,1]$. 
More results are showing in Figure \ref{app:conn}.

	

\begin{table}[!ht]
    \centering
    \caption{Details for training quadratic B\'ezier curve on models with CIFAR-10 and CIFAR-100. Here, we use the SGD with momentum in the CIFAR-10 task because the SGD without momentum results in NaN during training. Parameters not included in this table are selected as the default values in the code of \cite{Garipov18}.}
    \label{tab:curve}
    \begin{tabular}{cccccc}
\toprule Data & Model & Learning Rate & Epochs & Momentum & Weight Decay\\\midrule
CIFAR-10 & VGG16 & 0.1 & 300 & 0.9 & 0  \\\midrule
CIFAR-100 & WRN28x10 & 0.1 & 200 & 0 & 0 \\\bottomrule
    \end{tabular}
\end{table}


\subsection{Details for Section \ref{sec:proj}} \label{app:proj}

We use the code from \cite{hessian-eigenthings}\footnote{\url{https://github.com/noahgolmant/pytorch-hessian-eigenthings}} 
to calculate the eigenvalues/eigenvectors of the Hessian of a deep neural network using training data. We set \texttt{mode="lanczos"} to use the Lanczos algorithm. It uses the \texttt{scipy.sparse.linalg.eigsh hook} to the ARPACK Lanczos algorithm to find the top $k$ eigenvalues/eigenvectors using batches of data. We set \texttt{full\_dataset=True} to use all data to calculate the eigenvalues. 

Our goal is to find the top $10$ positive eigenvalues and their associated eigenvectors. We use the default argument \texttt{which="LM"} in the Lanczos algorithm, which returns the top $k$ (assigned by the argument \texttt{num\_eigenthings=k}) eigenvalues with the largest magnitude which may contain negative ones. In our experiment, $k=30$ is large enough to contain the top $10$ positive eigenvalues. Although the Lanczos algorithm supports method \texttt{"LA"} to directly return top $k$ positive eigenvalues, from our experience, the results are always significantly less than the top $10$ positive eigenvalues chosen by the above procedure. 
We also replace the default \texttt{max\_steps=20} to $1000$ since in few cases the algorithm does not converge in $20$ steps. We use the default tolerance  \texttt{tol=1e-6}.

The DNNs used here are the same with those used in Section \ref{sec:conn} with the same initializations and the same minibatches.


\begin{figure}[!h]
		\centering
    \hspace{0.2cm}
	\includegraphics[width=\textwidth, trim={50 50 50 50}, clip]{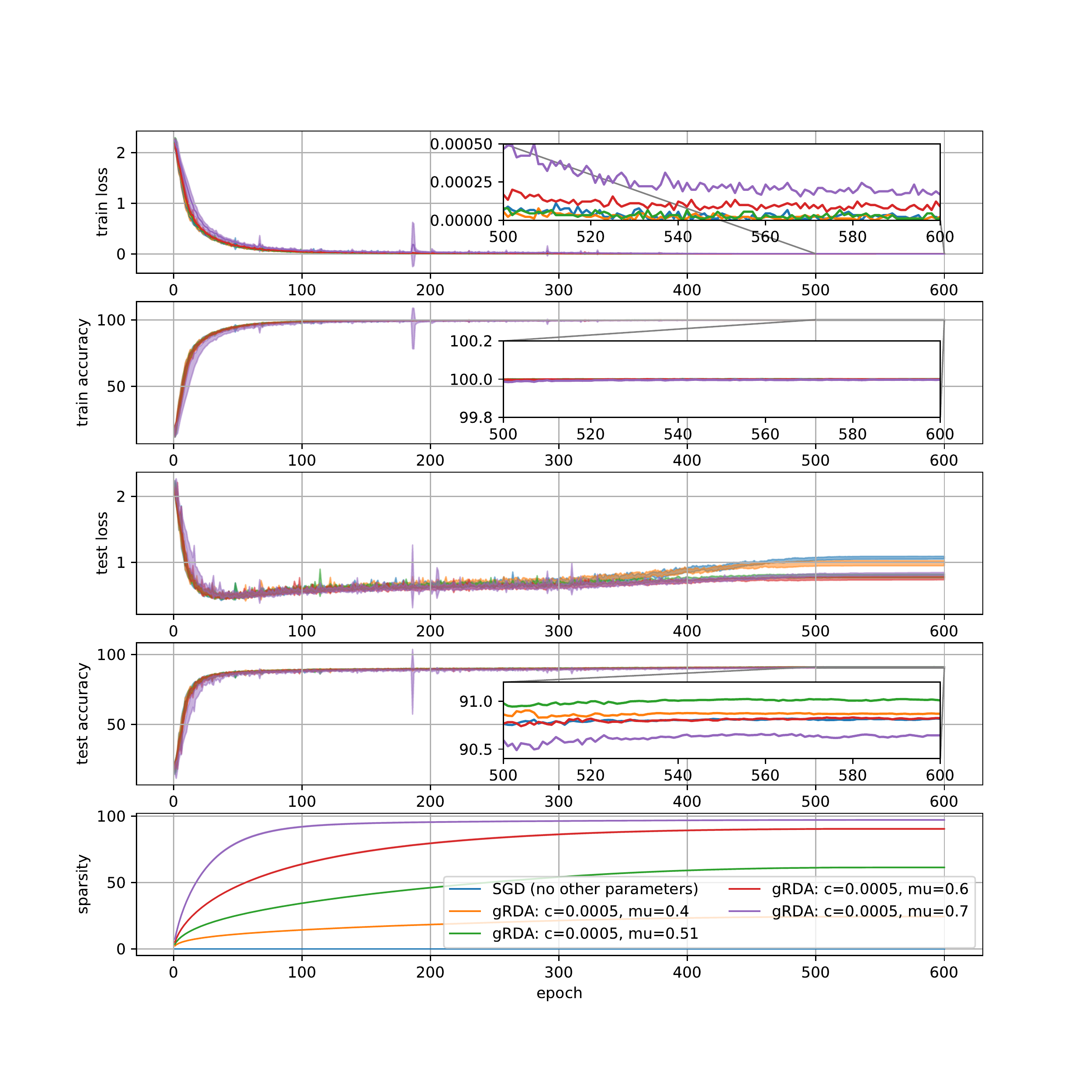} 
 \vspace{-0.4cm}
 \caption{Learning trajectories of \eqref{eq:sgd} and \eqref{eq:grda} for VGG16 on CIFAR-10. See Section \ref{append:cifar} for the selection of hyperparameters about training.}\label{fig:cifar10vgg}
\vspace{-0.3cm}
\end{figure}

\begin{figure}[!h]
		\centering
\hspace{0.2cm}
	\includegraphics[width=\textwidth, trim={50 50 50 50}, clip]{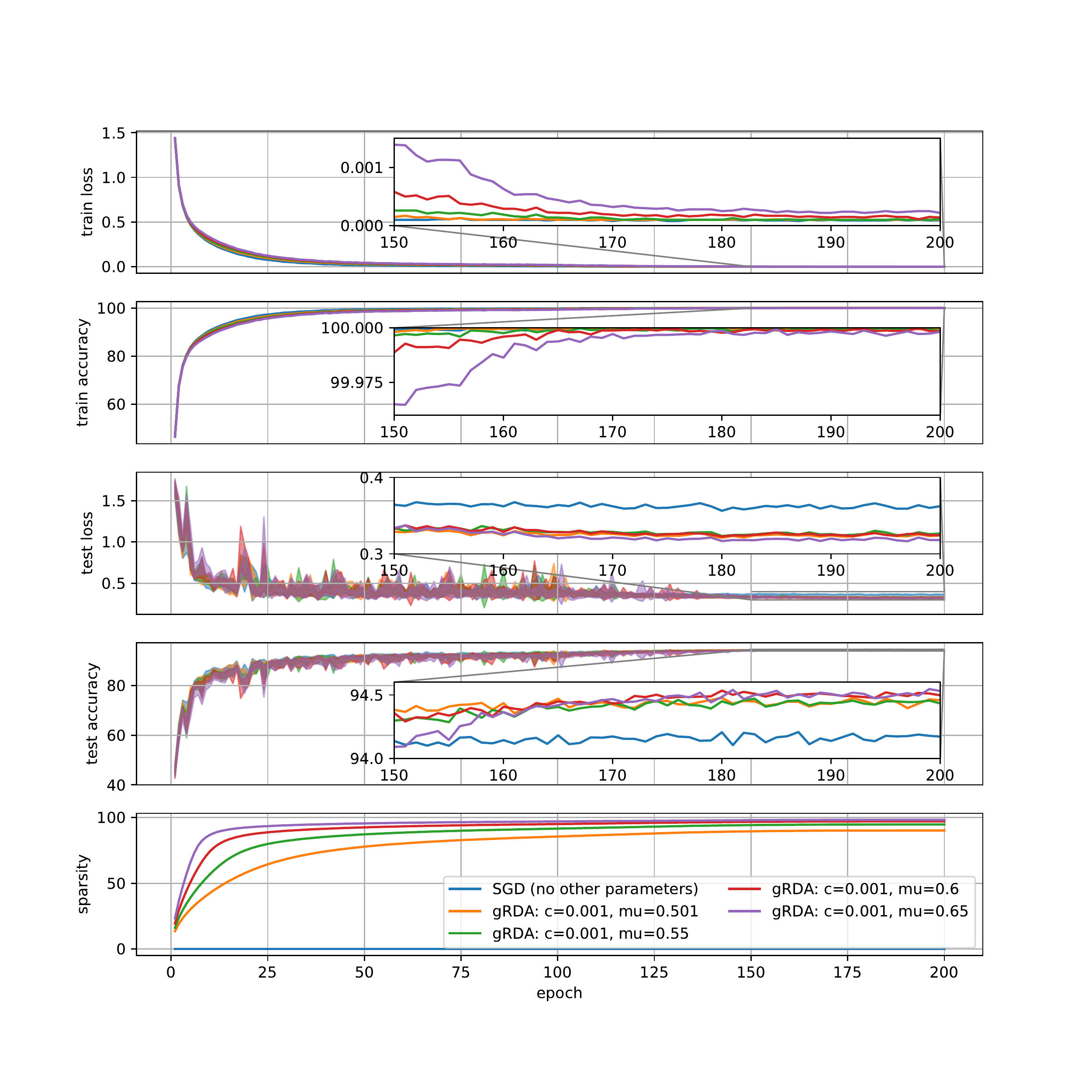}
 \vspace{-0.4cm}
 \caption{Learning trajectories of \eqref{eq:sgd} and \eqref{eq:grda} for WRN28x10 on CIFAR-10. See Section \ref{append:cifar} for the selection of hyperparameters about training.}\label{fig:cifar10wideres}
\vspace{-0.3cm}
\end{figure}

\begin{figure}[!h]
		\centering
\hspace{0.2cm}
	\includegraphics[width=\textwidth, trim={50 50 50 50}, clip]{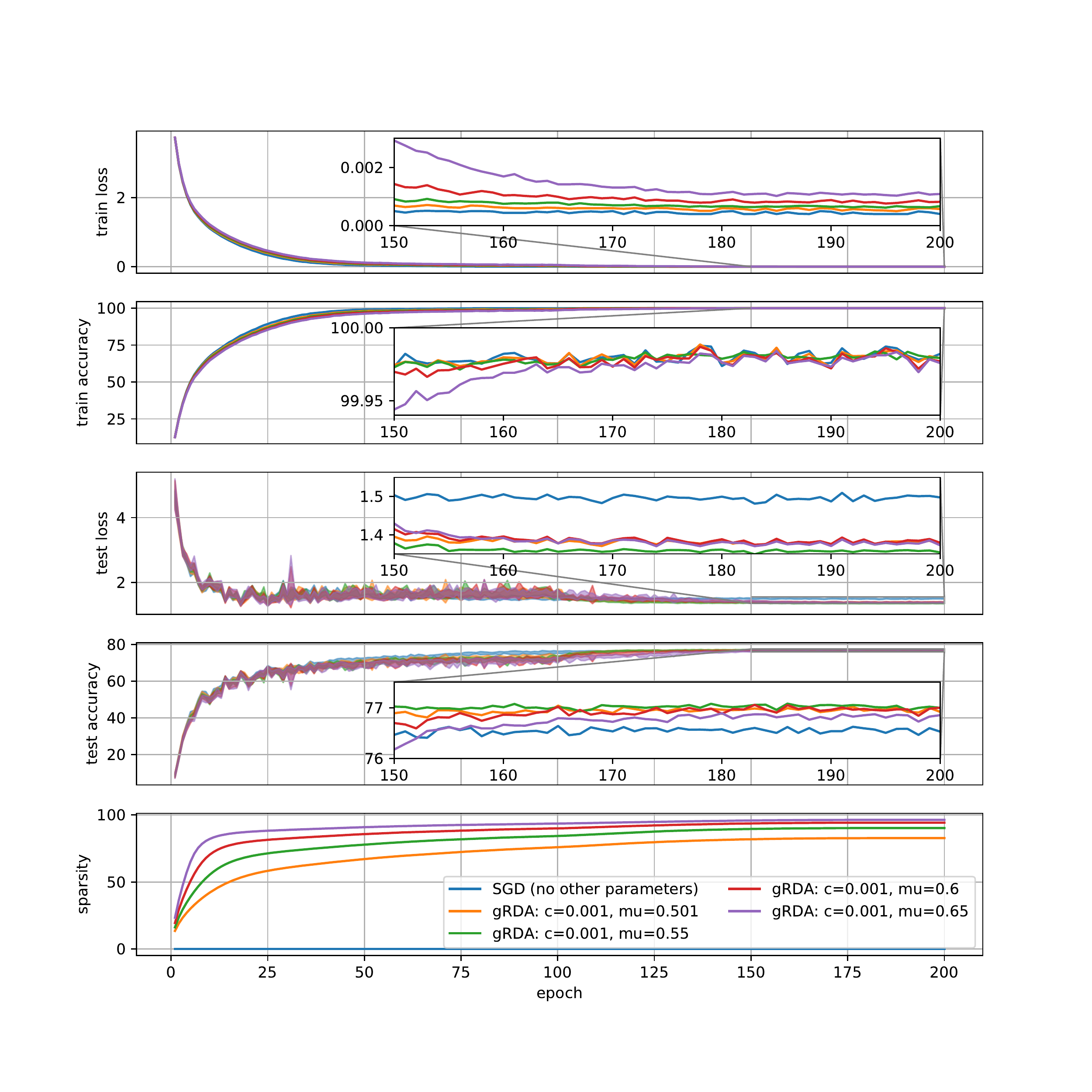}
 \vspace{-0.4cm}
 \caption{Learning trajectories of \eqref{eq:sgd} and \eqref{eq:grda} for WRN28x10 on CIFAR-100. See Section \ref{append:cifar} for the selection of hyperparameters about training.}\label{fig:cifar100wideres}
\vspace{-0.3cm}
\end{figure}

\begin{table}[!t]
\centering
\scriptsize
\caption{Details of the learning trajectories in Figure \ref{fig:cifar10vgg} at some selected epoch, which compare \eqref{eq:sgd} and \eqref{eq:grda} for VGG16 on CIFAR-10.  The means and the standard deviations (in the parenthesis) are taken on 10 independent trials initialized with independent random initializers.} \label{tab:cifar10vgg}
\begin{tabular}{lllllll}
\toprule Epoch      & 1               & 50              & 100             & 200             & 300             & 600             \\\midrule 
\multicolumn{7}{c}{Training Loss}                  \vspace{0.05in} \\
SGD        & 2.1797(0.0859)  & 0.1453(0.0091)  & 0.0404(0.0081)  & 0.0129(0.0017)  & 0.007(0.0019)   & 0.0(0.0)        \\
gRDA(0.4)  & 2.191(0.0897)   & 0.1537(0.0097)  & 0.0362(0.003)   & 0.0124(0.0047)  & 0.005(0.0014)   & 0.0(0.0)        \\
gRDA(0.51) & 2.2044(0.0939)  & 0.149(0.0185)   & 0.0351(0.0036)  & 0.011(0.0031)   & 0.0057(0.0018)  & 0.0(0.0)        \\
gRDA(0.6)  & 2.1735(0.0817)  & 0.1557(0.012)   & 0.0392(0.0054)  & 0.0169(0.0051)  & 0.0096(0.0023)  & 0.0001(0.0001)  \\
gRDA(0.7)  & 2.2394(0.0473)  & 0.2262(0.0386)  & 0.0644(0.0072)  & 0.0304(0.0094)  & 0.0223(0.0039)  & 0.0002(0.0001)  \\\midrule 
\multicolumn{7}{c}{Training Accuracy (\%)}                   \vspace{0.05in} \\
SGD        & 16.6538(3.6064) & 95.1673(0.2846) & 98.666(0.2707)  & 99.5902(0.0466) & 99.7911(0.0554) & 99.9993(0.0013) \\
gRDA(0.4)  & 15.8911(3.8974) & 94.8553(0.3168) & 98.7909(0.1281) & 99.6102(0.1431) & 99.8367(0.0429) & 99.9993(0.0009) \\
gRDA(0.51) & 15.7698(3.9934) & 95.0562(0.6049) & 98.8393(0.1085) & 99.6491(0.1029) & 99.8244(0.0484) & 99.9996(0.0008) \\
gRDA(0.6)  & 16.8571(3.6324) & 94.7998(0.415)  & 98.7262(0.1691) & 99.4624(0.1419) & 99.6913(0.0658) & 99.9991(0.001)  \\
gRDA(0.7)  & 14.7902(3.0521) & 92.6507(1.1549) & 97.8649(0.2249) & 99.0433(0.2869) & 99.3093(0.1221) & 99.9973(0.0019) \\\midrule 
\multicolumn{7}{c}{Testing Loss}                   \vspace{0.05in} \\
SGD        & 2.1212(0.0866)  & 0.4768(0.017)   & 0.5667(0.0142)  & 0.6561(0.0338)  & 0.6843(0.0392)  & 1.0713(0.0207)  \\
gRDA(0.4)  & 2.1185(0.0575)  & 0.4966(0.0234)  & 0.588(0.0314)   & 0.6729(0.0463)  & 0.7159(0.0267)  & 0.9868(0.0372)  \\
gRDA(0.51) & 2.1459(0.0956)  & 0.5002(0.0367)  & 0.5748(0.0217)  & 0.6485(0.0294)  & 0.6745(0.0288)  & 0.793(0.0268)   \\
gRDA(0.6)  & 2.0925(0.0481)  & 0.4856(0.0284)  & 0.5709(0.0209)  & 0.6005(0.0364)  & 0.6388(0.043)   & 0.7618(0.0262)  \\
gRDA(0.7)  & 2.1854(0.0459)  & 0.5055(0.0493)  & 0.567(0.0233)   & 0.6104(0.0221)  & 0.6412(0.0305)  & 0.8248(0.0215)  \\\midrule 
\multicolumn{7}{c}{Testing Accuracy (\%)}                   \vspace{0.05in} \\
SGD        & 19.1611(3.6631) & 87.3144(0.3584) & 88.73(0.2176)   & 89.4422(0.2907) & 89.8178(0.1843) & 90.8178(0.1779) \\
gRDA(0.4)  & 19.01(2.6052)   & 86.9711(0.4733) & 88.6333(0.3905) & 89.5444(0.2929) & 90.0522(0.2752) & 90.87(0.2082)   \\
gRDA(0.51) & 18.4556(4.2166) & 87.0656(0.789)  & 88.8511(0.3015) & 89.54(0.3037)   & 90.0478(0.2855) & 91.01(0.1464)   \\
gRDA(0.6)  & 20.1911(2.4566) & 87.1733(0.6538) & 88.6722(0.4554) & 89.3278(0.4227) & 89.6856(0.4379) & 90.8244(0.2414) \\
gRDA(0.7)  & 17.3589(2.4806) & 85.69(0.9893)   & 87.7489(0.4771) & 89.1044(0.3582) & 89.2267(0.4733) & 90.6433(0.224)  \\\midrule 
\multicolumn{7}{c}{Sparsity}                   \vspace{0.05in} \\
SGD        & 0.0(0.0)        & 0.0(0.0)        & 0.0(0.0)        & 0.0(0.0)        & 0.0(0.0)        & 0.0(0.0)        \\
gRDA(0.4)  & 2.4922(0.0042)  & 11.2733(0.0245) & 14.4122(0.0364) & 18.5011(0.0519) & 21.4367(0.0646) & 24.4433(0.0766) \\
gRDA(0.51) & 3.73(0.0)       & 25.4678(0.0464) & 34.5056(0.0677) & 46.2378(0.0961) & 54.2222(0.1095) & 61.4322(0.1143) \\
gRDA(0.6)  & 5.19(0.0)       & 47.5333(0.0994) & 63.9333(0.169)  & 79.6367(0.2028) & 86.3644(0.1928) & 90.49(0.1593)   \\
gRDA(0.7)  & 7.48(0.0)       & 80.3589(0.225)  & 92.0922(0.2057) & 95.5589(0.1481) & 96.39(0.109)    & 97.2167(0.0767) \\\bottomrule 
\end{tabular}

\end{table}

\begin{table}[]
\centering
\scriptsize
\caption{Details of the learning trajectories in Figure \ref{fig:cifar10wideres} at some selected epoch, which compare \eqref{eq:sgd} and \eqref{eq:grda} for WRN28x10 on CIFAR-10. The means and the standard deviations (in the parenthesis) are taken on 10 independent trials initialized with independent random initializers.} \label{tab:cifar10wideres}
\begin{tabular}{lllllll}
\toprule Epoch       & 1               & 25              & 50              & 75              & 100             & 200             \\\midrule
\multicolumn{7}{c}{Training Loss}                  \vspace{0.05in} \\
SGD         & 0.9002(0.0056)  & 0.0778(0.002)   & 0.0177(0.0023)  & 0.0078(0.0016)  & 0.0047(0.001)   & 0.0001(0.0)     \\
gRDA(0.501) & 0.9087(0.005)   & 0.0982(0.0014)  & 0.0298(0.0017)  & 0.0161(0.003)   & 0.0124(0.003)   & 0.0001(0.0)     \\
gRDA(0.55)  & 0.9112(0.0034)  & 0.1101(0.0018)  & 0.0359(0.0026)  & 0.0202(0.0026)  & 0.0143(0.0027)  & 0.0001(0.0)     \\
gRDA(0.6)   & 0.9155(0.0027)  & 0.1194(0.0015)  & 0.0378(0.0022)  & 0.0233(0.0017)  & 0.0178(0.0052)  & 0.0001(0.0)     \\
gRDA(0.65)  & 0.9162(0.0053)  & 0.1251(0.002)   & 0.0442(0.0024)  & 0.0275(0.0026)  & 0.0209(0.0027)  & 0.0002(0.0)     \\\midrule
  \multicolumn{7}{c}{Training Accuracy (\%)}            \vspace{0.05in} \\
SGD         & 67.9678(0.2445) & 97.2832(0.0693) & 99.398(0.096)   & 99.7476(0.0537) & 99.8456(0.0387) & 99.9996(0.0008) \\
gRDA(0.501) & 67.7258(0.2829) & 96.5754(0.0475) & 99.0054(0.0772) & 99.4766(0.1011) & 99.587(0.1076)  & 99.9998(0.0006) \\
gRDA(0.55)  & 67.545(0.1816)  & 96.1636(0.0878) & 98.7718(0.1102) & 99.3382(0.101)  & 99.5198(0.0993) & 99.9986(0.0018) \\
gRDA(0.6)   & 67.4044(0.1635) & 95.8476(0.0772) & 98.702(0.0927)  & 99.2146(0.0636) & 99.3944(0.1851) & 99.9984(0.0012) \\
gRDA(0.65)  & 67.42(0.3047)   & 95.6366(0.0744) & 98.4866(0.0999) & 99.0694(0.0985) & 99.299(0.1049)  & 99.9982(0.0017) \\\midrule
\multicolumn{7}{c}{Testing Loss}                         \vspace{0.05in} \\
SGD         & 1.2606(0.1645)  & 0.4845(0.0837)  & 0.3849(0.0278)  & 0.4238(0.0942)  & 0.4151(0.0275)  & 0.3624(0.0074)  \\
gRDA(0.501) & 1.2986(0.2294)  & 0.4099(0.0584)  & 0.4008(0.0631)  & 0.4547(0.0998)  & 0.3918(0.0521)  & 0.3241(0.0099)  \\
gRDA(0.55)  & 1.2489(0.1411)  & 0.5151(0.193)   & 0.3854(0.048)   & 0.4738(0.1341)  & 0.3899(0.0455)  & 0.3266(0.0069)  \\
gRDA(0.6)   & 1.345(0.1042)   & 0.4063(0.0352)  & 0.3941(0.0632)  & 0.4135(0.095)   & 0.4414(0.1226)  & 0.3245(0.0075)  \\
gRDA(0.65)  & 1.3211(0.216)   & 0.4487(0.0556)  & 0.346(0.0343)   & 0.3805(0.0521)  & 0.3873(0.0603)  & 0.3182(0.0048)  \\\midrule
  \multicolumn{7}{c}{Testing Accuracy (\%)}            \vspace{0.05in} \\
SGD         & 61.478(2.674)   & 88.06(1.4363)   & 91.978(0.3655)  & 92.363(1.0626)  & 92.968(0.3802)  & 94.173(0.1127)  \\
gRDA(0.501) & 61.021(4.0857)  & 88.858(1.0066)  & 91.161(0.9286)  & 91.109(1.4645)  & 92.517(0.7402)  & 94.459(0.12)    \\
gRDA(0.55)  & 61.565(2.8849)  & 86.546(2.9551)  & 91.219(0.7465)  & 90.717(1.6355)  & 92.4(0.5953)    & 94.433(0.0805)  \\
gRDA(0.6)   & 59.559(1.8942)  & 88.481(0.9584)  & 90.922(1.1019)  & 91.486(1.2848)  & 91.554(1.6144)  & 94.497(0.166)   \\
gRDA(0.65)  & 59.934(3.7627)  & 87.733(0.7991)  & 91.637(0.5235)  & 91.683(0.9814)  & 91.996(0.8251)  & 94.531(0.1119)  \\\midrule
  \multicolumn{7}{c}{Sparsity (\%)}       \vspace{0.05in} \\
SGD         & 0.0(0.0)        & 0.0(0.0)        & 0.0(0.0)        & 0.0(0.0)        & 0.0(0.0)        & 0.0(0.0)        \\
gRDA(0.501) & 19.1382(0.003)  & 64.4465(0.0217) & 77.8759(0.0513) & 82.8773(0.0502) & 85.5712(0.0425) & 90.1746(0.0301) \\
gRDA(0.55)  & 23.6897(0.0022) & 79.9356(0.0209) & 87.2516(0.0189) & 90.0057(0.0223) & 91.5633(0.0357) & 94.7444(0.0367) \\
gRDA(0.6)   & 29.4495(0.0013) & 88.8105(0.0315) & 92.5813(0.019)  & 94.155(0.0219)  & 95.0531(0.0178) & 97.0747(0.0179) \\
gRDA(0.65)  & 36.6118(0.0024) & 93.5285(0.0143) & 95.6032(0.0127) & 96.5137(0.0111) & 97.0338(0.0138) & 98.2877(0.0125) \\\bottomrule
\end{tabular}
\end{table}

\begin{table}[]
\centering
\scriptsize
\caption{Details of the learning trajectories in Figure \ref{fig:cifar100wideres} at some selected epoch, which compare \eqref{eq:sgd} and \eqref{eq:grda} for WRN28x10 on CIFAR-100. The means and the standard deviations (in the parenthesis) are taken on 10 independent trials initialized with independent random initializers.} \label{tab:cifar100wideres}
\begin{tabular}{lllllll}
\toprule Epoch         & 1               & 25              & 50              & 75              & 100             & 200             \\\midrule
\multicolumn{7}{c}{Training Loss}                  \vspace{0.05in} \\
SGD            & 2.9477(0.0037)    & 0.3347(0.0038)  & 0.0348(0.0044)  & 0.0076(0.0023)  & 0.0023(0.0028)  & 0.0004(0.0)     \\
gRDA(0.501)    & 2.9632(0.0052)    & 0.3921(0.0031)  & 0.0652(0.0047)  & 0.033(0.0073)   & 0.0233(0.0063)  & 0.0006(0.0)     \\
gRDA(0.55)     & 2.9752(0.0052)    & 0.4167(0.0025)  & 0.0758(0.005)   & 0.0427(0.0055)  & 0.0317(0.0077)  & 0.0007(0.0)     \\
gRDA(0.6)      & 2.9798(0.0038)    & 0.4407(0.0042)  & 0.0966(0.0041)  & 0.0542(0.0039)  & 0.0394(0.0075)  & 0.0008(0.0)     \\
gRDA(0.65)     & 2.9885(0.0062)    & 0.4633(0.0035)  & 0.1188(0.0046)  & 0.0692(0.0053)  & 0.0495(0.0038)  & 0.0011(0.0)     \\\midrule
  \multicolumn{7}{c}{Training Accuracy (\%)}            \vspace{0.05in} \\
SGD           & 25.7816(0.1029) & 89.2406(0.1343) & 99.071(0.1465)  & 99.825(0.0637)  & 99.9418(0.0723) & 99.9822(0.0038) \\
gRDA(0.501)   & 25.4118(0.0721) & 87.4652(0.141)  & 98.069(0.1749)  & 99.0446(0.256)  & 99.327(0.2105)  & 99.9792(0.0022) \\
gRDA(0.55)    & 25.188(0.1628)  & 86.6598(0.0732) & 97.7116(0.1959) & 98.7498(0.2099) & 99.0674(0.2492) & 99.9794(0.0035) \\
gRDA(0.6)     & 25.092(0.0661)  & 86.0122(0.1322) & 97.0026(0.1456) & 98.3896(0.1561) & 98.8524(0.253)  & 99.977(0.0029)  \\
gRDA(0.65)    & 24.9426(0.128)  & 85.3238(0.1064) & 96.2586(0.1602) & 97.9104(0.1846) & 98.533(0.1325)  & 99.9758(0.0014) \\\midrule
\multicolumn{7}{c}{Testing Loss}                       \vspace{0.05in} \\
SGD            & 3.6524(0.1035)    & 1.423(0.1287)   & 1.5834(0.0411)  & 1.5164(0.0427)  & 1.5109(0.0621)  & 1.4977(0.0164)  \\
gRDA(0.501)    & 3.6615(0.1535)    & 1.3842(0.0697)  & 1.7227(0.1908)  & 1.646(0.2176)   & 1.6685(0.1295)  & 1.3766(0.016)   \\
gRDA(0.55)     & 3.8418(0.1471)    & 1.3317(0.0558)  & 1.7881(0.1633)  & 1.6067(0.1051)  & 1.6622(0.1452)  & 1.3537(0.0204)  \\
gRDA(0.6)      & 3.8368(0.1623)    & 1.332(0.0551)   & 1.6861(0.1355)  & 1.6166(0.0618)  & 1.6807(0.1061)  & 1.3789(0.0181)  \\
gRDA(0.65)     & 3.7701(0.1759)    & 1.3689(0.1135)  & 1.6668(0.1081)  & 1.5665(0.1276)  & 1.6499(0.1364)  & 1.3722(0.0156)  \\\midrule
  \multicolumn{7}{c}{Testing Accuracy (\%)}           \vspace{0.05in} \\
SGD           & 19.408(1.1196)  & 66.605(1.8987)  & 72.018(0.6428)  & 74.775(0.4029)  & 75.807(0.638)   & 76.529(0.169)   \\
gRDA(0.501)   & 18.798(1.2235)  & 66.621(1.0782)  & 69.549(1.7699)  & 71.9(1.7964)    & 72.47(1.2624)   & 76.916(0.1894)  \\
gRDA(0.55)    & 17.644(0.9865)  & 67.039(0.8587)  & 68.437(1.6978)  & 72.095(1.3079)  & 71.792(1.8297)  & 76.996(0.1713)  \\
gRDA(0.6)     & 17.761(0.9144)  & 67.107(1.0409)  & 69.13(1.6105)   & 71.334(0.6645)  & 71.328(0.9034)  & 76.999(0.3635)  \\
gRDA(0.65)    & 17.653(1.0898)  & 66.13(2.0093)   & 68.698(1.1661)  & 71.416(1.5185)  & 71.3(1.5667)    & 76.853(0.1996)  \\\midrule
  \multicolumn{7}{c}{Sparsity (\%)}       \vspace{0.05in} \\
SGD           & 0.0(0.0)        & 0.0(0.0)        & 0.0(0.0)        & 0.0(0.0)        & 0.0(0.0)        & 0.0(0.0)        \\
gRDA(0.501)   & 19.1087(0.0031) & 58.3232(0.0284) & 67.0737(0.0783) & 72.4028(0.0876) & 75.8975(0.1454) & 82.6918(0.0986) \\
gRDA(0.55)    & 23.6534(0.0026) & 71.3479(0.0412) & 77.8894(0.0343) & 81.8828(0.0499) & 84.2722(0.0676) & 90.2043(0.0508) \\
gRDA(0.6)     & 29.4058(0.0024) & 81.4373(0.0293) & 85.6999(0.0211) & 88.3771(0.0221) & 89.9111(0.0318) & 94.2045(0.0649) \\
gRDA(0.65)    & 36.5536(0.002)  & 88.195(0.0166)  & 90.845(0.0191)  & 92.5057(0.018)  & 93.4805(0.0184) & 96.3042(0.0219)\\\bottomrule
\end{tabular}
\end{table}

\subsection{Wall time and GPU memory consumption}\label{ap:resource_comparison}

In this section, we compare the wall time and the memory consumption between the gRDA and the SGD. All results in this section are done using the same server containing two Nvidia Tesla V100 (16GB memory) GPUs. We use two cards in training ResNet50-ImageNet, one card in training VGG16-CIFAR10 and WRN28x10-CIFAR100. Experiments are done serially. The training details are the same as described in Section \ref{ap:imagenet_details} and \ref{append:cifar}. For the hyperparameters of the gRDA, we take $c=0.001, \mu=0.6$ for ResNet50-ImageNet, $c=0.001, \mu=0.4$ for VGG16-CIFAR10, and $c=0.001, \mu=0.501$ for WRN28x10-CIFAR100. The choice of $c$ and $\mu$ in gRDA does not affect the time usage and memory footprint.

For the wall time, in the case of ResNet50-ImageNet, we record the calculation time for the first 200 iterations per 10 iteration. We omit the first iteration since it is much larger than the others due to model initiation on GPU. We calculate the average and the standard deviation using the remaining sample of size 19. In the cases of VGG16-CIFAR10 and WRN28x10-CIFAR-100, we record the calculation time for the first 20 epochs (390 iterations per epoch) and omit the very first epoch. We calculate the mean and the standard deviation of the 19 trials.

For the memory consumption, we focus on the peak GPU memory usage, i.e. the maximum memory usage during training, since it determines whether the task is trainable on the given platform. In the case of ResNet50-ImageNet, we record the usage of the first 200 iterations among 5 different training tries. We show the memory usage for two GPU cards separately because the design of PyTorch leads to a higher memory usage in the card0. In the cases of VGG16-CIFAR10 and WRN28x10-CIFAR-100, we record the peak GPU memory usage throughout the first 20 epochs. We calculate the mean and the standard deviation of the 5 tries.

From Table \ref{tab:resource_consumption}, the gRDA generally requires a higher wall time than the SGD, because gRDA requires an additional step for the soft thresholding. For the memory consumption, one can observe that the difference between the gRDA and the SGD depends on the tasks and architectures, although it is generally small. In particular for the case of ResNet50-ImageNet, the difference in means of the SGD and the gRDA is not significant since it is less than their respective standard deviations. In fact, we find that the GPU memory consumption is unstable in these 5 tries, and sometimes the gRDA uses slightly less GPU memory than the SGD. The reason of the difference could be due to the underlying design of PyTorch, which may be interesting for future research.  

\begin{table}[]
\centering
\footnotesize
\caption{Comparison of SGD and gRDA on time and GPU memory consumption. The values in the upper penal of the table are the average time consumption of 19 records excluding the initial iterations. The values in the lower penal are the average peak GPU memory consumption of 5 different tries. The numbers in parenthesis are the standard deviation.}
\label{tab:resource_consumption}
\begin{tabular}{llll}
\toprule 
  \multicolumn{4}{c}{Time per iteration (s)}      \\\midrule
             & ResNet50-ImageNet & VGG16-CIFAR10 & WRN28x10-CIFAR100 \\\midrule
SGD          & 0.3964 (0.0183)            & 0.0214 (0.0002)         & 0.2271 (0.0008)            \\
gRDA         & 0.4582 (0.0166)              & 0.0303 (0.0004)        & 0.2510 (0.0011)            \\\toprule
  \multicolumn{4}{c}{GPU Memory (MiB)}       \\\midrule
             & ResNet50-ImageNet (card0,1)   & VGG16-CIFAR10 (card0) & WRN28x10-CIFAR100 (card0) \\\midrule
SGD          & 14221 (376), 14106 (380)      & 1756 (48.6)           & 10301 (0)                     \\
gRDA         & 14159 (167), 13947 (208)      & 1809 (10.2)           & 10589 (0)                \\\bottomrule 
\end{tabular}
\end{table}

\begin{figure}[!h]
	\centering
	\begin{subfigure}[t]{0.48\textwidth}
		\centering
		\includegraphics[width=0.9\textwidth]{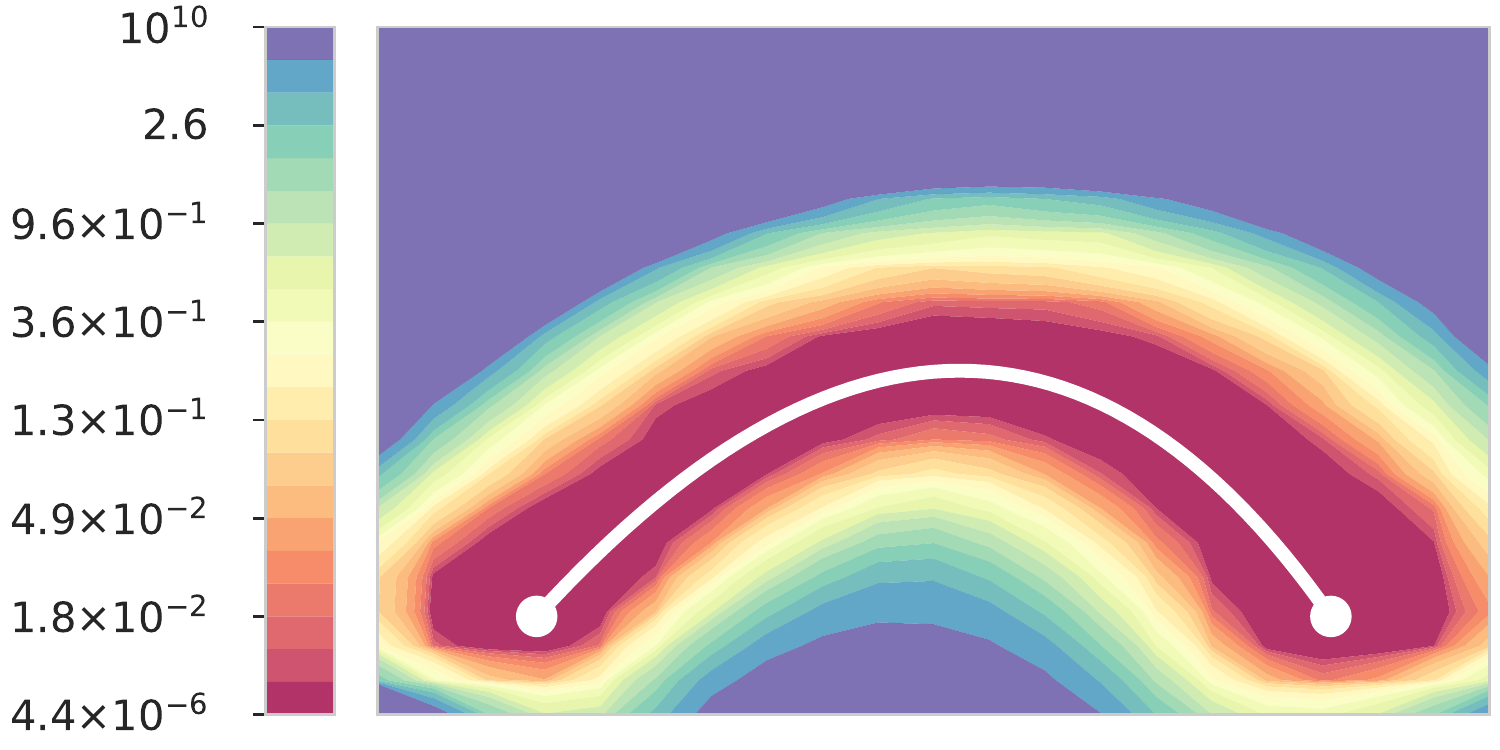}
		\caption{VGG16/CIFAR-10/Train loss/$\mu=0.4$}\label{fig:7a}		
	\end{subfigure}
	\quad
	\begin{subfigure}[t]{0.48\textwidth}
		\centering
		\includegraphics[width=0.9\textwidth]{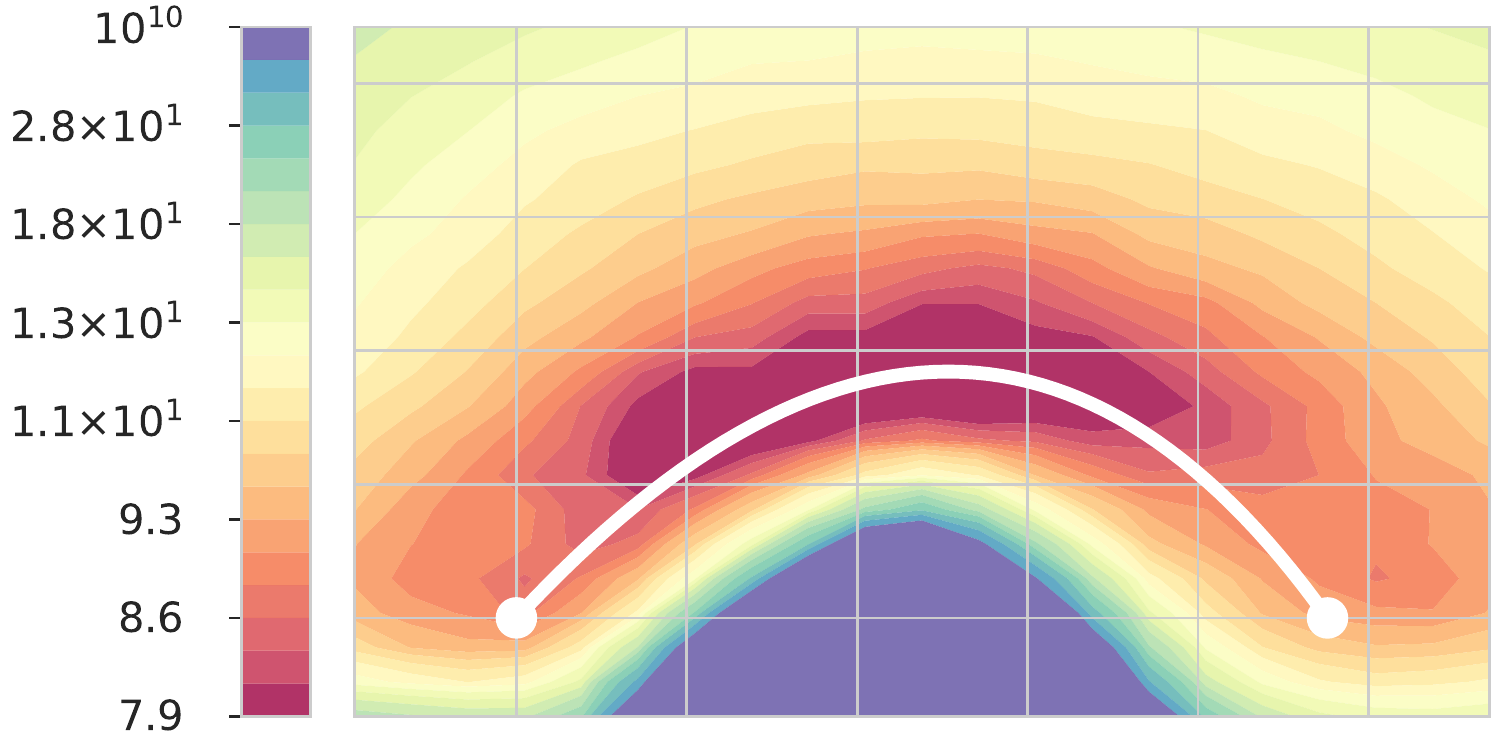}
		\caption{VGG16/CIFAR-10/Test error/$\mu=0.4$}\label{fig:7b}
	\end{subfigure}\\
	\begin{subfigure}[t]{0.48\textwidth}
		\centering
		\includegraphics[width=0.9\textwidth]{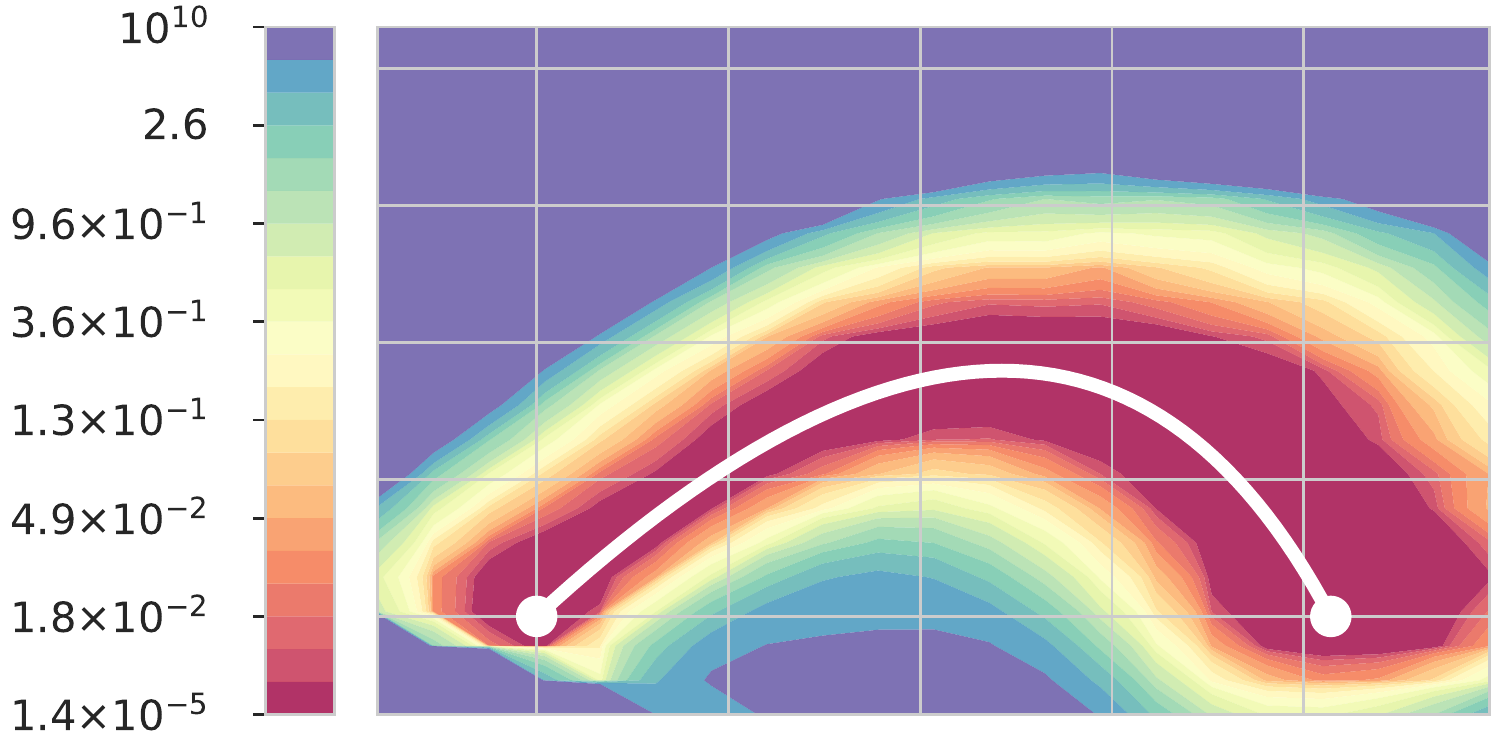}
		\caption{VGG16/CIFAR-10/Train loss/$\mu=0.51$}\label{fig:7c}		
	\end{subfigure}
	\quad
	\begin{subfigure}[t]{0.48\textwidth}
		\centering
		\includegraphics[width=0.9\textwidth]{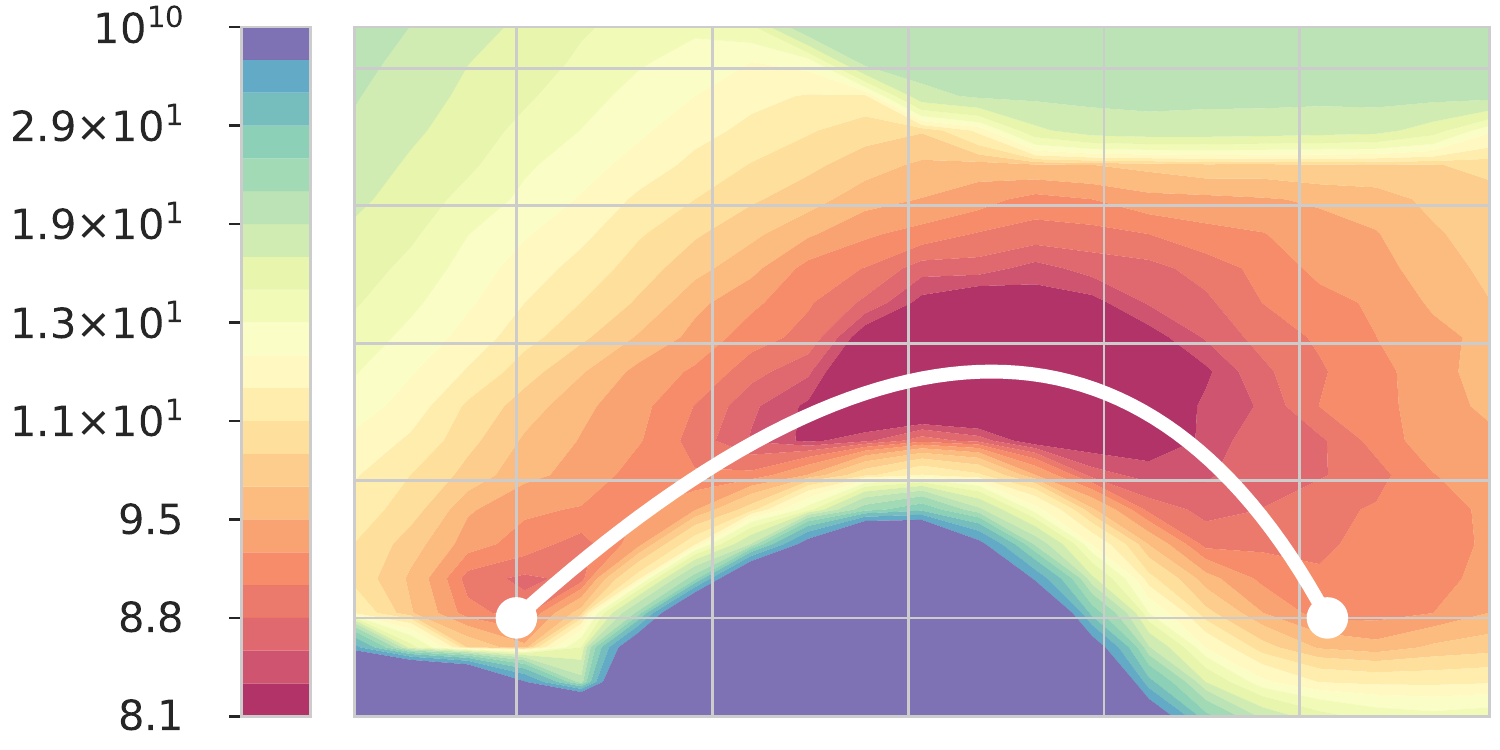}
		\caption{VGG16/CIFAR-10/Test error/$\mu=0.51$}\label{fig:7d}
	\end{subfigure}\\
	\begin{subfigure}[t]{0.48\textwidth}
		\centering
		\includegraphics[width=0.9\textwidth]{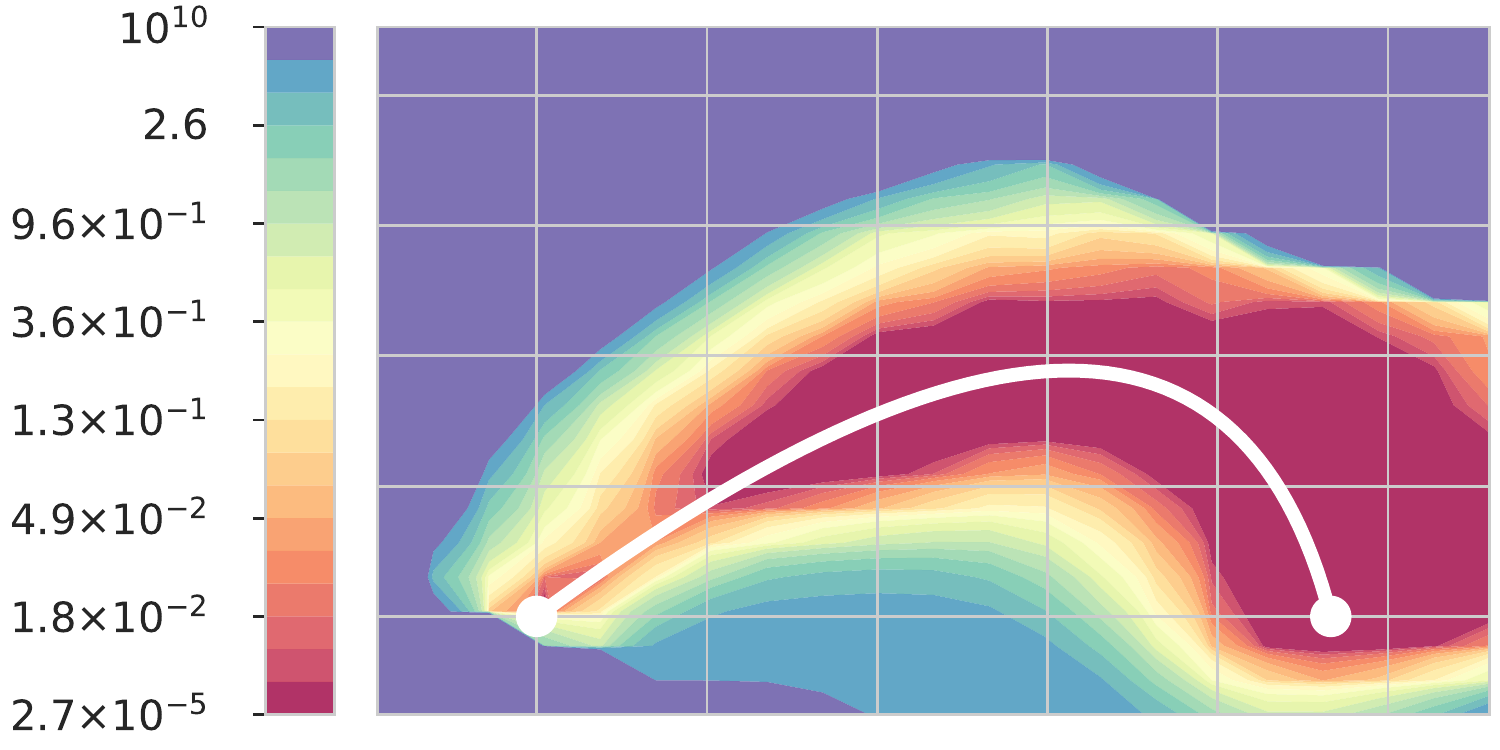}
		\caption{VGG16/CIFAR-10/Train loss/$\mu=0.7$}\label{fig:7e}		
	\end{subfigure}
	\quad
	\begin{subfigure}[t]{0.48\textwidth}
		\centering
		\includegraphics[width=0.9\textwidth]{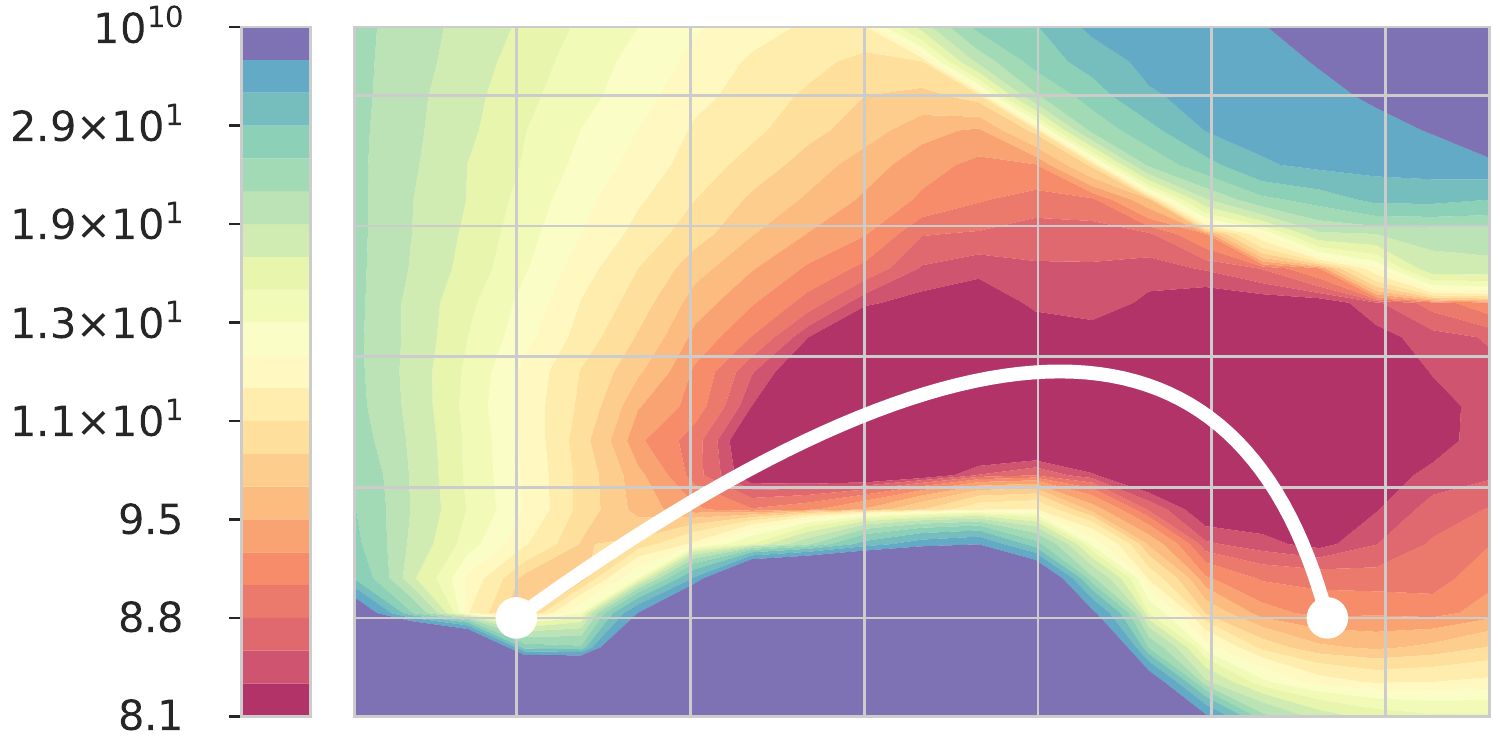}
		\caption{VGG16/CIFAR-10/Test error/$\mu=0.7$}\label{fig:7f}
	\end{subfigure}\\
	\begin{subfigure}[t]{0.48\textwidth}
		\centering
		\includegraphics[width=0.9\textwidth]{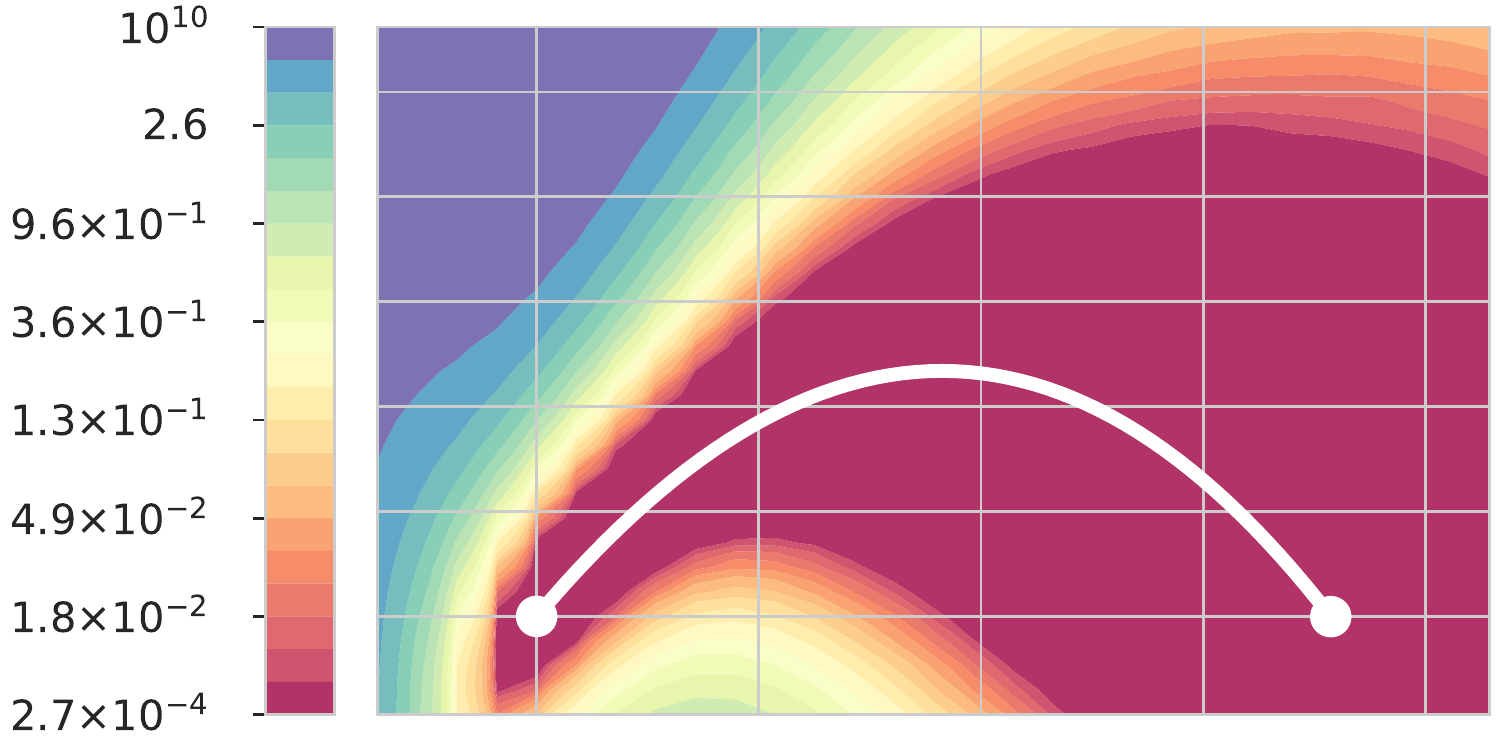}
		\caption{WRN28x10/CIFAR-100/Train loss/$\mu=0.55$}\label{fig:7a1}		
	\end{subfigure}
	\quad
	\begin{subfigure}[t]{0.48\textwidth}
		\centering
		\includegraphics[width=0.9\textwidth]{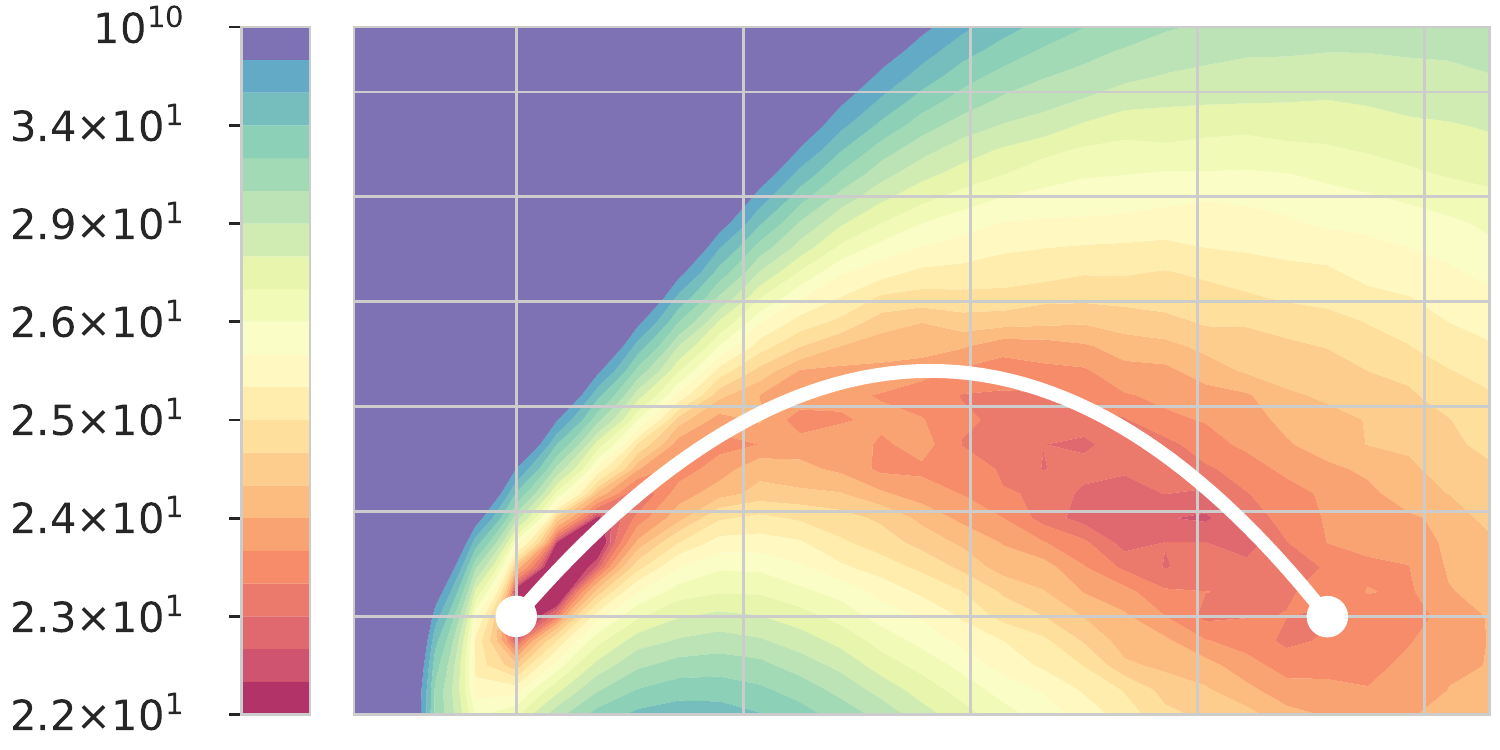}
		\caption{WRN28x10/CIFAR-100/Test error/$\mu=0.55$}\label{fig:7b1}
	\end{subfigure}\\
	\begin{subfigure}[t]{0.48\textwidth}
		\centering
		\includegraphics[width=0.9\textwidth]{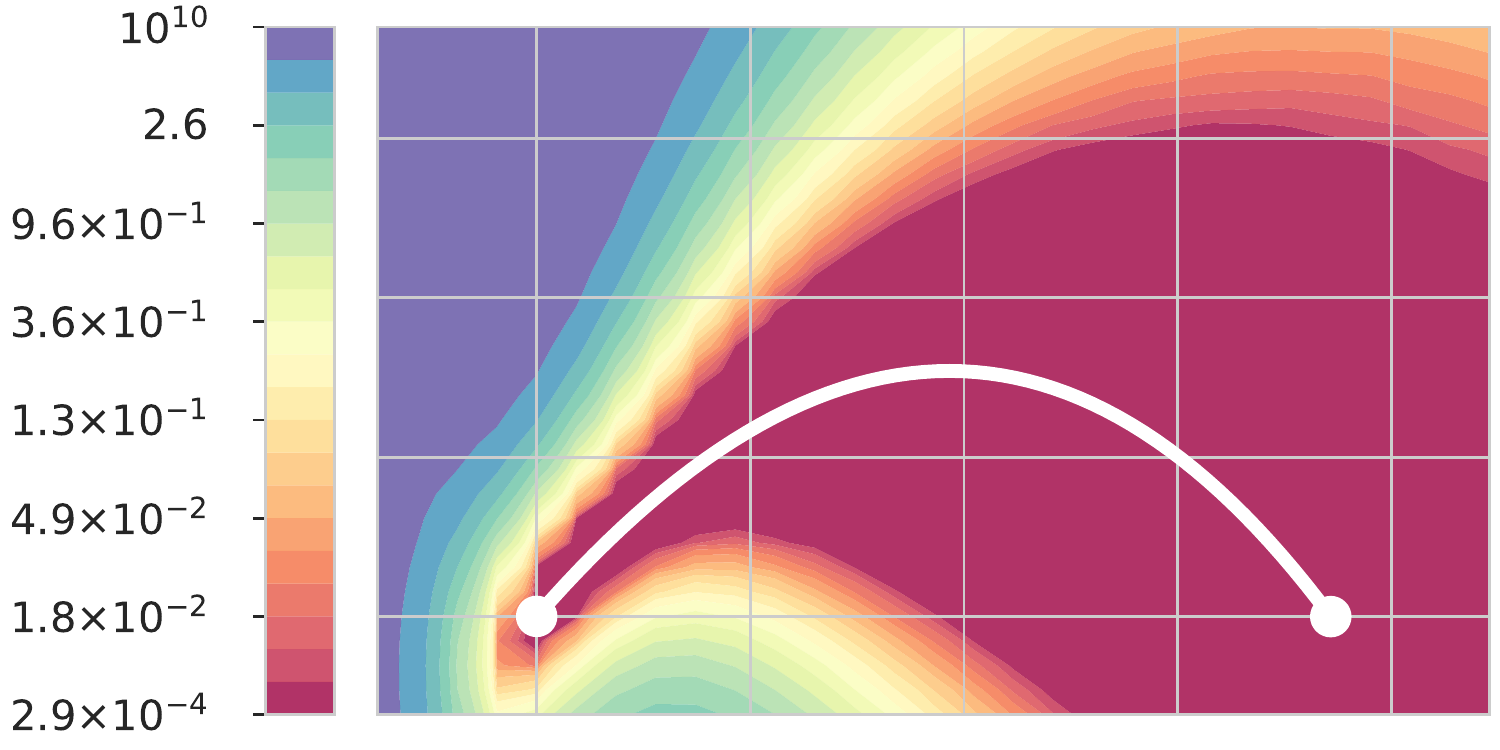}
		\caption{WRN28x10/CIFAR-100/Train loss/$\mu=0.6$}\label{fig:7c1}		
	\end{subfigure}
	\quad
	\begin{subfigure}[t]{0.48\textwidth}
		\centering
		\includegraphics[width=0.9\textwidth]{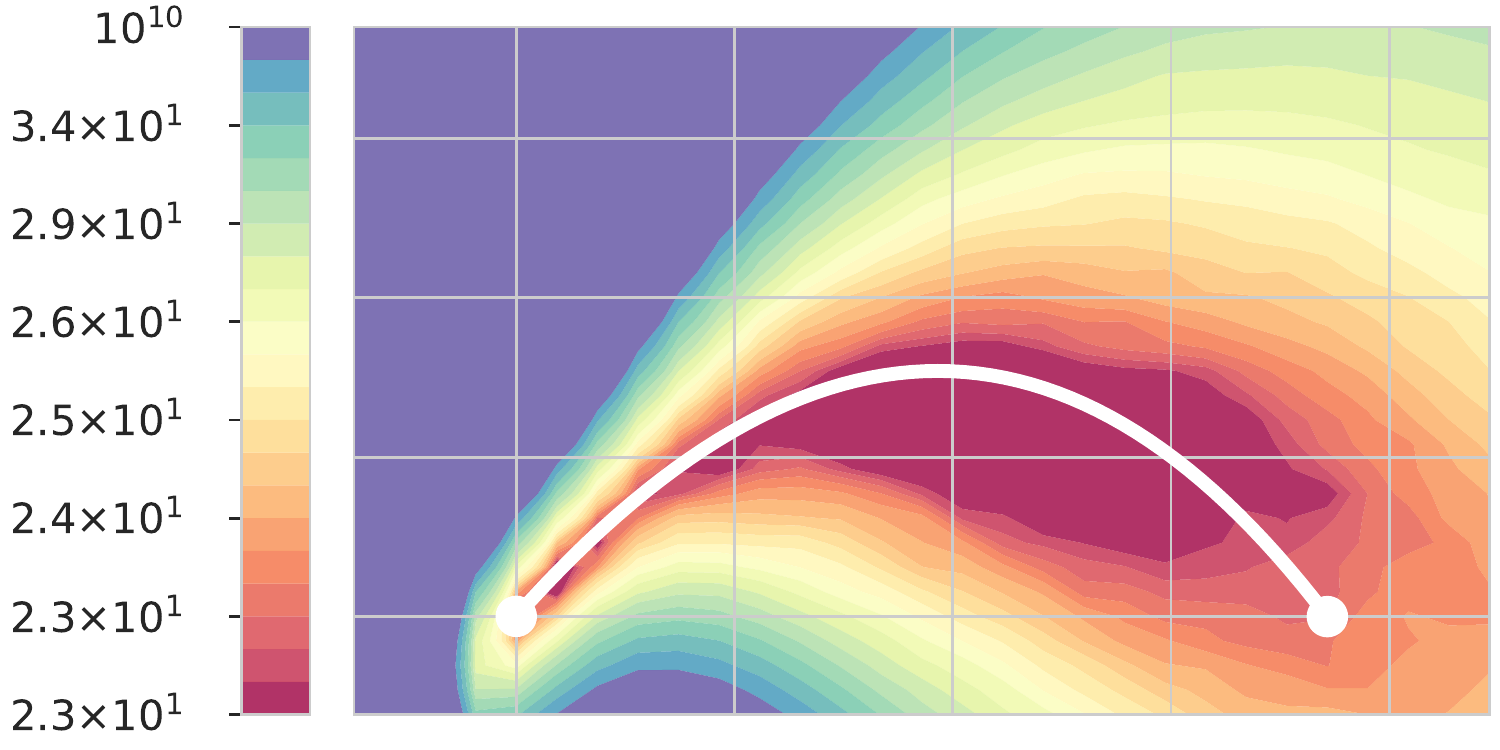}
		\caption{WRN28x10/CIFAR-100/Test error/$\mu=0.6$}\label{fig:7d1}
	\end{subfigure}\\
	\begin{subfigure}[t]{0.48\textwidth}
		\centering
		\includegraphics[width=0.9\textwidth]{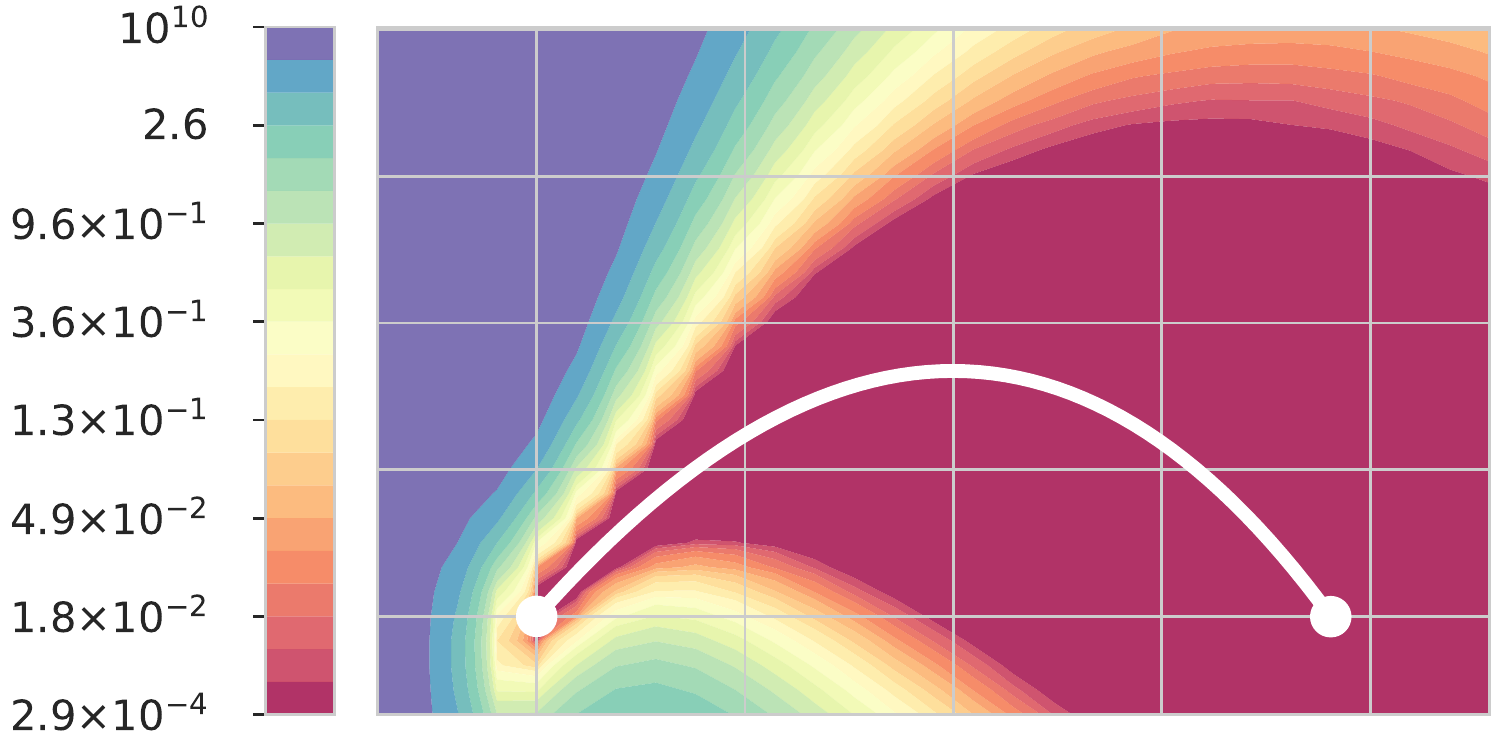}
		\caption{WRN28x10/CIFAR-100/Train loss/$\mu=0.65$}\label{fig:7e1}		
	\end{subfigure}
	\quad
	\begin{subfigure}[t]{0.48\textwidth}
		\centering
		\includegraphics[width=0.9\textwidth]{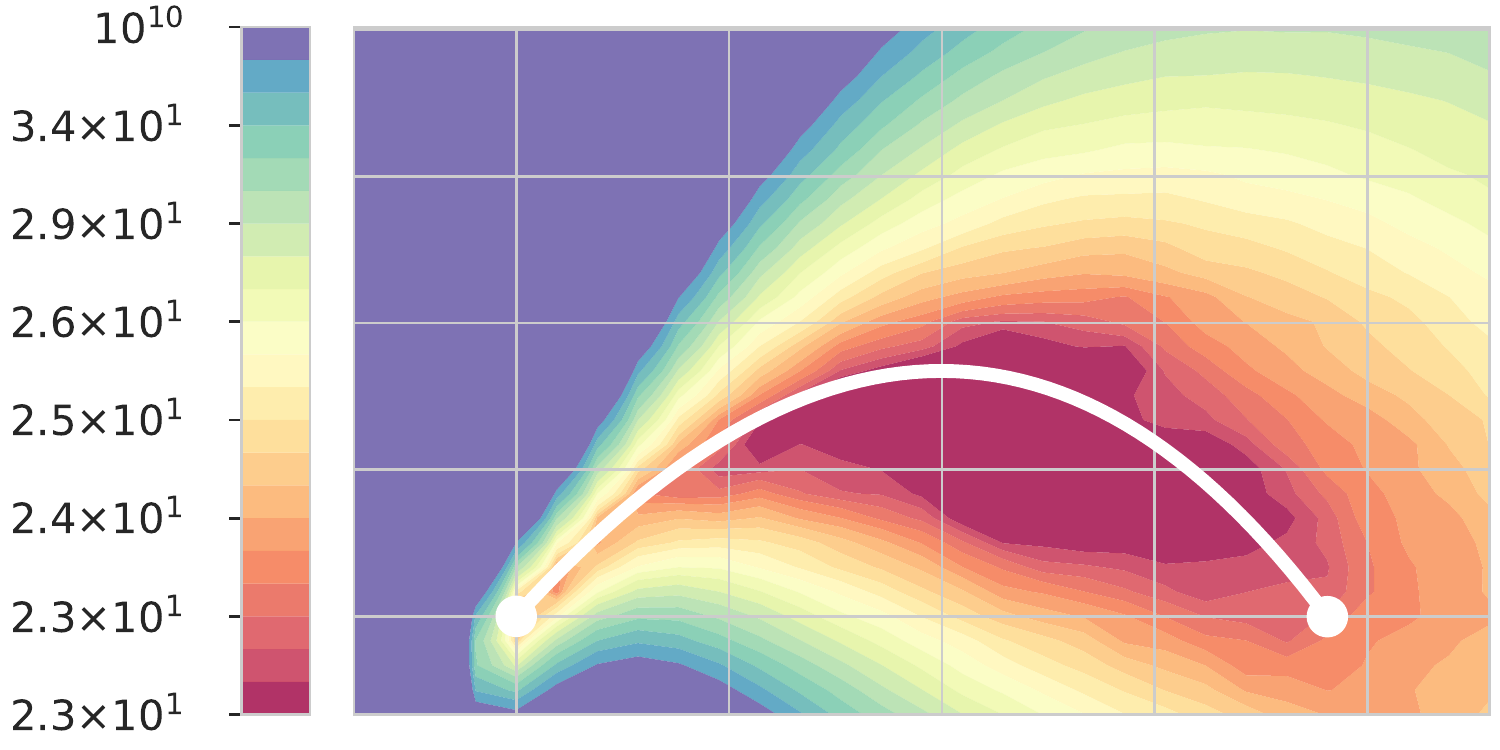}
		\caption{WRN28x10/CIFAR-100/Test error/$\mu=0.65$}\label{fig:7f1}
	\end{subfigure}
 \caption{The contour of training loss and testing error around the minimal loss/error curves. (Figure \ref{fig:conn}) The right end point is the SGD, and the left end point is the gRDA. 
 }\label{app:conn}
\end{figure}

\end{document}